\author[Michele Caprio]{Michele Caprio}
\address{The University of Manchester, Oxford Road, Manchester, UK M13 9PL}
\email{michele.caprio@manchester.ac.uk}
\keywords{Category Theory; Conformal Prediction; Set-Valued Functions}
\subjclass[2020]{Primary: 18D99; Secondary: 62G07; 28B20}
\title{A Category-Theoretic Analysis of  Conformal Prediction} 
\newcommand{\vertiii}[1]{{\left\vert\kern-0.25ex\left\vert\kern-0.25ex\left\vert #1 
    \right\vert\kern-0.25ex\right\vert\kern-0.25ex\right\vert}}
   \def\MR#1{}
\def\algbackskip{\hskip-\ALG@thistlm}
\def\namedlabel#1#2{\begingroup
    #2%
    \def\@currentlabel{#2}%
    \phantomsection\label{#1}\endgroup
}
\theoremstyle{definition} 
\let\olddefi\defi
\renewcommand{\defi}{\olddefi\normalfont}
\let\oldrmk\rmk
\renewcommand{\rmk}{\oldrmk\normalfont}
\newenvironment{hproof}{%
  \proof}{\endproof}
\newtheorem{theorem}{Theorem}
\newtheorem{lemma}[theorem]{Lemma}
\newtheorem{proposition}[theorem]{Proposition}
\newtheorem{corollary}{Corollary}[theorem]
\newtheorem{definition}[theorem]{Definition}
\newtheorem{remark}{Remark}
\newtheorem{example}{Example}
\providecommand{\MR}[1]{}
\providecommand{\MR}{\relax\ifhmode\unskip\space\fi MR }
\newcommand{\kappacorr}{\kappa}
\newcommand{\CRED}{\mathsf{CRED}}
\newcommand{\IHDR}{\mathsf{IHDR}}
\newcommand{\Y}{Y}
\newcommand{\F}{\mathscr{F}}
\newcommand{\Ccal}{\mathscr{C}}
\newcommand{\Prob}{\Delta_{\Y}}
\newcommand{\R}{\mathbb{R}}
\newcommand{\skipitems}[1]{%
  \addtocounter{\@enumctr}{#1}%
}
\begin{document}

 \begin{abstract}
Conformal prediction (CP) produces prediction regions with finite-sample, distribution free coverage guarantees, but its interpretation as a quantitative uncertainty tool is often left implicit. We develop a category-theoretic approach that makes this structure explicit. We show that Full Conformal Prediction can be represented as a morphism in two categories capturing (i) stability of set-valued procedures and (ii) measurability of random regions. Under mild conditions, we prove a commuting diagram result that decomposes the construction of a conformal region into two steps: Extracting a set of predictive distributions from the data, and then deriving a prediction region from this set. This decomposition provides a principled route to numerical uncertainty summaries beyond region size. We further prove an asymptotic compatibility result showing that, for Bayesian predictive scores in regular regimes, conformal regions converge to Bayesian predictive density level sets; We also provide quantitative rates under local empirical process and boundary regularity assumptions. This highlights a bridge between Bayesian, frequentist, and imprecise probabilistic prediction. We additionally identify conditions under which upper posterior constructions are related to e-posteriors, clarifying when e-value-based and conformal-imprecise representations can coincide. Finally, we show that the region extractor is functorial; This yields a modular privacy-compatible perspective in which privacy-preserving outer approximations of shared summary objects lead to
conservative global prediction regions.
\end{abstract}
\maketitle
\thispagestyle{empty}

\section{Introduction}\label{sec:intro}
Category Theory (CT) can be thought of as a general theory of mathematical structures and their relations. Recently, there has been growing interest in studying Statistics and Machine Learning (ML) from a CT perspective, both to leverage its unifying power across apparently unrelated subfields and to obtain deeper theoretical insights.
For example, \citet{stehlik2012category} and \citet{song2024describe} survey category-theoretic formulations of statistical learning models, while \citet{shiebler2021categorytheorymachinelearning} do the same for gradient-based and probabilistic methods, and for invariant and equivariant learning. More generally, CT approaches to probability and statistics are active fields of study \citep{cornish2025stochasticneuralnetworksymmetrisation,cornish2026categoricalaccountmetropolishastingsalgorithm,fritz2025empiricalmeasuresstronglaws,bohinen2025categoricalalgebraconditionalprobability,ensarguet2024categoricalprobabilityspacesergodic,Ackerman_2024,Matache_2022}, and are widely discussed in the \href{https://ncatlab.org/nlab/show/category-theoretic+approaches+to+probability+theory}{nLab} and in the \href{https://golem.ph.utexas.edu/category/2020/06/statistics_for_category_theori.html}{n-Category Café}.

Conformal Prediction (CP, introduced formally in Section \ref{full-cp}) is a popular statistical/ML prediction tool. Given exchangeable training data $y^n=(y_1,\ldots,y_n)$, a confidence level $\alpha \in [0,1]$, and a non-conformity measure $\psi$ (also called non-conformity {\em score}), CP returns a prediction set, called the Conformal Prediction Region (CPR), that contains the ``true value'' $y_{n+1}$ with high probability $1-\alpha$ \citep{vovk2005algorithmic,shafer2008tutorial,vovk-trans,angelopoulos2024theoreticalfoundationsconformalprediction}; For extensions and more recent references, see e.g. \citep{caprio-conformalized,alireza,gibbs2023conformal,barber2023conformal,papadopoulos2008inductive}.
Throughout the paper we focus on \emph{Full} Conformal Prediction, introduced in Section \ref{full-cp}. This is partly for conceptual clarity---our main contributions concern the structural representation of CP as a set-valued statistical functional---and partly because Full CP is the cleanest setting in which our commuting diagram results are naturally expressed. Split/Inductive Conformal Prediction is central in applications, and we expect several aspects of our framework to extend to it; A systematic treatment is left for future work.

In \citet[Section 5.1]{caprio2025conformalpredictionregionsimprecise}, CP is shown to be a {\em correspondence} \citep[Chapter 17]{aliprantis} (i.e. a {\em set-valued function}, sometimes also called a {\em relation} \citep{perrone2024starting}), thus highlighting a connection between ML and functional analysis \citep{aliprantis}. Other such connections have been explored in the literature, see e.g. \citet{Korolev}. Building on this CP-as-a-correspondence intuition, we develop a category-theoretic approach to CP that clarifies the structural underpinnings of distribution-free uncertainty quantification.

\textbf{Contributions.} Our category-theoretic approach yields three main findings.

\begin{itemize}
    \item {\em Quantified uncertainty.} In Section \ref{main}, we show that, under mild regularity conditions, CP can be represented as a morphism in two natural categories of correspondences, capturing stability and measurability of the procedure. In both settings, the conformal region factorizes through a probability set correspondence and a prediction region (called Imprecise Highest Density Region, IHDR) correspondence, so that the indirect construction
    \(
    \text{data} \to \text{set of probabilities (called {\em credal set}, see Section \ref{ips-section})} \to \text{prediction region}
    \)
    is equivalent to the direct conformal construction at the level of procedures, not merely equal \emph{ex post} as sets. 
    
    This makes explicit that CP is not only a method for representing predictive uncertainty via regions, but also a tool for \emph{quantifying} predictive uncertainty in a decision-relevant way, through summaries of the induced credal set, such as lower/upper probabilities of assertions of interest and robust lower/upper expected losses for actions. As a byproduct, one can directly assess how the choice of non-conformity measure $\psi$ affects different notions of predictive uncertainty. While the coincidence between credal set-derived prediction regions and conformal regions was observed in \citet[Proposition 5]{caprio2025conformalpredictionregionsimprecise}, our contribution is to identify conditions under which this agreement is \emph{intrinsic} at the level of procedures.

    \item {\em Conceptual unification.} In Section \ref{unifying}, we show that CP bridges Bayesian, frequentist, and imprecise probabilistic approaches to predictive statistical reasoning, thus formalizing an intuition first put forth by \citet{martin2022valid}. 
    
On the one hand, the exact commuting diagram from Section \ref{main} makes precise when the indirect procedure (data \(\to\) credal set \(\to\) prediction region) is equivalent to the direct conformal procedure. On the other hand, when the non-conformity score \(\psi\) is derived from a Bayesian posterior predictive density and standard stability conditions hold, the conformal region converges, in Hausdorff distance, to the corresponding Bayesian posterior predictive density level set. We also provide quantitative convergence rates under local empirical process and boundary regularity assumptions. Thus, asymptotically, the Bayesian density level set, the direct conformal prediction region, and the IHDR obtained from the induced credal set become equivalent not only as sets, but as predictive procedures. This provides a principled sense in which CP acts as a finite-sample validity wrapper around Bayesian prediction, while remaining model-free and interfacing naturally with imprecise probabilistic semantics. Furthermore, we identify conditions under which upper posterior constructions are related to e-posteriors, clarifying when e-value-based and conformal-imprecise representations can coincide. 

    \item {\em Privacy-compatible aggregation.} In Section \ref{impr-prob-sec}, we formalize the IHDR extractor as a covariant functor from a category of credal sets to a category of subsets of $Y$. In turn, the CPR can be seen as the functor image of the credal set $\mathcal{M}$ induced by the CP procedure \citep{cella2022validity}, which coincides with the classical conformal prediction region. 
    
    This functorial viewpoint clarifies how CP behaves under inclusion-preserving transformations of the credal information being shared, and it yields a natural privacy-compatible interpretation: In federated settings \citep{LIU2024128019}, agents need only share summary objects---namely, possibly privatized credal sets---rather than raw data, while still obtaining conservative global prediction regions and preserving coverage guarantees \citep{fang2025trustworthyaisafetybias}.
\end{itemize}

These results hold under transparent and largely standard regularity conditions (on $Y$, on $\psi$, and on the credibility level $\alpha$), which are stated explicitly before each theorem.

Our work is organized as follows. To make the paper reasonably self-contained, Section \ref{background} reviews the necessary background on CP, Category Theory, and Imprecise Probability (IP). 
IPs \citep{walley1991statistical,augustin2014introduction,TroffaesDeCooman2014} are mathematical models for reasoning in the presence of both uncertainty (as in classical probability theory) and ambiguity (that is, full or partial ignorance about the stochastic model).
Their applications to statistics and ML form a lively subject field \citep{caprio2024credal,pmlr-v216-sale23a,second-order,credal-learning,inn,chau2025integralimpreciseprobabilitymetrics,novel_bayes,impr-MSG}. Functional Analysis and further Category Theory notions are reviewed in Appendices D and E in the Supplementary Material, respectively.

In Section \ref{main}, we introduce the category $\mathbf{UHCont}$, whose objects are topological spaces and whose morphisms (relationships between the objects) are upper hemicontinuous correspondences, i.e. the set-valued analogue of upper semicontinuous functions. Under minimal assumptions on the state space $Y$, on the credibility level $\alpha$, and on the class $\mathscr{F}$ of non-conformity measures $\psi$, we show that (Full) Conformal Prediction is a morphism of $\mathbf{UHCont}$. This is not merely a change of language: Upper hemicontinuity formalizes a stability property of the prediction correspondence, namely that small perturbations in the inputs (data and non-conformity measure) cannot produce pathological outward ``jumps'' of the output region. This makes CP compatible with modular, functional-style implementations where the numerical components (score evaluation, sorting/ranking, p-value computation) can benefit from standard tooling such as automatic batching and JIT compilation, while the finite-sample coverage guarantee remains intact.

CP is also a morphism of the category $\mathbf{WMeas}_\text{uc}$ (briefly introduced in Section \ref{main} and treated in detail in Appendix A in the Supplementary Material), whose objects are compact Polish spaces and whose morphisms are weakly measurable, uniformly compact-valued correspondences. We study both $\mathbf{UHCont}$ and $\mathbf{WMeas}_\text{uc}$ because continuity and measurability are fundamental---but generally non-equivalent---notions in functional analysis: the former captures stability of the set-valued functional, while the latter ensures that CP defines a well-posed random set and allows the study of probabilistic functionals of the region.

Not only is the CP correspondence a morphism in these categories, but it is also part of a commuting diagram in both. Such a factorization through a credal set correspondence and an IHDR correspondence endows CP with intrinsic (cardinal) uncertainty quantification capabilities beyond the mere size of the prediction set.

In Section \ref{unifying}, we show that, under transparent regularity conditions, a diagram encoding how Bayesian CP \citep{fong-bayes}, classical CP, and CP-informed imprecise probability theory each generate an \(\alpha\)-level prediction region commutes \emph{asymptotically in probability} (in Hausdorff distance). Concretely, when the non-conformity measure is derived from a Bayesian posterior predictive density and standard stability conditions hold, the CPR converges to the corresponding Bayesian predictive density level set, while remaining exactly representable as an IHDR extracted from an induced credal set. In this sense, the Bayesian density level set, the direct conformal region, and the credal-IHDR route become asymptotically equivalent as predictive procedures, not merely close as sets. This strengthens the view that CP bridges Bayesian, frequentist, and imprecise approaches to predictive statistical reasoning, first put forth by \citet{martin2022valid}. We further provide quantitative convergence rates under local empirical process and boundary regularity assumptions, and identify conditions under which upper posterior constructions are related to e-posteriors. 

Section \ref{impr-prob-sec} expands on category-theoretic Imprecise Probability theory \citep{liellcock2024compositionalimpreciseprobability}. There, we formalize IHDR \citep{coolen1992imprecise}---which can be read as a robust analogue of a highest density \emph{predictive} region, accounting for distributional ambiguity through a credal set---as a covariant functor $\mathsf{IHDR}_\alpha$, $\alpha\in[0,1]$, from the category of credal sets (with inclusion morphisms) to the category of subsets of $Y$ (again with inclusion morphisms). If multiple sources or agents produce credal sets $\mathcal{M}_j$, privacy-preserving outer approximations applied locally to these summary objects propagate monotonically through the functor. The resulting prediction regions can therefore retain the global coverage guarantee while requiring the exchange of summary objects rather than raw data; In this sense, privacy becomes a modular design choice at the level of credal summaries rather than raw samples.

Section \ref{concl} concludes our work, and discusses some open research questions. We provide full, rigorous proofs of our category-theoretic results in Appendix B of the Supplementary Material; In the main text, we include only proofs for the main theorems.
It is worth mentioning Appendix C in the Supplementary Material, where we study the properties of $\mathbf{UHCont}$ and $\mathbf{WMeas}_\text{uc}$. In particular, we show that (after slightly restricting the former) they are both monoidal, and find faithful functors between them. In addition, we find which of their subcategories admit a monad, give sufficient conditions for CP to be a morphism in such subcategories, and find a faithful functor between them.

\section{Background}\label{background}

In this section, we introduce (Full) Conformal Prediction (Section \ref{full-cp}), the concepts of category, commutative diagram, and functor (Section \ref{category-back}), and some basic notions of Imprecise Probability Theory (Section \ref{ips-section}). The reader familiar with any of these concepts can skip the associated part.

\subsection{Full Conformal Prediction}\label{full-cp}

This section is an expanded reproduction of \citet[Section 2.1]{caprio2025conformalpredictionregionsimprecise}. It is based on \citet[Section 4.1]{cella2022validity}, who summarize work by \citet{shafer2008tutorial} and \citet{vovk2005algorithmic} on Transductive (or Full) Conformal Prediction. It is worth mentioning that the version of CP that we consider does not directly involve covariates; Of course, our treatment can be extended to with-covariate problems, at the cost of more complicated notation.

Suppose that there is an exchangeable process $\mathcal{Y}_1,\mathcal{Y}_2,\ldots$ with distribution $\mathfrak{P}$, where each $\mathcal{Y}_i$ is a random element taking values in the measurable space $(Y,\Sigma_Y)$. 
Recall that a sequence is exchangeable if, for any $k\in\mathbb{N}$ and any permutation $\text{perm}(\cdot)$, the two random vectors $(\mathcal{Y}_1,\ldots,\mathcal{Y}_k) $ and  $(\mathcal{Y}_{\text{perm}(1)},\ldots,\mathcal{Y}_{\text{perm}(k)}) $ have the same joint distribution. This  implies that the marginal distributions of the $\mathcal{Y}_i$’s are the same.\footnote{Exchangeability does not imply independence, so the standard i.i.d. setup is in a sense more restrictive. Note also that we are not assuming any parametric form for the distribution $\mathfrak{P}$.} 

We want to solve the following statistical problem. Suppose we observe the first $n$ terms of the process, that is, $\mathcal{Y}^n = (\mathcal{Y}_1, \ldots , \mathcal{Y}_n) $. With
this data, and the assumption of exchangeability, the goal is to predict $\mathcal{Y}_{n+1}$ using a method that is valid or reliable in a certain sense. 

Let $\mathcal{Y}^{n+1}=(\mathcal{Y}^n,\mathcal{Y}_{n+1}) $ be an $(n+1)$-dimensional vector consisting of the observable $\mathcal{Y}^n$ and the yet-to-be-observed value $\mathcal{Y}_{n+1}$. Consider the transform 
$$
\mathcal{Y}^{n+1} \rightarrow T^{n+1}=(T_1,\ldots,T_{n+1}) 
$$ 
defined by the rule 
$$
T_i\coloneqq \psi_i \left(\mathcal{Y}^{n+1} \right) \equiv \psi \left(y^{n+1}_{-i},y_i \right) 
$$
for all $i\in\{1,\ldots,n+1\}$, 
where $y^{n+1}_{-i}=y^{n+1}\setminus\{y_i\}$ and $\psi:Y^n \times Y \rightarrow \mathbb{R}$ is a fixed function that is invariant to permutations in its first vector argument. Function $\psi$, which is called a \textit{non-conformity measure} (or {\em non-conformity score}), is constructed in such a way that $\psi_i(y^{n+1})$ is small if and only if $y_i$ agrees with---i.e. is ``close to''---a prediction based on the data $y^{n+1}_{-i}$. Or, the other way around, large values $\psi_i(y^{n+1})$ suggest that the observation $y_i$ is ``strange'' and does not conform to the rest of the data $y^{n+1}_{-i}$.
The key idea is to define $\psi_i(y^{n+1})$ in a way that allows us to compare $y_i$ to a suitable summary of $y^{n+1}_{-i}$, e.g.  $\psi_i(y^{n+1})=|\text{mean}(y^{n+1}_{-i})-y_i|$, for all $i\in\{1,\ldots,n+1\}$. Notice that transformation $\mathcal{Y}^{n+1} \rightarrow T^{n+1}$ preserves exchangeability. 
%

As the value $\mathcal{Y}_{n+1}$ has not yet been observed, and is actually the prediction target, the above calculations cannot be carried out exactly. Nevertheless, the exchangeability-preserving properties of the transformations described above provide a procedure to rank candidate values $\tilde{y}$ of $\mathcal{Y}_{n+1}$ based on the observed $\mathcal{Y}^n=y^n$, as shown in Algorithm \ref{algo1}, where $\mathbbm{1}[\cdot]$ denotes the indicator function.

\begin{algorithm}
\caption{Full Conformal prediction (CP)}\label{algo1}
\begin{algorithmic}
\State Initialize: data $y^n$, non-conformity measure $\psi$, grid of $\tilde{y}$ values
\For{each $\tilde{y}$ value in the grid} 
\State set $y_{n+1}=\tilde{y}$ and write $y^{n+1}=y^n \cup \{y_{n+1}\}$;
\State define $T_i=\psi_i(y^{n+1})$, for all $i\in\{1,\ldots,n+1\}$;
\State evaluate $\pi(\tilde{y},y^n)=(n+1)^{-1}\sum_{i=1}^{n+1} \mathbbm{1}[T_i \geq T_{n+1}]$;
\EndFor
\State return $\pi(\tilde{y},y^n)$ for each $\tilde{y}$ on the grid.
\end{algorithmic}
\end{algorithm}

The output of Algorithm \ref{algo1} is a data-dependent function,
$$\pi(\cdot,y^n): Y \rightarrow [0,1], \qquad \tilde{y} \mapsto \pi(\tilde{y},y^n) \coloneqq \frac{1}{n+1}\sum_{i=1}^{n+1} \mathbbm{1}[T_i \geq T_{n+1}],$$ 
that can be interpreted as a measure of plausibility of the assertion that $\mathcal{Y}_{n+1} = \tilde{y}$, given data $y^n$; \citet{vovk2005algorithmic} refer to the function $\pi$ as \textit{conformal transducer}. 
Conformal transducer $\pi$ plays a key role in the construction of \textit{conformal prediction regions} (CPRs). For any $\alpha\in [0,1]$, the $\alpha$-level CPR is defined as \citep[Equation (2)]{vovk-trans}
\begin{equation}\label{eq_imp5}
    \mathscr{R}_\alpha^\psi(y^n)\coloneqq\{y_{n+1}\in Y : \pi(y_{n+1},y^n)> \alpha\},
\end{equation}
and it satisfies
\begin{equation}\label{eq_imp6}
    P\left[\mathcal{Y}_{n+1}\in\mathscr{R}_\alpha^\psi(y^n)\right] \geq 1-\alpha,
\end{equation}
uniformly in $n$ and in $P$ \citep{vovk2005algorithmic}. That is, \eqref{eq_imp6} is satisfied for all $n\in\mathbb{N}$ and all exchangeable distributions $P$. 
Note that \eqref{eq_imp6} holds regardless of the choice of the non-conformity measure $\psi$. However, the choice of this function is crucial in terms of the {\em efficiency} of conformal prediction, that is, the {\em size} of the prediction regions. A simple visual representation of Full CP is given in Figure \ref{fig2}, bottom route.


\citet[Section 5]{caprio2025conformalpredictionregionsimprecise} point out how Full CP can be written as a correspondence

\begin{equation}\label{conf-pred-correspond-def}
\kappa: [0,1] \times Y^n \times \mathscr{F} \rightrightarrows Y, \quad (\alpha,y^n,\psi) \mapsto \kappa(\alpha,y^n,\psi),
\end{equation}

where 
\begin{align}\label{funct-coll}
    \mathscr{F}\coloneqq \{\psi: Y^n \times Y \rightarrow \mathbb{R} \text{, } \psi \text{ is invariant to permutations in its first argument}\},
\end{align}
and the image of $\kappa$ is defined as 
\begin{align}\label{kappa-definition}
    \kappa(\alpha,y^n,\psi) \coloneqq \Bigg\{y_{n+1}\in Y : \underbrace{\frac{1}{n+1}\sum_{i=1}^{n+1} \mathbbm{1}\left[\psi \left(y^{n+1}_{-i},y_i \right)  \geq \psi \left(y^{n+1}_{-(n+1)},y_{n+1} \right)\right]}_{\eqqcolon \pi(y_{n+1},y^n)} > \alpha\Bigg\}.
\end{align}
It is easy to see, then, that  $\kappa(\alpha,y^n,\psi) \equiv \mathscr{R}_\alpha^\psi(y^n)$. In this paper, we refer to the construction defined by \eqref{conf-pred-correspond-def}-\eqref{kappa-definition} as the {\em direct conformal construction} of prediction regions.

Full Conformal Prediction (FCP) is not the only way of carrying out conformal prediction. There exists another type called Inductive (or Split) Conformal Prediction (ICP) \citep{papadopoulos2008inductive,angelopoulos2024theoreticalfoundationsconformalprediction}. It too assumes exchangeability, and it is used to build the same region $\mathscr{R}_\alpha^\psi(y^n)$ in \eqref{eq_imp5} having the same guarantee \eqref{eq_imp6}, while being computationally less expensive than FCP;
In turn, ICP is central in applications.
However, it does not coincide in general  with FCP as a set-valued map at finite sample sizes, even though both enjoy finite-sample coverage guarantees under exchangeability.
Since our focus in this paper is primarily conceptual and structural, we develop our results for FCP, which provides the cleanest setting for the categorical factorizations and commuting diagram statements.
We expect several aspects of our framework to extend to ICP; A systematic treatment is left for future work.

\subsection{Category Theory}\label{category-back}

We begin by introducing an important notion from Category Theory (CT), namely that of a category; It is needed to understand the results in Sections \ref{main}, \ref{unifying}, and \ref{impr-prob-sec}. The formal definitions we provide here and in Appendix E in the Supplementary Material all come from \citet{perrone2024starting}.

In layman terms, a category is a setup comprising objects and relationships between such objects that can be chained.
The formal definition follows.

\begin{definition}[Category]%
\label{def:category-back}
A \emph{category} $\mathbf C$ consists of
\begin{itemize}
  \item a collection $\mathbf C_0$ of \emph{objects} (written $X,Y,Z,\dots$);
  \item a collection $\mathbf C_1$ of \emph{morphisms} (written $f,g,h,\dots$);
\end{itemize}
together with
\begin{enumerate}
  \item for every morphism $f$ there are two distinguished objects called its \emph{source} and \emph{target}, written $s(f)$ and $t(f)$ (we abbreviate $f\colon X\rightarrow Y$ when $s(f)=X$, $t(f)=Y$);
  \item for every object $X$ an \emph{identity} morphism $\mathrm{id}_X\colon X\rightarrow X$;
  \item for every composable pair $X \xrightarrow{f} Y \xrightarrow{g} Z$ (that is, for which $t(f)=s(g)$) a \emph{composite} morphism $g\circ f\colon X\rightarrow Z$;
\end{enumerate}
satisfying the \emph{unitality} laws $f\circ\mathrm{id}_X = f = \mathrm{id}_Y\circ f$ and the \emph{associativity} law
$(h\circ g)\circ f = h\circ(g\circ f)$, whenever the expressions are well defined.
\end{definition}

Beyond objects and morphisms, Category Theory provides a language for comparing different \emph{compositions} of morphisms.
This is captured by the notion of a \emph{commutative diagram}.

\begin{definition}[Commutative Diagram]\label{def:comm-diag}
A \emph{diagram} in a category $\mathbf C$ is a collection of objects and morphisms arranged in a graph-like shape.
A diagram is said to \emph{commute} if, whenever there are two (or more) directed paths with the same source and target objects,
the corresponding composites are equal as morphisms in $\mathbf C$.
\end{definition}

Commutativity is the categorical way of expressing that a construction is \emph{intrinsically} a composition of simpler ones.
For instance, consider a triangle diagram
\[
\begin{tikzcd}
X \ar[r,"g"] \ar[dr,swap,"h"] & Y \ar[d,"f"] \\
& Z
\end{tikzcd}
\]
in $\mathbf C$. The diagram commutes precisely when the two paths from $X$ to $Z$ agree, that is,
\begin{equation}\label{eq:triangle-comm}
h = f\circ g.
\end{equation}
Equation \eqref{eq:triangle-comm} should be read as more than an equality of outputs: It is an equality of \emph{procedures}.
It states that the morphism $h$ \emph{factors} through $Y$ via $g$ followed by $f$, so $Y$ can be interpreted as an intermediate
representation and $g$ and $f$ as two successive processing steps.
In particular, any property that is preserved under composition (e.g. continuity or measurability, when $\mathbf C$ is a category of such morphisms)
can be studied modularly by analyzing $g$ and $f$ separately, and then composing the results.

As we shall see in Sections \ref{main} and \ref{unifying}, in the present paper we use commuting diagrams to formalize when an apparently ``direct''
prediction procedure (Conformal Prediction) is equivalent, as a morphism, to an ``indirect'' one obtained by first extracting an intermediate object
(a set of probabilities) and then mapping it to a prediction region. This is the sense in which commutativity encodes an intrinsic equivalence between
$h$ and the composite $f\circ g$.

Categories are the building blocks in CT. It is natural, then, to ask how to relate them. This is captured by the notion of a {\em functor}.

\begin{definition}[Functor]%
\label{def:functor-back}
Let $\mathbf C,\mathbf D$ be categories.  
A \emph{functor} $F\colon\mathbf C\rightarrow\mathbf D$ assigns
\begin{itemize}
  \item to every object $X$ of $\mathbf C$ an object $FX$ of $\mathbf D$;
  \item to every morphism $f\colon X\rightarrow Y$ in $\mathbf C$ a morphism $Ff\colon FX\rightarrow FY$ in $\mathbf D$,
\end{itemize}
such that
\[
F(\mathrm{id}_X)=\mathrm{id}_{FX}, \qquad 
F(g\circ f)=Fg\circ Ff
\]
for all composable $f,g$ in $\mathbf C$. 
\end{definition}

\subsection{Imprecise Probabilities}\label{ips-section}
We now introduce a few imprecise probabilistic concepts that will be used when studying CP in the Category Theory framework.

Let $(Y,\Sigma_Y)$ be a measurable space, and call $\Delta_Y$ the space of finitely additive probabilities on $Y$, endowed with the weak$^\star$ topology; The reason we work with finitely additive probabilities is explained below. A {\em credal set} $\mathcal{M}$ is a weak$^\star$-closed and convex element of $\Delta_Y$ \citep{levi2}. Throughout the paper, we will denote by $\mathscr{C} \subset 2^{\Delta_Y}$ the space of nonempty weak$^\star$-closed and convex subsets of $\Delta_Y$.
For any nonempty credal set $\mathcal{M}\in\Ccal$, define the {\em upper probability} as the upper envelope of $\mathcal{M}$,
\[
  \overline P_\mathcal{M}(A) \equiv \overline P(A) = \sup_{P\in\mathcal{M}} P(A),
  \quad A\in\Sigma_Y.
\]

The {\em conjugate lower probability} $\underline{P}_\mathcal{M} \equiv \underline{P}$ of $\overline{P}$ satisfies
$\underline{P}(A)=1-\overline{P}(A^c)$, for all $A\in\Sigma_Y$,
thus knowing one immediately gives us the other; In a statistical and ML framework, these concepts allow us to crystallize model ambiguity.

Even in the presence of such ambiguity, we are able to define (a version of) a credible interval. Fix a credibility level $\alpha\in[0,1]$. Then, the $\alpha$-level Imprecise Highest Density Region (IHDR) of credal set $\mathcal{M}$ is the set $B_\mathcal{M}^\alpha \subset Y$ such that $\underline P(B_\mathcal{M}^\alpha)=1-\alpha$, and its ``size is minimal'' \citep{coolen1992imprecise}. More formally \citep[Equation (11)]{caprio2025conformalpredictionregionsimprecise}, 
$$B_\mathcal{M}^\alpha = \bigcap \left\lbrace{A \in \Sigma_Y : \underline{P}(A) \geq 1-\alpha}\right\rbrace.$$
Notice that $B_\mathcal{M}^\alpha$ can be written as the image of the following correspondence
\begin{align}\label{ihdr-corresp}
    \IHDR:[0,1] \times \Ccal \rightrightarrows Y, \quad (\alpha,\mathcal{M}) \mapsto \IHDR(\alpha,\mathcal{M}) \coloneqq \bigcap \left\lbrace{A \in \Sigma_Y : \underline P(A) \geq 1-\alpha}\right\rbrace.
\end{align}

Conformal Prediction is associated with a special case of upper probability $\overline{P}$,  called a {\em possibility function} \citep{DuboisPrade1988PossibilityTheory,deCooman1997Possibility1,deCooman1997Possibility2,deCooman1997Possibility3,cuzzolin2020geometry}. This representation is introduced because conformal predictors naturally produce nested plausibility levels across candidate outputs $\tilde y \in Y$, and possibility functions are exactly the upper probabilities that encode such nested uncertainty. A possibility is an upper probability for which there exists a function $\pi:Y\rightarrow [0,1]$ such that $\sup_{y\in Y}\pi(y)=1$ and $\overline{P}(A)=\sup_{y\in A} \pi(y)$, $A\in \Sigma_Y$.

\citet{cella2022validity} show that, under the so-called {\em consonance assumption}, that is, if $\sup_{y_{n+1} \in Y} \pi(y_{n+1},y^n)=1$, then the conformal transducer $\pi$ induces a possibility function $\overline{\Pi}(A)\coloneqq \sup_{y_{n+1} \in A} \pi(y_{n+1},y^n)$, for all $A\in \Sigma_Y$, and in turn a credal set $\mathcal{M}(\overline{\Pi}) \coloneqq \{P : P(A) \leq \overline{\Pi}(A), \forall A\in \Sigma_Y\}$. Such a credal set can be seen as the image of the correspondence $\CRED:Y^n \times \F \rightrightarrows \Prob$, where $\F$ is the collection of functions in \eqref{funct-coll}, and $(y^n,\psi) \mapsto \CRED(y^n,\psi) \coloneqq $

\begin{align}\label{cred-corresp}
    \Bigg\{P \in \Prob : P(A) \leq \underbrace{\sup_{y_{n+1}\in A} \left(\frac{1}{n+1}\sum_{i=1}^{n+1} \mathbbm{1}\left[\psi \left(y^{n+1}_{-i},y_i \right)  \geq \psi \left(y^{n+1}_{-(n+1)},y_{n+1} \right)\right]\right)}_{\equiv \sup_{y_{n+1} \in A} \pi(y_{n+1},y^n) \eqqcolon \overline{\Pi}(A)} \text{, } \forall A \in \Sigma_Y\Bigg\}.
\end{align}

The consonance assumption is needed to ensure that the conformal transducer admits the nested uncertainty interpretation and that, since the probabilities we work with are finitely additive, $\CRED(y^n,\psi)$ is a nonempty credal set \citep[Proposition 7.14]{TroffaesDeCooman2014}.\footnote{In particular, this implies that $\overline{\Pi}$ is coherent in the sense of \citet{walley1991statistical}.} The elements of $\CRED(y^n,\psi)$ are characterized in \citet[Theorem 1]{martin2025efficientmontecarlomethod}. In the remainder of the paper, we sometimes write $\pi(y,y^n,\psi)\equiv \pi_\psi(y,y^n)$ for notational convenience (we drop the $n+1$ subscript for $y_{n+1}$), and to explicitly acknowledge that the conformal transducer $\pi$ depends on the choice of the non-conformity measure $\psi$. 
A visual representation of the data $\to$ credal set $\to$ IHDR route---which in this paper we refer to as the {\em indirect procedure} for constructing a prediction region---is given in Figure \ref{fig2}, top route.

\begin{figure*}[h!]
\centering
\includegraphics[width=.8\textwidth]{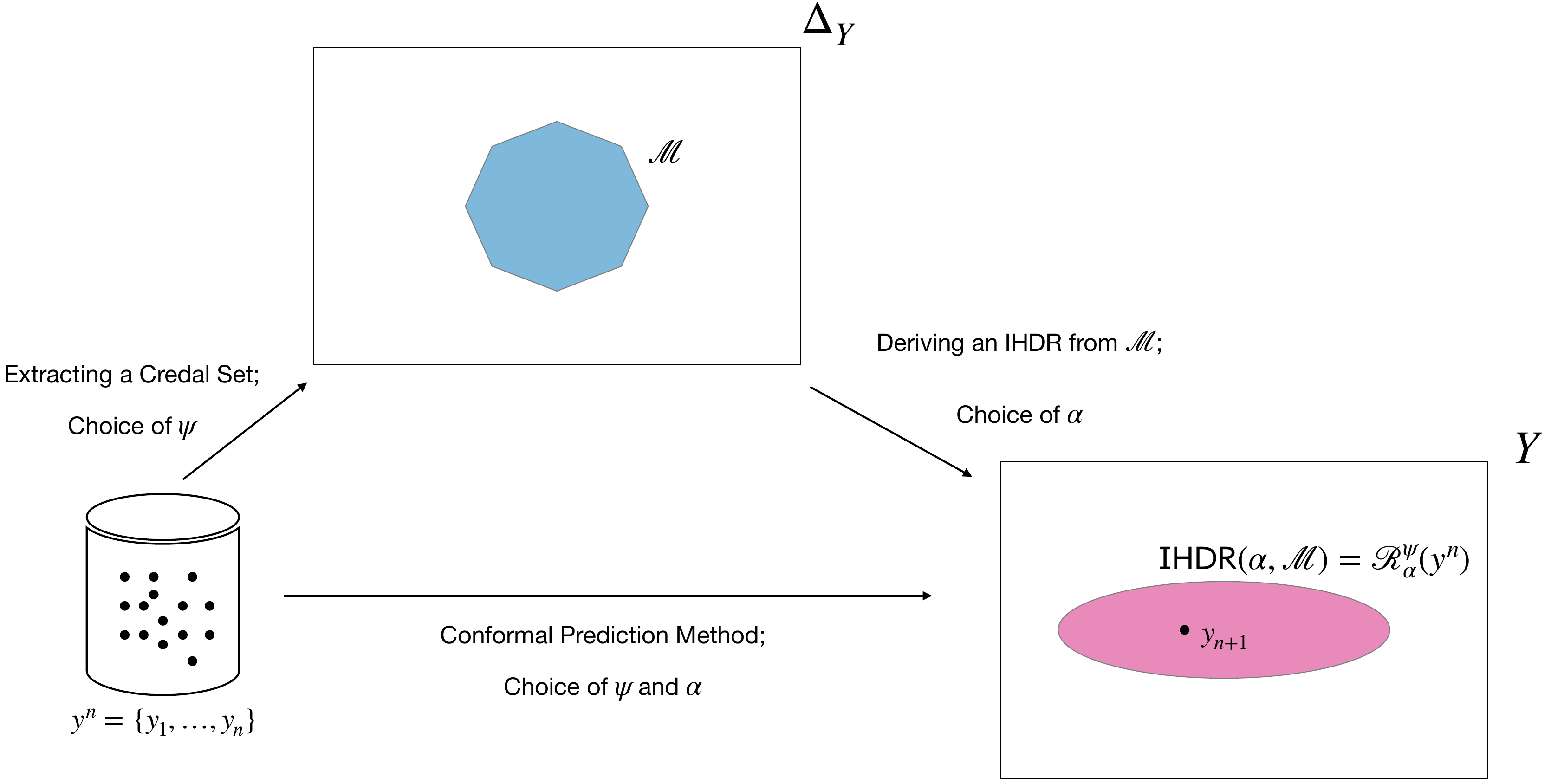}
\caption{A visual representation of both the full CP methodology, and the two-step procedure that first extracts credal set $\mathcal{M}(\overline{\Pi})$ from the training set $y^n$ as in \eqref{cred-corresp}, and then computes the IHDR as in \eqref{ihdr-corresp}.}\label{fig2}
\end{figure*}


Throughout the paper, we tacitly consider a slight modification of $\CRED$, that  significantly simplifies our proving endeavors; We denote it by $\CRED_\text{cl}(y^n,\psi)\coloneqq \{P \in \Delta_Y: P(A) \leq \overline{\Pi}(A) \text{, } \forall A \in \Sigma_Y \text{ closed}\}$. As we can see, $\CRED_\text{cl}(y^n,\psi) \supseteq \CRED(y^n,\psi)$, because it imposes less constraints on the $P$'s: They only have to be setwise dominated on closed sets. The following is an extension of \citet[Proposition 5]{caprio2025conformalpredictionregionsimprecise} showing that, under reasonable assumptions, applying the
IHDR intersection rule to the closed set relaxation
\(\CRED_{\mathrm{cl}}(y^n,\psi)\) recovers the conformal prediction region
\(\mathcal R_\alpha^\psi(y^n)\). Since the resulting set is exactly the
conformal prediction region, it inherits the usual conformal coverage
guarantee \eqref{eq_imp6}.

\begin{proposition}\label{prop-ihdr-cpr}
Fix $n\in\mathbb N$, $y^n\in Y^n$, $\psi\in\mathscr{F}$, and $\alpha\in[0,1]$.
Assume that $Y$ is compact Hausdorff and $\Sigma_Y=\mathcal B(Y)$, that $\psi$ is jointly continuous on $Y^n\times Y$, and that the conformal transducer $\pi( \cdot ,y^n)$ is consonant.
Let
\[
\overline P_{\mathrm{cl}}(A)\coloneqq 
\sup_{P\in\mathrm{CRED}_{\mathrm{cl}}(y^n,\psi)}P(A),
\qquad
\underline P_{\mathrm{cl}}(A)\coloneqq 
1-\overline P_{\mathrm{cl}}(A^c),
\qquad A\in\Sigma_Y.
\]
Then
\[
\bigcap\{A\in\Sigma_Y:\underline P_{\mathrm{cl}}(A)\ge 1-\alpha\}
=
\mathcal R^\psi_\alpha(y^n).
\]
In other words, the set obtained from \(\mathrm{CRED}_{\mathrm{cl}}(y^n,\psi)\)
by the IHDR intersection formula coincides with the conformal prediction region.
\end{proposition}

\begin{remark}
The compactness assumption on \(Y\) is not strictly necessary for the conclusion of the
previous proposition. In the proof, compactness is used only to ensure that the
upper semicontinuous conformal transducer attains its consonant supremum; That is,
that there exists \(y^\star\in Y\) such that
\(
\pi(y^\star,y^n)=1.
\)
Accordingly, the same proof goes through on any Hausdorff topological space \(Y\) with
\(\Sigma_Y=\mathcal B(Y)\), provided that \(y\mapsto \pi(y,y^n)\) is upper
semicontinuous and the supremum of the conformal transducer is attained. Compactness
is therefore a convenient sufficient condition, but the result remains valid under the
weaker requirement that the consonant supremum be achieved.
\end{remark}

\begin{proof}[Proof of Proposition \ref{prop-ihdr-cpr}]
Write
\[
B^{\mathrm{cl}}_\alpha
\coloneqq 
\bigcap\{A\in\Sigma_Y:\underline P_{\mathrm{cl}}(A)\ge 1-\alpha\}.
\]

We first prove the key identity
\[
\overline P_{\mathrm{cl}}(\{y\})=\pi(y,y^n),
\qquad\text{for every }y\in Y.
\]

\emph{Upper bound.}
Since $Y$ is Hausdorff, every singleton $\{y\}$ is closed. Hence, for every
$P\in\mathrm{CRED}_{\mathrm{cl}}(y^n,\psi)$,
\[
P(\{y\})\le \Pi(\{y\})=\pi(y,y^n).
\]
Taking the supremum over $P\in\mathrm{CRED}_{\mathrm{cl}}(y^n,\psi)$ gives
\[
\overline P_{\mathrm{cl}}(\{y\})\le \pi(y,y^n).
\]

\emph{Lower bound.}
We claim that $y\mapsto \pi(y,y^n)$ is upper semicontinuous.
Indeed, for each $i\in\{1,\dots,n+1\}$, the map
\[
y\longmapsto
\psi(y^{n+1}_{-i},y_i)-\psi(y^n,y)
\]
is continuous, because $\psi$ is jointly continuous and $y$ enters continuously in both
arguments. Therefore,
\[
y\longmapsto
\mathbbm{1} \left[
\psi(y^{n+1}_{-i},y_i)\ge \psi(y^n,y)
\right]
\]
is the indicator of a closed set, hence is upper semicontinuous. Since $\pi( \cdot ,y^n)$ is
the average of finitely many such terms, it is upper semicontinuous.

Because $Y$ is compact and $\pi( \cdot ,y^n)$ is upper semicontinuous, it attains its
supremum on $Y$. By consonance, there exists $y^\star\in Y$ such that
\[
\pi(y^\star,y^n)=1.
\]

Now fix $y\in Y$ and define
\[
P_y\coloneqq \pi(y,y^n) \delta_y+\bigl(1-\pi(y,y^n)\bigr)\delta_{y^\star}.
\]
Then $P_y\in\Delta_Y$. We show that $P_y\in\mathrm{CRED}_{\mathrm{cl}}(y^n,\psi)$.
Let $F\subseteq Y$ be closed.

If $y^\star\in F$, then
\[
P_y(F)\le 1=\pi(y^\star,y^n)\le \Pi(F),
\]
because $\Pi(F)=\sup_{z\in F}\pi(z,y^n)\ge \pi(y^\star,y^n)=1$.

If $y^\star\notin F$ and $y\notin F$, then $P_y(F)=0\le \Pi(F)$.

If $y^\star\notin F$ and $y\in F$, then
\[
P_y(F)=\pi(y,y^n)\le \sup_{z\in F}\pi(z,y^n)=\Pi(F).
\]

Thus $P_y(F)\le \Pi(F)$ for every closed $F$, so $P_y\in\mathrm{CRED}_{\mathrm{cl}}(y^n,\psi)$.
Consequently,
\[
\overline P_{\mathrm{cl}}(\{y\})
\ge
P_y(\{y\})
\ge
\pi(y,y^n).
\]
If $y\neq y^\star$, then in fact $P_y(\{y\})=\pi(y,y^n)$; if $y=y^\star$, then also
$P_y(\{y\})=1=\pi(y,y^n)$. Hence
\[
\overline P_{\mathrm{cl}}(\{y\})\ge \pi(y,y^n).
\]
Combining with the upper bound yields
\[
\overline P_{\mathrm{cl}}(\{y\})=\pi(y,y^n)
\qquad(\forall y\in Y).
\]

We now prove the equality $B^{\mathrm{cl}}_\alpha=\mathcal R^\psi_\alpha(y^n)$.

\emph{First inclusion:} $\mathcal R^\psi_\alpha(y^n)\subseteq B^{\mathrm{cl}}_\alpha$.

Let $y\in \mathcal R^\psi_\alpha(y^n)$. Then $\pi(y,y^n)>\alpha$.
Suppose, by contradiction, that there exists
$A\in\Sigma_Y$ such that $\underline P_{\mathrm{cl}}(A)\ge 1-\alpha$ and $y\notin A$.
Since $\{y\}\subseteq A^c$, monotonicity of $\overline P_{\mathrm{cl}}$ gives
\[
\overline P_{\mathrm{cl}}(A^c)\ge \overline P_{\mathrm{cl}}(\{y\})=\pi(y,y^n)>\alpha.
\]
Therefore,
\[
\underline P_{\mathrm{cl}}(A)
=
1-\overline P_{\mathrm{cl}}(A^c)
<
1-\alpha,
\]
a contradiction. Hence every set $A$ with $\underline P_{\mathrm{cl}}(A)\ge 1-\alpha$
must contain $y$, so $y\in B^{\mathrm{cl}}_\alpha$.

\emph{Second inclusion:} $B^{\mathrm{cl}}_\alpha\subseteq \mathcal R^\psi_\alpha(y^n)$.

Let $y\notin \mathcal R^\psi_\alpha(y^n)$. Then $\pi(y,y^n)\le \alpha$.
Set
\[
A_y\coloneqq Y\setminus\{y\}.
\]
Since $\{y\}$ is closed, $A_y\in\Sigma_Y$. Using the singleton identity proved above,
\[
\overline P_{\mathrm{cl}}(A_y^c)
=
\overline P_{\mathrm{cl}}(\{y\})
=
\pi(y,y^n)
\le \alpha.
\]
Hence
\[
\underline P_{\mathrm{cl}}(A_y)
=
1-\overline P_{\mathrm{cl}}(A_y^c)
\ge 1-\alpha.
\]
So $A_y$ is one of the sets appearing in the defining intersection of
$B^{\mathrm{cl}}_\alpha$. But $y\notin A_y$, therefore $y\notin B^{\mathrm{cl}}_\alpha$.

Thus
$B^{\mathrm{cl}}_\alpha\subseteq \mathcal R^\psi_\alpha(y^n)$.
Combining the two inclusions, we obtain
$B^{\mathrm{cl}}_\alpha=\mathcal R^\psi_\alpha(y^n)$.
\end{proof}

To avoid notation complications, in the remainder of this work we do not distinguish between $\CRED(y^n,\psi)$ and $\CRED_\text{cl}(y^n,\psi)$.

\section{Conformal Prediction as a Morphism}\label{main}



This section introduces categories \textbf{UHCont} and $\mathbf{WMeas}_{\text{uc}}$, and shows that, under minimal assumptions and a fixed choice of $\alpha$, the Conformal Prediction correspondence $\kappa$ is a morphism of both, embedded in a commuting diagram; Let us briefly go over again our category-theoretic motivations.
Conformal prediction maps inputs (data and a non-conformity measure) to an output that is inherently set-valued (a prediction region).
Rather than treating CP as a monolithic algorithm, we study it as a morphism in categories of correspondences chosen to encode two fundamental notions of well-posedness:
Stability (upper hemicontinuity) and measurability (so that CP defines a random compact set).
This perspective makes structural statements precise: Commuting diagrams encode exact factorizations, allowing us to view CP as the composition of simpler procedures,
and to interpret intermediate objects (credal sets) as carrying additional uncertainty information, while the finite-sample coverage guarantee remains intact. In particular, the main diagram expresses that the classical conformal route and the indirect route through a credal set followed by the IHDR extractor are intrinsically equivalent: They produce the same prediction region while exposing different structural information.

One could of course state our main equalities purely as identities between set-valued maps.
The categorical viewpoint is not invoked to make the equalities true; Rather, it is used to make them \emph{structurally meaningful}.
By working in categories whose morphisms encode stability and measurability, the commuting diagrams become statements about well-posedness and compositionality of the underlying statistical functionals.
This yields practical consequences: It justifies modular implementations, and enables uncertainty summaries on intermediate credal objects.

\subsection{Setup}

The following is inspired by \citet[Example 1.1.14]{perrone2024starting}.

\begin{definition}[Structure \textbf{UHCont}]\label{structure-uhcont}
Define a structure $\mathbf{UHCont}$ as follows,
\begin{itemize}
  \item \textbf{Objects:} Topological spaces $(X, \tau_X)$.
  \item \textbf{Morphisms:} Upper hemicontinuous correspondences $\Phi : X \rightrightarrows Y$ (see Definition 13 in Appendix D in the Supplementary Material).
\end{itemize}
\end{definition}

\begin{theorem}[\textbf{UHCont} is a Category]\label{cat-uhc}
The structure $\mathbf{UHCont}$ defines a category.
\end{theorem}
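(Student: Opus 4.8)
To show that $\mathbf{UHCont}$ is a category, I would verify the three defining axioms: the existence of identity morphisms, the closure of upper hemicontinuous correspondences under composition, and the associativity of that composition (together with the unit laws). The plan is to treat each in turn, with the composition step being where all the real work lies.

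\textbf{Identities.} First I would exhibit, for each topological space $(X,\tau_X)$, the identity correspondence $\mathrm{id}_X : X \rightrightarrows X$ defined by $\mathrm{id}_X(x) = \{x\}$. I would check that this is upper hemicontinuous: since each image is a singleton (hence compact-valued), I can appeal to the characterization of upper hemicontinuity via preimages of open sets (Definition \ref{uhc-definition}), noting that for any open $V \subseteq X$ the upper inverse $\{x : \{x\} \subseteq V\} = V$ is open. The unit laws $\mathrm{id}_Y \circ \Phi = \Phi = \Phi \circ \mathrm{id}_X$ follow immediately from the set-theoretic definition of correspondence composition.

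\textbf{Composition (the crux).} Given upper hemicontinuous correspondences $\Phi : X \rightrightarrows Y$ and $\Psi : Y \rightrightarrows Z$, I would define the composite by $(\Psi \circ \Phi)(x) = \bigcup_{y \in \Phi(x)} \Psi(y)$ and prove it is again upper hemicontinuous. The key lemma to invoke is the standard result that the composition of upper hemicontinuous correspondences is upper hemicontinuous. The argument proceeds through upper inverses: for an open $W \subseteq Z$, I would show that $\{x : (\Psi\circ\Phi)(x) \subseteq W\}$ is open. The natural route is to set $V := \{y : \Psi(y) \subseteq W\}$, which is open by upper hemicontinuity of $\Psi$, and then observe that $(\Psi\circ\Phi)(x) \subseteq W$ holds iff $\Phi(x) \subseteq V$, so the set in question equals the upper inverse of $V$ under $\Phi$, which is open by upper hemicontinuity of $\Phi$. \emph{I expect this step to be the main obstacle}, because the clean equivalence ``$(\Psi\circ\Phi)(x)\subseteq W \iff \Phi(x)\subseteq V$'' requires care: the direction showing $\Phi(x)\subseteq V$ implies $(\Psi\circ\Phi)(x)\subseteq W$ is straightforward, but the converse uses that $y\in\Phi(x)$ forces $\Psi(y)\subseteq(\Psi\circ\Phi)(x)\subseteq W$, hence $y\in V$. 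Depending on the precise formulation of upper hemicontinuity adopted in Definition \ref{uhc-definition} (open-preimage versus the neighborhood/compact-valued form), I may need an auxiliary compactness or closed-valuedness hypothesis; I would flag whichever convention the appendix fixes and adapt accordingly.

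\textbf{Associativity.} Finally, associativity of composition is inherited from the associativity of the underlying union-of-images operation on correspondences, which is a purely set-theoretic identity: both $((\Theta\circ\Psi)\circ\Phi)(x)$ and $(\Theta\circ(\Psi\circ\Phi))(x)$ unpack to $\bigcup_{y\in\Phi(x)}\bigcup_{z\in\Psi(y)}\Theta(z)$. This requires no topology and can be dispatched in a line. Having checked identities, well-definedness (closure) of composition, the unit laws, and associativity, I would conclude that $\mathbf{UHCont}$ is a category.
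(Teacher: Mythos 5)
Your proposal is correct, and its skeleton (identities, closure under composition, associativity, unit laws) matches the paper's proof exactly. The one genuine difference is the treatment of the crux step: the paper disposes of closure under composition by citing \citet[Theorem 17.23]{aliprantis}, whereas you prove it from scratch via the upper-inverse characterization, showing that $(\Psi\circ\Phi)^{u}(W)=\Phi^{u}\bigl(\Psi^{u}(W)\bigr)$ for open $W$, with $\Psi^{u}(W)\coloneqq\{y:\Psi(y)\subseteq W\}$. This identity is exactly right---both directions of your equivalence check out---and it makes the proof self-contained, which is a modest gain over the paper's citation; the cost is only length. One point worth settling: your closing hedge about possibly needing an ``auxiliary compactness or closed-valuedness hypothesis'' is unnecessary. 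The pointwise neighborhood formulation of upper hemicontinuity in Definition \ref{uhc-definition} (read with the standard orientation $\Phi(x_0)\subseteq V$, as the paper itself uses in its identity-morphism argument) is equivalent, with no side conditions, to openness of upper inverses of open sets: one direction takes $U=\Phi^{u}(V)$ as the required neighborhood, and the other observes that $\Phi^{u}(V)$ contains a neighborhood of each of its points. So your composition argument goes through verbatim, compactness nowhere required.
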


\begin{hproof}
We verify that $\mathbf{UHCont}$ satisfies the axioms of a category of Definition \ref{def:category-back}. 
For every topological space $(X,\tau_X)$, the identity correspondence 
$\mathrm{id}_X : X \rightrightarrows X$, given by $\mathrm{id}_X(x)=\{x\}$, is upper hemicontinuous, and hence is a morphism in $\mathbf{UHCont}$. 
Moreover, if $\Phi : X \rightrightarrows Y$ and $\Psi : Y \rightrightarrows Z$ are upper hemicontinuous correspondences, then their composition
\[
  \Psi \circ \Phi : X \rightrightarrows Z,
  \qquad
  (\Psi \circ \Phi)(x) = \bigcup_{y \in \Phi(x)} \Psi(y),
\]
is again upper hemicontinuous. Thus composition is well-defined in $\mathbf{UHCont}$. 
Associativity of composition follows from associativity of relational composition, and the identity correspondences act as left and right identities. 
Therefore, $\mathbf{UHCont}$ defines a category.
\end{hproof}

\subsection{Main Result}

We now show that, under minimal assumptions and a fixed choice of $\alpha$, the (Full) Conformal Prediction correspondence $\kappa$ in  \eqref{conf-pred-correspond-def} is a morphism of $\mathbf{UHCont}$, and that it is part of a commutative diagram involving the IHDR and the credal set correspondences. 
Fix any $n \in \mathbb{N}$, and consider the following conditions,

\newcommand{\kappaCorr}{\kappa}
\newcommand{\SigY}{\Sigma_Y}
\newcommand{\piVal}{\pi}
\newcommand{\credset}{\mathcal M}

\begin{enumerate}
\renewcommand{\labelenumi}{(\roman{enumi})}
  \renewcommand{\theenumi}{(\roman{enumi})}
  \item\label{H1}
    $\Y$ is compact Hausdorff and $\SigY=\mathcal{B}(Y)$ is its Borel $\sigma$-field.
  \item\label{H2}
    Each $\psi\in\F$ is jointly continuous on $\Y^{n}\times\Y$; We can then write $\F \subset C(Y^n \times Y)$, the space of jointly continuous real-valued functions on $Y^n \times Y$. $\F$ is endowed with the uniform ($\sup$-norm) topology.
  \item\label{H3}
    \emph{No-tie parameter:}
    The credibility level $\alpha$ satisfies
    $\alpha\notin S_{n+1}\coloneqq \{0, \frac{1}{n+1}, \frac{2}{n+1}, \ldots, \frac{n}{n+1}, 1\}$.
  \item\label{H5}
    $\Prob$ is endowed with the weak$^\star$ topology,
    and $\mathscr C\subset 2^{\Prob}$ is endowed with the Vietoris topology (see Definition 16 in Appendix D in the Supplementary Material).
\end{enumerate}



We are now ready for the main result of this section.

\begin{theorem}[The Full Conformal Prediction Diagram Commutes in \(\mathbf{UHCont}\)]
\label{conf-pred-diagr}
Fix \(n\in\mathbb N\), and let
$X\coloneqq Y^n\times\mathscr F$
with the product topology. Fix \(\alpha\in[0,1]\) satisfying \ref{H3}, and assume
\ref{H1}, \ref{H2}, and \ref{H5}. Assume moreover that the conformal transducer is
consonant on \(X\).
Define the direct conformal correspondence
\[
\kappa_\alpha:X\rightrightarrows Y,
\qquad
\kappa_\alpha(y^n,\psi)
\coloneqq
\{y\in Y:\pi_\psi(y,y^n)>\alpha\},
\]
and the pulled-back indirect correspondence
\[
\eta_\alpha:X\rightrightarrows Y,
\qquad
\eta_\alpha(y^n,\psi)
\coloneqq
\IHDR_\alpha\bigl(\CRED(y^n,\psi)\bigr).
\]
Then \(\kappa_\alpha\) and \(\eta_\alpha\) are morphisms in \(\mathbf{UHCont}\), and
$\eta_\alpha=\kappa_\alpha$.
Moreover, the credal correspondence
\[
\Gamma:X\rightrightarrows \Prob,
\qquad
\Gamma(y^n,\psi)\coloneqq \CRED(y^n,\psi),
\]
is a morphism in \(\mathbf{UHCont}\).
Finally, let
\[
\mathscr M_{\mathrm{CP}}
\coloneqq
\{\CRED(y^n,\psi):(y^n,\psi)\in X\}\subseteq \mathscr C
\]
be the image of the credal-object map. Although \(\mathscr M_{\mathrm{CP}}\) is a
subset of \(\mathscr C\), in the categorical factorization below it is endowed with
the final topology induced by the map
\[
c:X\to\mathscr M_{\mathrm{CP}},
\qquad
c(y^n,\psi)\coloneqq \CRED(y^n,\psi),
\]
not necessarily with the subspace topology inherited from the Vietoris topology on
\(\mathscr C\).
Define
\[
\widehat{\IHDR}_\alpha:\mathscr M_{\mathrm{CP}}\rightrightarrows Y,
\qquad
\widehat{\IHDR}_\alpha(\mathcal M)\coloneqq \IHDR_\alpha(\mathcal M).
\]
Then \(c\), viewed as a singleton-valued correspondence, and
\(\widehat{\IHDR}_\alpha\) are morphisms in \(\mathbf{UHCont}\), and the diagram
\[
\begin{tikzcd}
X \ar{r}{c} \ar{dr}[swap]{\kappa_\alpha} &
\quad \mathscr M_{\mathrm{CP}} \ar{d}{\widehat{\IHDR}_\alpha} \\
&  Y
\end{tikzcd}
\]
commutes in \(\mathbf{UHCont}\); We call it the Full CP Diagram.
\end{theorem}



To prove Theorem \ref{conf-pred-diagr}, we need first to introduce two ancillary lemmas. The first one is a famous classical result in set-valued functional analysis \citep[Theorem 17.11]{aliprantis}.

\begin{lemma}[Closed Graph Theorem]
\label{lem:CG}
A correspondence with compact Hausdorﬀ
range space has closed graph if and only if it is upper hemicontinuous and closed-valued.
\end{lemma}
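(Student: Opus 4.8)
The plan is to prove both implications directly by point-set topology, with compactness carrying one direction and a separation axiom carrying the other. Let $\Phi : X \rightrightarrows Y$ be the correspondence in question, with $Y$ compact Hausdorff, and write $\operatorname{Gr}(\Phi) = \{(x,y) \in X \times Y : y \in \Phi(x)\}$ for its graph. Recall that $\Phi$ is upper hemicontinuous at $x_0$ precisely when every open $V \supseteq \Phi(x_0)$ admits a neighborhood $U \ni x_0$ with $\Phi(x) \subseteq V$ for all $x \in U$.

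For the direction ``closed graph $\Rightarrow$ upper hemicontinuous and closed-valued,'' I would first dispatch closed-valuedness for free: the slice $\Phi(x_0)$ is the preimage of $\operatorname{Gr}(\Phi)$ under the continuous inclusion $y \mapsto (x_0, y)$, hence closed, and this step needs no compactness. For upper hemicontinuity, fix $x_0$ and an open $V \supseteq \Phi(x_0)$, and set $K = Y \setminus V$, which is \emph{compact} as a closed subset of the compact space $Y$. Since $\operatorname{Gr}(\Phi)$ is closed and disjoint from $\{x_0\} \times K$, for every $y \in K$ I can choose a basic open box $U_y \times W_y \ni (x_0, y)$ missing the graph. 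The sets $W_y$ cover $K$, so a finite subcover $W_{y_1}, \dots, W_{y_m}$ exists, and $U = \bigcap_{j=1}^{m} U_{y_j}$ is the desired neighborhood: for $x \in U$ every $y \in K$ lies in some $W_{y_j}$ with $x \in U_{y_j}$, forcing $y \notin \Phi(x)$, so $\Phi(x) \cap K = \emptyset$, i.e. $\Phi(x) \subseteq V$.

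For the converse, ``upper hemicontinuous and closed-valued $\Rightarrow$ closed graph,'' I would show the complement of $\operatorname{Gr}(\Phi)$ is open. Take $(x_0, y_0)$ with $y_0 \notin \Phi(x_0)$. Here the Hausdorff hypothesis enters through regularity (compact Hausdorff spaces are regular, indeed normal): the point $y_0$ and the disjoint closed set $\Phi(x_0)$ can be separated by disjoint open sets $W \ni y_0$ and $V \supseteq \Phi(x_0)$. Upper hemicontinuity then yields a neighborhood $U \ni x_0$ with $\Phi(x) \subseteq V$ for all $x \in U$, and $U \times W$ is an open box around $(x_0, y_0)$ disjoint from $\operatorname{Gr}(\Phi)$, since $W \cap V = \emptyset$ forces $y \notin \Phi(x)$ on it.

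The main obstacle is the upper-hemicontinuity step of the forward direction: it is the only place where compactness of $Y$ is genuinely used, and it is essentially a tube-lemma argument—collapsing the a priori infinite family of local separations into a single neighborhood $U$ of $x_0$ that works uniformly against the whole ``bad'' set $K$ relies on extracting a finite subcover. Everything else is routine: closed-valuedness is automatic, and the converse needs only one invocation of regularity together with the definition of upper hemicontinuity. Since the statement coincides with \citep[Theorem 17.11]{aliprantis}, in the paper I would simply cite it, the argument above being the self-contained justification.
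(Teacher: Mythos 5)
Your proposal is correct and matches the paper's treatment: the paper gives no proof of this lemma at all, stating it as a classical result and citing it to Aliprantis--Border, Theorem 17.11, which is exactly what you propose to do in the paper itself. Your self-contained argument (the tube-lemma/finite-subcover step for ``closed graph $\Rightarrow$ u.h.c.,'' the slice argument for closed-valuedness, and regularity of the compact Hausdorff range for the converse) is the standard proof of that classical theorem and is sound.
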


The second one is the following. 

\begin{lemma}[Upper Semicontinuity of the Upper Envelope $\overline{\Pi}$]\label{lem-H4}
    If \ref{H1} and \ref{H2} hold, then the map $(y^{n},\psi)\mapsto \overline\Pi_{(y^{n},\psi)}(A)=\sup_{y\in A}\piVal(y,y^{n},\psi)$
     is {upper semicontinuous}, for all closed sets $A\in\Sigma_Y$, with the convention \(\sup_{\emptyset}\pi=0\).
\end{lemma}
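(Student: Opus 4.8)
The plan is to exploit the fact that $\overline{\Pi}_{(y^n,\psi)}(A)=\sup_{y\in A}\pi(y,y^n,\psi)$ is a supremum, taken over a \emph{compact} index set, of a function that is \emph{jointly} upper semicontinuous in all of its arguments. Since $A$ is closed and, by \ref{H1}, $Y$ is compact, $A$ is itself compact; this compactness is exactly what converts an a priori merely lower-semicontinuous supremum into an upper-semicontinuous one, so it sits at the heart of the argument.

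First I would establish the joint upper semicontinuity of the transducer $\pi$ on $Y\times Y^n\times\F$. Writing $y\equiv y_{n+1}$, each summand of $\pi$ is an indicator $\mathbbm{1}[g_i\geq 0]$ with $g_i(y,y^n,\psi):=\psi(y^{n+1}_{-i},y_i)-\psi(y^n,y)$. The key sub-step is that each $g_i$ is jointly continuous: the evaluation map $C(Y^n\times Y)\times(Y^n\times Y)\to\mathbb{R}$, $(\psi,x)\mapsto\psi(x)$, is jointly continuous when $\F$ carries the sup-norm topology (split $|\psi(x)-\psi_0(x_0)|$ by the triangle inequality into $\|\psi-\psi_0\|_\infty$ plus the modulus of continuity of the fixed $\psi_0$, using \ref{H2}), and the rearrangement $(y,y^n)\mapsto(y^{n+1}_{-i},y_i)$ is continuous. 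Consequently each superlevel set $\{g_i\geq 0\}$ is closed, its indicator is upper semicontinuous, and $\pi$, being a nonnegative-scalar multiple $\tfrac{1}{n+1}$ of a finite sum of such indicators, is again upper semicontinuous.

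Second, I would prove the ``compact-supremum'' principle: if $h:A\times Z\to\mathbb{R}$ is jointly upper semicontinuous and $A$ is compact, then $z\mapsto\sup_{y\in A}h(y,z)$ is upper semicontinuous, applied here with $Z=Y^n\times\F$. Fix a level $c$ and a point $z_0$ with $\sup_{y}h(y,z_0)<c$. For each $y\in A$, upper semicontinuity at $(y,z_0)$ yields a basic open neighborhood of product form $U_y\times V_y\ni(y,z_0)$ on which $h<c$; extract a finite subcover $U_{y_1},\ldots,U_{y_k}$ of the compact $A$ and put $V=\bigcap_{j=1}^k V_{y_j}$. Then for every $z\in V$ one has $h(\cdot,z)<c$ on all of $A$, and since $A$ is compact the supremum is attained, so $\sup_{y}h(y,z)<c$ throughout $V$. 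Hence $\{z:\sup_{y}h(y,z)<c\}$ is open, which is precisely upper semicontinuity of the supremum.

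Combining the two steps with $h=\pi$ and with $A$ the given closed (hence compact) set yields that $(y^n,\psi)\mapsto\overline{\Pi}_{(y^n,\psi)}(A)$ is upper semicontinuous, as claimed. The main obstacle is conceptual rather than computational: one must resist the reflex that a supremum of upper-semicontinuous functions is \emph{lower} semicontinuous. The statement holds here only because the supremum ranges over a compact set and the integrand is jointly (not merely separately) upper semicontinuous; dropping either hypothesis breaks the argument. A secondary point deserving care is the joint continuity of evaluation under the sup-norm topology on $\F$, which is what lets the $\psi$-variable participate in the semicontinuity on an equal footing with $y$ and $y^n$.
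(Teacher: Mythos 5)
Your proposal is correct, and its first half coincides with the paper's: both arguments hinge on the joint continuity of the maps $g_i(y,y^n,\psi)=\psi(y^{n+1}_{-i},y_i)-\psi(y^n,y)$ (via continuity of sup-norm evaluation and of coordinate rearrangement), the closedness of $\{g_i\ge 0\}$, and the resulting joint upper semicontinuity of $\pi$ (you sum indicators of closed sets; the paper writes each superlevel set $R_k$ as a finite union of finite intersections of the $\{g_i\ge 0\}$ --- the same fact in slightly different clothing). Where you genuinely diverge is the passage from u.s.c.\ of $\pi$ to u.s.c.\ of the supremum over $A$. The paper argues sequentially: it fixes $(y^n_m,\psi_m)\to(y^n,\psi)$, exploits the fact that $\pi$ takes values in the finite grid $\{0,\tfrac{1}{n+1},\dots,1\}$ to discretize the $\limsup$ along a subsequence, selects maximizers $y^{(m)}_{n+1}\in A$, extracts a convergent subsequence, and passes to the limit in the $g_i$'s by contradiction; this is tailored to the conformal transducer and requires the footnoted switch from sequences to nets when $Y$ is not first countable. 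You instead prove a general compact-supremum principle by a finite-subcover argument: if $h$ is jointly u.s.c.\ and $A$ is compact, then $z\mapsto\sup_{y\in A}h(y,z)$ is u.s.c. Your route is purely topological, so it needs no sequences or nets and hence no first-countability caveat, and it never uses the finite range of $\pi$, making the lemma reusable for other score functionals; the paper's route is more hands-on and quantitative, trading generality for an explicit limit computation. One point you handled correctly and that deserves emphasis: the pointwise bound $h(\cdot,z)<c$ on $A$ obtained from the subcover only yields $\sup_{y\in A}h(y,z)\le c$ in general, and it is precisely the attainment of the supremum (an u.s.c.\ function on a compact set attains its maximum --- the same fact the paper proves at the outset) that upgrades this to strict inequality and closes the argument.
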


\begin{proof}[Proof]
    First, we show that, if a generic set $K$ is compact and a generic map \(f:K\to\mathbb R\) is upper semicontinuous (u.s.c.), then \(f\) attains its maximum on \(K\). Choose \(x_m\) with \(f(x_m)\ge\sup_K f-\tfrac1m\).
A compactness subsequence \(x_{m_k}\to x_\star\in K\) exists.
Upper semicontinuity gives
\(
f(x_\star)\ge\limsup f(x_{m_k})=\sup_K f.
\)

Now, we introduce auxiliary maps, and show their continuity.
For \(i\in\{1,\dots ,n+1\}\), put
\[
g_i(y_{n+1},y^{n},\psi)
   \coloneqq\psi\bigl(y^{n+1}_{-i},y_i\bigr)-\psi\bigl(y^{n},y_{n+1}\bigr).
\]
Because (a) the evaluation \(\psi\mapsto\psi(y^n,y)\) is continuous in the sup-norm, and (b) coordinate deletion \(y^{n+1}\mapsto(y^{n+1}_{-i},y_i)\) is continuous, each \(g_i\) is continuous on \(Y^{n+1}\times \F\).

Let us now show the upper semicontinuity of \(\pi_\psi\).
Fix \((y^{n+1},\psi)\).
The set \(\{g_i\ge0\}\) is closed;
For \(k\in\{0,\dots ,n+1\}\), define
\[
R_k\coloneqq\Bigl\{(y^{n+1},\psi) : 
          \pi_\psi(y_{n+1},y^{n})\ge\tfrac{k}{n+1}\Bigr\}.
\]
At least \(k\) of the inequalities
\(g_i\ge0\) must hold, so
\(R_k=\bigcup_{\lvert I\rvert=k}
      \bigcap_{i\in I}\{g_i\ge0\}\),
a finite union of finite intersections of closed sets;
Hence \(R_k\) is closed and \(\pi_\psi\) is u.s.c.

We are now ready to prove the main statement. Fix any closed set $A\in\Sigma_Y$. By \ref{H1}, it is also compact. 
Let \(\bigl(y^{n}_m,\psi_m\bigr)_{m\ge1}\)
converge to \((y^{n},\psi)\) and set
\(
L\coloneqq\limsup_{m\to\infty}\overline\Pi_{(y^{n}_m,\psi_m)}(A)
\).\footnote{If $Y$ is not first countable, we can switch from a sequence to a net argument.}
We must show \(L\le \overline\Pi_{(y^{n},\psi)}(A)\).

\emph{Step 1 (Discretization).}
Because \(\pi_\psi\) takes values in the finite set
\(\{0,\tfrac1{n+1},\dots ,1\}\),
there is an integer \(k\in\{0,\dots ,n+1\}\)
and a subsequence (still indexed by \(m\)) with
\(\overline\Pi_{(y^{n}_m,\psi_m)}(A)=k/(n+1)\), for all \(m\).

\emph{Step 2 (Select Maximizers).}
By the attained maximum result that we presented earlier, pick
\(y_{n+1}^{(m)}\in A\) such that
\(\pi_{\psi_m}\bigl(y_{n+1}^{(m)},y^{n}_m\bigr)=k/(n+1)\).
Compactness of \(A\) yields a further subsequence and a limit
\(y_{n+1}\in A\) with \(y_{n+1}^{(m)}\to y_{n+1}\).

\emph{Step 3 (Pass to the Limit).}
For every \(i\), the continuity of \(g_i\) gives
\(
g_i\bigl(y_{n+1}^{(m)},y^{n}_m,\psi_m\bigr)
\to g_i\bigl(y_{n+1},y^{n},\psi\bigr).
\)
Since at least \(k\) of the \(g_i\)’s are non-negative for each \(m\),
suppose, for contradiction, that {\em strictly fewer} than \(k\)
are non-negative at the limit.
Then there exists \(\delta>0\) such that,
for all sufficiently large \(m\),
fewer than \(k\) indices satisfy
\(g_i\bigl(y_{n+1}^{(m)},y^{n}_m,\psi_m\bigr)\ge-\delta\),
contradicting the fact that {\em every} \(m\) has
\(k\) indices with the same expression \(\ge0\).
Hence at least \(k\) of the limiting \(g_i\)’s are \(\ge0\), and
\[
\pi_\psi(y_{n+1},y^{n}) \ge \frac{k}{n+1}.
\]

\emph{Step 4 (Take the Supremum).}
Because \(y_{n+1}\in A\),
\[
\overline\Pi_{(y^{n},\psi)}(A)
       \ge \pi_\psi(y_{n+1},y^{n})
       \ge \frac{k}{n+1}
       = L.
\]
Thus \(\limsup_{m\to\infty}\overline\Pi_{(y^{n}_m,\psi_m)}(A)\le \overline\Pi_{(y^{n},\psi)}(A)\),
establishing upper semicontinuity.
\end{proof}

We are finally ready for the proof of Theorem \ref{conf-pred-diagr}.

\begin{proof}[Proof of Theorem \ref{conf-pred-diagr}]
Fix \(n\in\mathbb N\), and write \(x=(y^n,\psi)\in X\).
First observe that, by the argument of Lemma \ref{lem-H4}, the map
\[
(y,y^n,\psi)\longmapsto \pi_\psi(y,y^n)
\]
is upper semicontinuous on \(Y\times X\). Indeed, for each
\(i\in\{1,\ldots,n+1\}\), the map
\[
(y,y^n,\psi)\longmapsto
\psi(y^{n+1}_{-i},y_i)-\psi(y^n,y)
\]
is continuous, and therefore the sets on which at least \(k\) of these
\(n+1\) inequalities are nonnegative are closed.
Since \(\alpha\notin S_{n+1}\), define
\[
\beta(\alpha)
\coloneqq
\min\left\{\frac{k}{n+1}\in S_{n+1}:\frac{k}{n+1}>\alpha\right\}.
\]
Then, because \(\pi_\psi(y,y^n)\) only takes values in \(S_{n+1}\),
\[
\pi_\psi(y,y^n)>\alpha
\quad\Longleftrightarrow\quad
\pi_\psi(y,y^n)\ge \beta(\alpha).
\]
Hence,
\[
\operatorname{Gr}(\kappa_\alpha)
=
\left\{
(y^n,\psi,y)\in X\times Y:
\pi_\psi(y,y^n)\ge \beta(\alpha)
\right\}
\]
is a closed subset of \(X\times Y\), since it is a superlevel set of an
upper semicontinuous function. Because \(Y\) is compact Hausdorff by \ref{H1},
Lemma \ref{lem:CG} implies that \(\kappa_\alpha\) is upper hemicontinuous.
Thus \(\kappa_\alpha\) is a morphism in \(\mathbf{UHCont}\).

We next show that the credal correspondence
\[
\Gamma:X\rightrightarrows \Prob,
\qquad
\Gamma(y^n,\psi)=\CRED(y^n,\psi),
\]
is upper hemicontinuous. For every closed \(A\in\Sigma_Y\), define
\[
H_A
\coloneqq
\left\{
(y^n,\psi,P)\in X\times\Prob:
P(A)\le \overline\Pi_{(y^n,\psi)}(A)
\right\}.
\]
The map \(P\mapsto P(A)\) is weak\(^\star\)-continuous on \(\Prob\), since
\(\mathbbm 1_A\) is a bounded measurable function. By Lemma \ref{lem-H4}, the map
\[
(y^n,\psi)\mapsto \overline\Pi_{(y^n,\psi)}(A)
\]
is upper semicontinuous. Therefore \(H_A\) is closed.
Since we work with the closed set version of the credal construction,
\[
\operatorname{Gr}(\Gamma)
=
\bigcap_{\substack{A\in\Sigma_Y\\ A\text{ closed}}} H_A .
\]
Thus \(\operatorname{Gr}(\Gamma)\) is closed in \(X\times\Prob\). By \ref{H5}
and Banach-Alaoglu, \(\Prob\) is compact Hausdorff in the weak\(^\star\)
topology. Hence Lemma \ref{lem:CG} implies that \(\Gamma\) is upper
hemicontinuous. Therefore \(\Gamma\) is a morphism in \(\mathbf{UHCont}\).

Now define
\[
\eta_\alpha(y^n,\psi)
\coloneqq
\IHDR_\alpha\bigl(\CRED(y^n,\psi)\bigr).
\]
By Proposition \ref{prop-ihdr-cpr}, for every \((y^n,\psi)\in X\),
\[
\IHDR_\alpha\bigl(\CRED(y^n,\psi)\bigr)
=
\{y\in Y:\pi_\psi(y,y^n)>\alpha\}
=
\kappa_\alpha(y^n,\psi).
\]
Hence
$\eta_\alpha=\kappa_\alpha$.
Since \(\kappa_\alpha\) is upper hemicontinuous, so is \(\eta_\alpha\). Thus
\(\eta_\alpha\) is also a morphism in \(\mathbf{UHCont}\).

It remains to justify the displayed categorical factorization. Let
\[
\mathscr M_{\mathrm{CP}}
=
\{\CRED(y^n,\psi):(y^n,\psi)\in X\}
\]
and endow \(\mathscr M_{\mathrm{CP}}\) with the final topology induced by
\[
c:X\to\mathscr M_{\mathrm{CP}},
\qquad
c(y^n,\psi)=\CRED(y^n,\psi).
\]
By construction, \(c\) is continuous. Therefore, viewed as the singleton-valued
correspondence
$x\mapsto\{c(x)\}$,
it is upper hemicontinuous, hence a morphism in \(\mathbf{UHCont}\).
Define
\[
\widehat{\IHDR}_\alpha:\mathscr M_{\mathrm{CP}}\rightrightarrows Y,
\qquad
\widehat{\IHDR}_\alpha(\mathcal M)=\IHDR_\alpha(\mathcal M).
\]
This correspondence is well defined on \(\mathscr M_{\mathrm{CP}}\). Indeed,
each \(\mathcal M\in\mathscr M_{\mathrm{CP}}\) is of the form \(\mathcal M=\CRED(y^n,\psi)\)
for some \((y^n,\psi)\in X\), and the value
\(\IHDR_\alpha(\mathcal M)\) depends only on the credal set \(\mathcal M\), not on the particular
representation.

We show that \(\widehat{\IHDR}_\alpha\) is upper hemicontinuous. Let
\(V\subseteq Y\) be open and set
\[
O_V
\coloneqq
\{\mathcal M\in\mathscr M_{\mathrm{CP}}:
\widehat{\IHDR}_\alpha(\mathcal M)\subseteq V\}.
\]
By the definition of the final topology, it is enough to show that
\(c^{-1}(O_V)\) is open in \(X\). But
\[
c^{-1}(O_V)
=
\left\{
(y^n,\psi)\in X:
\IHDR_\alpha(\CRED(y^n,\psi))\subseteq V
\right\}.
\]
Using \(\IHDR_\alpha(\CRED(y^n,\psi))=\kappa_\alpha(y^n,\psi)\), this becomes
\[
c^{-1}(O_V)
=
\left\{
(y^n,\psi)\in X:
\kappa_\alpha(y^n,\psi)\subseteq V
\right\},
\]
which is open because \(\kappa_\alpha\) is upper hemicontinuous. Therefore
\(O_V\) is open in \(\mathscr M_{\mathrm{CP}}\), and
\(\widehat{\IHDR}_\alpha\) is upper hemicontinuous.

Finally, for every \((y^n,\psi)\in X\),
\[
(\widehat{\IHDR}_\alpha\circ c)(y^n,\psi)
=
\widehat{\IHDR}_\alpha(\CRED(y^n,\psi))
=
\IHDR_\alpha(\CRED(y^n,\psi))
=
\kappa_\alpha(y^n,\psi).
\]
Thus the diagram commutes in \(\mathbf{UHCont}\).
\end{proof}

Four remarks are in order.

\begin{remark}[Correspondence form versus functional form]
\label{rem:correspondence-functional-form}
There are two related, but distinct, ways to view the credal construction.
First, one may view it as a correspondence
\[
\Gamma:X\rightrightarrows \Prob,
\qquad
\Gamma(y^n,\psi)=\CRED(y^n,\psi).
\]
This is the form in which \(\CRED(y^n,\psi)\) is treated as a subset of
\(\Prob\), and Theorem \ref{conf-pred-diagr} shows that this correspondence is
upper hemicontinuous.

Second, one may view it as a credal-object-valued map
\[
c:X\to\mathscr M_{\mathrm{CP}},
\qquad
c(y^n,\psi)=\CRED(y^n,\psi),
\]
where \(\mathscr M_{\mathrm{CP}}\) is the image of the construction. The
commuting diagram in Theorem \ref{conf-pred-diagr} uses this second
interpretation, with \(\mathscr M_{\mathrm{CP}}\) endowed with the final
topology induced by \(c\).

Thus the diagram should not be read as asserting that the credal-object-valued
map \(X\to\mathscr C\) is continuous for the ambient Vietoris topology on
\(\mathscr C\). That is a different, stronger statement and may require
additional assumptions.
\end{remark}

\begin{remark}[Why the target is \(Y\), not \(\Sigma_Y\)]
\label{rem:targetY-not-SigmaY}
Although \(\kappa_\alpha(y^n,\psi)\) and
\(\IHDR_\alpha(\CRED(y^n,\psi))\) are measurable subsets of \(Y\), as
morphisms in \(\mathbf{UHCont}\) they are correspondences with target object
\(Y\):
\[
\kappa_\alpha:X\rightrightarrows Y,
\qquad
\widehat{\IHDR}_\alpha:\mathscr M_{\mathrm{CP}}\rightrightarrows Y.
\]
Writing the output as an element of \(\Sigma_Y\) would instead amount to
working with single-valued maps into a hyperspace of measurable sets, equipped
with some additional topology. That is not the categorical structure used here.
\end{remark}

\begin{remark}[Role of the no-tie condition]
\label{rem:no-tie-role}
The restriction \(\alpha\notin S_{n+1}\) is used to rewrite the strict
conformal region
$\{y\in Y:\pi_\psi(y,y^n)>\alpha\}$
as the closed superlevel set
$\{y\in Y:\pi_\psi(y,y^n)\ge \beta(\alpha)\}$.
This makes the closed-graph argument immediate. The assumption is convenient
rather than conceptually essential; with minor endpoint conventions, the same
argument can often be extended to fixed grid values of \(\alpha\).
\end{remark}

\begin{remark}[Compactness of \(Y\)]
\label{rem:compactnessY}
The compactness assumption on \(Y\) is a technical device that lets us work in
a clean topological setting. It ensures, among other things, that closed subsets
of \(Y\) are compact and that upper semicontinuous functions attain their
suprema on closed sets. This is used in Lemma \ref{lem-H4} to obtain upper
semicontinuity of the upper envelope
\[
(y^n,\psi)\mapsto
\overline\Pi_{(y^n,\psi)}(A)
=
\sup_{y\in A}\pi_\psi(y,y^n),
\]
for closed \(A\subseteq Y\).
On noncompact spaces, analogous conclusions may still hold under assumptions
that prevent loss of mass or nonattainment of suprema, for example suitable
compactness of upper level sets or other properness conditions. We do not
pursue those refinements here.
\end{remark}

We now give an example in which conditions \ref{H1} and \ref{H2} are satisfied. Condition \ref{H3} is enforced by choosing
\(\alpha\notin S_{n+1}\), and \ref{H5} belongs to the ambient
functional-analytic setup rather than to the choice of non-conformity measure.

\begin{example}[A Statistical ML-Inspired Setting]
\label{ex:neural-psi}
Let $Y=[0,1]^d$ with its Euclidean topology (compact and Hausdorff),
fix a parameter value \(\theta\), and let
\(\varphi_\theta\colon Y\to\mathbb{R}^m\) be a feed-forward neural
network whose activation functions are continuous, such as ReLU. Given that we fixed
\(\theta\), the map \(y\mapsto\varphi_\theta(y)\) is continuous.\footnote{If \(\theta\) is itself chosen as a function of the data, additional assumptions
on the training map \(y^n\mapsto\theta(y^n)\) are needed to preserve joint
continuity of the resulting non-conformity measure.}

\textbf{Non-Conformity Measure.}
For a training set $y^n=(y_1,\dots,y_n)\in Y^n$ and a candidate point
$y\in Y$, define the non-conformity measure
\begin{align}\label{neur-net}
    \psi_\theta(y^{n},y)
   = 
  \biggl\|
        \varphi_\theta(y)
         - 
        \frac1n\sum_{i=1}^n\varphi_\theta(y_i)
    \biggr\|_2^{2}.
\end{align}

It satisfies the following properties,

\begin{itemize}
  \item \emph{Permutation invariance:} The sample mean is symmetric in
        $y_1,\dots,y_n$.
  \item \emph{Joint continuity:} Composition, finite sums,
        Euclidean norm, and squaring preserve continuity;
        Thus $(y^{n},y)\mapsto\psi_\theta(y^{n},y)$ is jointly
        continuous on $Y^{n} \times Y$.
\end{itemize}



Hence, $\psi_\theta\in\mathscr{F}\subset C(Y^n \times Y)$. Consonance does not automatically follow from the continuity of \(\psi_\theta\). Under
compactness, upper semicontinuity ensures that the supremum of
\(y\mapsto\pi(y,y^n)\) is attained, but the attained value need not be equal
to \(1\); Thus, consonance should be verified separately when the
possibility/credal set interpretation is required. One possible way to enforce it is to normalize the transducer by setting
\[
\widetilde\pi(y,y^n)
=
\frac{\pi(y,y^n)}{\sup_{z\in Y}\pi(z,y^n)}
\]
whenever the denominator is positive \citep[Section 7]{cella2021valid}. This gives
$\sup_{y\in Y}\widetilde\pi(y,y^n)=1$.
However, the normalized region satisfies
\[
\{y\in Y:\widetilde\pi(y,y^n)>\alpha\}
=
\left\{
y\in Y:
\pi(y,y^n)>
\alpha\sup_{z\in Y}\pi(z,y^n)
\right\},
\]
so normalization generally changes the fixed-\(\alpha\) conformal region. We
therefore treat normalization as a separate device for obtaining a consonant
possibility representation, not as part of the basic conformal construction.

\textbf{Statistical Machine Learning Relevance.} This example bears the following implications for (Statistical) Machine Learning.
\begin{enumerate}
  \item \emph{Prototype learning:}
        \eqref{neur-net} is the squared Euclidean distance, in the representation
        space, from the candidate point to the empirical prototype
        \(\frac1n\sum_{i=1}^n\varphi_\theta(y_i)\).

  \item \emph{Out-of-Distribution (OoD) detection:}
        The same distance can be used as a prototype distance OoD score in a
        learned embedding space.

  \item \emph{Representation-based conformal prediction:}
        If \(\varphi_\theta\) is a fixed feature map, for instance the
        penultimate layer of a trained network, then \eqref{neur-net} yields a
        continuous non-conformity score suitable for the topological framework
        of Theorem \ref{conf-pred-diagr}.
\end{enumerate}
As we can see, this choice of $\psi_\theta$ simultaneously fulfills joint continuity
and practical usefulness in modern ML workflows. \hfill $\diamond$
\end{example}

More generally, to ensure that the hypotheses of Theorem
\ref{conf-pred-diagr} are satisfied, it is enough for the practitioner to
\begin{itemize}
    \item work on a compact Hausdorff outcome space \(Y\), or restrict attention
    to a compact prediction domain, so that \ref{H1} is met;

    \item choose a non-conformity measure \(\psi\) that is invariant to
    permutations in its first argument and jointly continuous in its arguments,
    so that \ref{H2} is met;

    \item choose \(\alpha\notin S_{n+1}\), so that \ref{H3} is met;

    \item verify consonance, or explicitly state the normalization or tie-breaking device
used to enforce it when the consonant possibility/credal set interpretation is required.
\end{itemize}

\subsection{Implications}

Theorem \ref{conf-pred-diagr} clarifies a subtle but important point about what is (and is not) intrinsic in Conformal Prediction.
Conformal Prediction Regions provide a \emph{calibrated representation} of predictive uncertainty (via set membership at level $\alpha$),
yet the region alone does not come equipped with a canonical \emph{cardinal} scale of ``amount of uncertainty''.
The reason is that the CP region depends only on the \emph{ordering} of the non-conformity measures: For any strictly increasing transform $f$ (applied pointwise) and any $y^n \in Y^n$,
\[
\kappa_\alpha(y^n,\psi)=\kappa_\alpha(y^n,f\circ\psi).
\]
Consequently, numerical summaries of a region---such as its diameter $\mathrm{diam}(\mathscr R)$ or its volume---are only meaningful \emph{relative} to
additional analyst choices (a metric or reference measure on $Y$).
They do not by themselves define an intrinsic unit in which uncertainty can be compared across problems or across modeling choices \citep{pmlr-v216-sale23a}.
In particular,
\[
\mathrm{diam}(\mathscr{R}_1)=3 \mathrm{diam}(\mathscr{R}_2) \not\Rightarrow\
\text{``$\mathscr{R}_1$ is three times more uncertain than $\mathscr{R}_2$''},
\]
unless one commits to a geometry on $Y$ and to an interpretation of this geometry as a utility-relevant scale.

The commuting factorization $\kappa_\alpha=\IHDR_\alpha\circ\CRED$ reveals that CP nevertheless carries a richer, implicitly probabilistic layer.
Indeed, computing a CP region is equivalent to (i) forming the credal set $\mathcal{M}(\overline{\Pi})$ and (ii) extracting its IHDR.
Because $\mathcal{M}(\overline{\Pi})$ lives in a space of probability measures, one can attach to it uncertainty functionals with clear operational
meaning, such as lower/upper probabilities of events, robust (lower/upper) expected losses for actions, and metric-type summaries of
imprecision  \citep{abellan,Hullermeier2021Aleatoric,alireza2,hofman2024quantifyingaleatoricepistemicuncertainty,chau2025integralimpreciseprobabilitymetrics}.
These are \emph{cardinal} in the sense that they support meaningful arithmetic once a decision problem (loss) or a probability metric is specified, see Example \ref{ex:uncertainty-beyond-size} inspired by \citet{martin2025decisionmakingpossibilisticinferentialmodels}.
In this paper the focus is on the marginal (exchangeability-based) setting, and the resulting summaries naturally reflect the marginal nature of
the CP guarantee---see equation \eqref{eq_imp6}.

\begin{example}[Uncertainty Beyond Size: Same CPR Set Size, Different Robust Conclusions]
\label{ex:uncertainty-beyond-size}
Let $Y=\{1,2,3\}$ be a finite label space. Suppose a (normalized, thus satisfying consonance) conformal transducer
$\pi(\cdot,y^n):Y\to[0,1]$ is available for a fixed training sample $y^n$ and choice of non-conformity measure $\psi$. For ease of exposition, the numerical values below should be read as an illustrative possibility profile; In a finite-sample Full CP construction they would lie on the grid \(S_{n+1}=\{0,1/(n+1),\ldots,1\}\), or be approximated by grid-compatible values.
Define the induced possibility function
\(
\overline \Pi(A)=\sup_{y\in A}\pi(y,y^n),\text{ } A\subseteq Y,
\)
and the associated credal set
\[
\mathcal M(\overline\Pi)=\bigl\{P: P(A)\le \overline\Pi(A), \quad   \forall A\subseteq Y\bigr\}.
\]
The conjugate lower probability is given by $\underline \Pi(A)=1-\overline\Pi(A^c)$, for all $A\subseteq Y$.
For a level $\alpha\in(0,1)$, the conformal prediction set is the $\alpha$-cut
\(
\kappa_\alpha(y^n,\psi)=\{y\in Y: \pi(y,y^n)>\alpha\}.
\)

\textbf{Case A (One Label Robustly Dominant).}
Let
\[
\pi(1,y^n)=1,\qquad \pi(2,y^n)=0.12,\qquad \pi(3,y^n)=0.08,
\]
and take $\alpha=0.10$. Then, $\kappa_\alpha(y^n,\psi)=\{1,2\}$, so $|\kappa_\alpha(y^n,\psi)|=2$.
For the event $A=\{1\}$,
\[
\underline \Pi(\{1\})
=1-\overline \Pi(\{2,3\})
=1-\max\{0.12,0.08\}
=0.88,
\qquad
\overline \Pi(\{1\})=1,
\]
hence $P(Y=1)\in[0.88,1]$ for all $P\in \mathcal M(\overline \Pi)$.
If a single-label action $a\in Y$ is assessed by $0$-$1$ loss
$L(a,y)=\mathbbm 1[\{a\neq y\}]$, then the worst-case risk of predicting label $1$ is
\[
\sup_{P\in \mathcal M(\overline \Pi)}\mathbb E_P[L(1,Y)]
=1-\inf_{P\in \mathcal M(\overline \Pi)}P(\{1\})
=1-\underline \Pi(\{1\})
=0.12.
\]

\textbf{Case B (Same CP Set Size, but Substantially More Ambiguity).}
Now let
\[
\pi'(1,y^n)=1,\qquad \pi'(2,y^n)=0.55,\qquad \pi'(3,y^n)=0.08,
\]
with the same $\alpha=0.10$. Again $\kappa'_\alpha(y^n,\psi)=\{1,2\}$, so $|\kappa'_\alpha(y^n,\psi)|=2$.
However,
\[
\underline \Pi'(\{1\})
=1-\overline\Pi'(\{2,3\})
=1-\max\{0.55,0.08\}
=0.45,
\qquad
\overline \Pi'(\{1\})=1,
\]
so $P(Y=1)\in[0.45,1]$ for all $P\in \mathcal M(\overline \Pi')$, and the worst-case $0$-$1$ risk of
predicting label $1$ is
\[
\sup_{P\in \mathcal M(\overline \Pi')}\mathbb E_P[L(1,Y)]
=1-\underline \Pi'(\{1\})
=0.55.
\]

Both cases yield conformal prediction sets of the same size,
$|\kappa_\alpha(y^n,\psi)|=|\kappa'_\alpha(y^n,\psi)|=2$, yet the credal layer distinguishes them sharply
through lower/upper probabilities and robust expected losses. This illustrates how
the credal representation captures \emph{uncertainty beyond size}. \hfill $\diamond$
\end{example}

Furthermore, the decomposition through $\mathcal{M}(\overline{\Pi})$ suggests a principled route to separating different sources of predictive
uncertainty \citep{cabezas2025epistemicuncertaintyconformalscores,10.1145/3649329.3663512,hanselle2025conformal,chau2026quantifyingepistemicpredictiveuncertainty}. As a working intuition, analyst-controlled choices---the significance level $\alpha$, which fixes the tolerated error, the score $\psi\in\mathscr F$ and, more generally, the modeling constraints
implicit in $\mathscr F$---tend to affect the \emph{reducible} component of uncertainty (insofar as they shrink or enlarge the set of admissible
predictive laws), whereas irreducible uncertainty is tied to the intrinsic variability of the exchangeable process encoded in $\mathfrak P$.
A precise disentanglement, however, depends on the choice of uncertainty functional and on additional structure, and we therefore treat this point
as interpretative rather than universal.

Thus, Conformal Prediction’s capacity for predictive uncertainty quantification is not an add-on, but a consequence of the categorical
factorization exhibited by Theorem \ref{conf-pred-diagr}. This perspective expands the imprecise probabilistic route to UQ developed in \citet{caprio2025conformalpredictionregionsimprecise} and 
\citet{cella2022validity,cella2023possibility} by showing that the credal layer is already present in CP through the commuting diagram. An in-depth study of some of the properties of $\mathbf{UHCont}$ is carried out in Appendix C in the Supplementary Material.

\textbf{On the meaning of the induced credal set.}
 Since $\mathcal M(\overline\Pi)$ is constructed mechanically from the conformal transducer, one may ask what it means as an uncertainty object.
We emphasize that $\mathcal M(\overline\Pi)$ is generally \emph{not} intended to represent subjective degrees of belief or analyst-specified uncertainty.
Rather, it is an \emph{inferential} (data-driven) imprecise predictive model: It collects the probability measures on $Y$ that are setwise dominated by the
upper probability $\overline\Pi$ induced by the conformal transducer.
In this sense, $\mathcal M(\overline\Pi)$ summarizes the information that CP produces about future outcomes, but in a probabilistic language that supports
decision-relevant functionals.
This semantics is different in spirit from classical credal sets motivated by prior ignorance, even though the mathematical object is the same.

\subsection{A Category Dealing With Measurability}

 We now briefly introduce structure $\mathbf{WMeas}_{\text{uc}}$, which is explored in depth in Appendix A in the Supplementary Material. There, we prove that it is indeed a category, and that the Full CP Diagram commutes in $\mathbf{WMeas}_{\text{uc}}$ as well, under essentially the same conditions of Theorem \ref{conf-pred-diagr}.

\begin{definition}[Structure $\mathbf{WMeas}_{\text{uc}}$]
\label{meas-corr-def}
Define a structure $\mathbf{WMeas}_{\text{uc}}$ as follows,
\begin{itemize}
  \item \textbf{Objects:} Compact Polish spaces $(X,\Sigma_X)$, with $\Sigma_X=\mathcal{B}(X)$, the Borel $\sigma$-algebra.
  \item \textbf{Morphisms:} Weakly measurable (see Def. 14 in Appendix D in the Supplementary Material), uniformly compact-valued (see Def. 15 in Appendix D in the Supplementary Material) 
        correspondences $\Phi: X\rightrightarrows Y$.
\end{itemize}
\end{definition}


The fact that the Full Conformal Prediction Diagram commutes in both $\mathbf{UHCont}$ and $\mathbf{WMeas}_{\text{uc}}$
highlights that, under our assumptions, CP admits a coherent set-valued interpretation that is simultaneously
\emph{stable} and \emph{probabilistically well-posed}. The $\mathbf{UHCont}$ result provides a robustness guarantee for the
prediction correspondence (upper hemicontinuity), while the $\mathbf{WMeas}_{\text{uc}}$ result ensures weak measurability and
compact-valuedness, so that CP defines a random compact set and supports the study of probabilistic functionals of the region.
This joint structure is precisely what makes the credal set factorization operational: It enables uncertainty summaries with
clear probabilistic and decision theoretic meaning, in addition to the usual coverage guarantee.

\section{Unifying Bayesian, Frequentist, and Imprecise Predictive Reasoning}\label{unifying}




In this section, we connect the categorical factorization of conformal prediction to Bayesian prediction.
While Bayesian prediction regions (e.g. posterior predictive density level sets) are not, in general, frequentist-calibrated at finite sample sizes,
they provide a natural reference notion of ``high probability'' predictive content under a model.
Our goal is to show that when the conformal score is chosen to reflect the Bayesian posterior predictive density and the predictive surface is stable,
conformal prediction acts as a finite-sample validity wrapper around this Bayesian target: It preserves distribution-free coverage for every $n$,
while becoming asymptotically equivalent to the Bayesian prediction region. In the limit, the Bayesian density-level route, the direct conformal route, and
the indirect credal-IHDR route therefore become equivalent as predictive
procedures, not merely close as sets.

Formally, we establish an asymptotic commutativity result in $\mathbf{UHCont}$ for a diagram connecting Bayesian, conformal, and imprecise probabilistic prediction procedures.
We then quantify the Hausdorff convergence rate under local empirical process and boundary regularity assumptions, and identify conditions under which upper posterior constructions are related to e-posteriors.
Since the diagram extends the Full CP Diagram of Theorem \ref{conf-pred-diagr}, we retain assumptions \ref{H1}-\ref{H5}.

We mention in passing that connections between Bayesian and Conformal Prediction methods have been discussed previously, see e.g. \citet{Hoff2023BayesOptimal}.
More broadly, links between Bayesian, frequentist, and imprecise probabilistic viewpoints in statistical inference have been developed in a series of papers by
\citet{martin2021impreciseprobabilisticcharacterizationfrequentiststatistical,
martin2023validefficientimpreciseprobabilisticinference,
martin2025nopriorbayesreimaginedprobabilistic}.
Our contribution is complementary: We approach these connections through a categorical lens, expressing them as 
commutative diagrams of set-valued statistical functionals.

\subsection{Setup}
Let $(\Theta,\Sigma_\Theta)$ be a measurable parameter space, and call $\lambda_\Theta$ a fixed $\sigma$-finite Borel measure on $(\Theta,\Sigma_\Theta)$. Call $\Delta_\Theta^{\mathrm{dens}}$
the space of finitely additive probability measures on $\Theta$ admitting a density with respect to $\lambda_\Theta$ \citep{bell1988separation}; 
We impose this requirement---which effectively restricts our attention to countably additive probability measures---to align with Section \ref{main}. There, we take $\Delta_Y$ to be the space of finitely additive probability measures on $Y$, following the convention in IP theory and ensuring maximal generality of our results.


Endow $\Delta_\Theta^{\mathrm{dens}}$ with the $L^{1}(\lambda_\Theta)$ topology. Denote by $\Delta_Y^\text{dens}$ the space of finitely additive probability measures on $Y$ that admit a density with respect to a $\sigma$-finite dominating (atomless) measure $\lambda_Y$.
Consider now the following conditions,

\begin{enumerate}
\renewcommand{\labelenumi}{(\roman{enumi})}
  \renewcommand{\theenumi}{(\roman{enumi})}
  \setcounter{enumi}{4}
  \item\label{H6} \emph{Measurability.}
There exist de Finetti kernels $k_\iota:\Theta\times Y^\iota\to\mathbb{R}_+$, $\iota\in\{n,n+1\}$,
such that, under parameter $\theta$, the joint density of $y^\iota$ is
$k_\iota(\theta,y^\iota)$ (a density on $Y^\iota$ with respect to $\lambda_Y^{\otimes \iota}$).\footnote{Since \(k_n\) and \(k_{n+1}\) are joint densities belonging to the same
dominated predictive system, we implicitly assume the usual consistency relation
$\int_Y k_{n+1}(\theta,(y^n,y)) \lambda_Y(\mathrm dy)
=
k_n(\theta,y^n)$
for \(\lambda_\Theta\)-almost every \(\theta\) and relevant \(y^n\).}
For each $\iota\in\{n,n+1\}$ and each $y^\iota\in Y^\iota$, $\theta\mapsto k_\iota(\theta,y^\iota)$ is
$\Sigma_\Theta$-measurable, and for each $\theta\in\Theta$, $y^\iota\mapsto k_\iota(\theta,y^\iota)$ is Borel.
The posterior predictive is
\[
  p_P(y_{n+1}\mid y^n)=
  \frac{\int_\Theta k_{n+1}(\theta,(y^n,y_{n+1})) P(\mathrm d\theta)}
       {\int_\Theta k_n(\theta,y^n) P(\mathrm d\theta)},
\quad P\in\Delta_\Theta^{\mathrm{dens}}.
\]
  \item\label{H7} \emph{Pointwise $y$-continuity a.e.}
        For $\lambda_\Theta$-almost all $\theta$, the maps
        $y^\iota\mapsto k_\iota(\theta,y^\iota)$ are continuous on $Y^\iota$, for $\iota \in\{n,n+1\}$.
  \item\label{H8} \emph{Integrable envelope.}
        There exists $M(\theta) \equiv M\in L^\infty(\lambda_\Theta)$ with
        \(k_\iota(\theta,y^\iota) \le M(\theta)\) for every $y^\iota\in Y^\iota$, $\iota \in\{n,n+1\}$, and $\lambda_\Theta$-almost all $\theta$.
  \item\label{H10} \emph{Positive marginal likelihood.}
        For every $y^{n}\in Y^{n}$ and every admissible prior probability measure $P\in \Delta_\Theta^{\mathrm{dens}}$,
        \[
          D_P(y^{n})
            \coloneqq \int_\Theta
                 k_n(\theta,y^n)
                 P(\text{d}\theta) = \int_\Theta
                 k_n(\theta,y^n)
                 p(\theta)\lambda_\Theta(\text{d}\theta)
             > 0.
        \]
\end{enumerate}

For $P\in\Delta_\Theta^{\mathrm{dens}}$ having density $p$, and data $y^{n}\in Y^{n}$, set
\[
  p_P\left(y_{n+1}\mid y^{n}\right)
     \coloneqq
     \frac{N_P(y_{n+1},y^{n})}{D_P(y^{n})},
\]
where $N_P(y_{n+1},y^{n})\coloneqq\int_\Theta k_{n+1}(\theta,(y^n,y_{n+1}))
p(\theta)\lambda_\Theta(\text{d}\theta)$. 
By \ref{H10}, it is well-defined.

Because $y\mapsto p_P(y\mid y^{n})$ is continuous (as a result of Lemma \ref{bayes-cont} below) and
$Y^{n+1}$ is compact by \ref{H1}, the natural codomain for such a map is the Banach space $C(Y^n \times Y)$ with norm $\|f\|_\infty\coloneqq\sup_{(y^n,y)\in Y^n \times Y}|f(y^n,y)|$.

Define
\begin{equation}\label{bcp-eq-def}
    \mathsf{BCP}: 
  Y^{n}\times\Delta_\Theta^{\mathrm{dens}}
  \rightarrow
  Y^{n}\times C(Y^n \times Y),
\quad
  \mathsf{BCP}\bigl(y^{n},P\bigr)
     \coloneqq\bigl(y^{n},\psi_{(y^{n},P)}\bigr),
\end{equation}
where
\[
  \psi_{(y^{n},P)}(y_{n+1})
      \coloneqq- p_P(y_{n+1}\mid y^{n}).
\]
Here we abuse notation, and write $\psi_{(y^{n},P)}(y_{n+1}) \equiv \psi_{P}(y^{n},y_{n+1})$ to highlight the dependence of the non-conformity measure on the observed data $y^n$. Formally, for each \(P\), \(\psi_P\) denotes the function on
\(Y^n\times Y\) given by \((y^n,y)\mapsto -p_P(y\mid y^n)\); The notation above
records its evaluation at the observed sample \(y^n\).

For \(\psi_P\) to belong to \(\mathscr F\), the map
$y^n\mapsto p_P(y\mid y^n)$
must be invariant under permutations of \(y^n\), for every fixed \(y\in Y\).
This holds, for instance, when the de Finetti kernels \(k_n\) and \(k_{n+1}\)
are symmetric in the observed coordinates \(y_1,\ldots,y_n\), as is the case
for conditionally i.i.d. models.

We call \(\mathsf{BCP}\) the function in \eqref{bcp-eq-def}, an acronym for
Bayesian Conformal Prediction \citep{fong-bayes,snell2025conformal,wu2026bayesianconformalpredictiondecision}.

In practice, conditions \ref{H6}-\ref{H10} hold in many standard dominated Bayesian models. First, in standard dominated parametric families (Gaussian, exponential families,
finite mixtures with continuous component densities), the likelihood kernel
$f_\theta(y)$ is continuous in $y$, and the joint density
$k_n(\theta,y^n)=\prod_{i=1}^n f_\theta(y_i)$ is Borel in $y^n$ and
$\Sigma_\Theta$-measurable in $\theta$, so \ref{H6} holds.

Condition \ref{H7} ensures that, as the data $y^n$ vary, the integrands in $N_P$
and $D_P$ vary continuously (for $\lambda_\Theta$-almost all $\theta$), so dominated
convergence applies. In practice, standard kernels (Normal, Laplace, Student-$t$,
etc.) are smooth in $y$.

The uniform-in-\(y\) bound in \ref{H8} is a clean sufficient domination condition guaranteeing that limits can be interchanged with integration uniformly over \((y^n,y)\),
which is what yields continuity of \(\mathsf{BCP}\) in the sup-norm.
Because \(Y\) is compact by \ref{H1} and \(k_\iota(\theta,\cdot)\) is continuous, one can define
\[
M_n(\theta)\coloneqq \sup_{y^n\in Y^n} k_n(\theta,y^n),\qquad
M_{n+1}(\theta)\coloneqq \sup_{y^{n+1}\in Y^{n+1}} k_{n+1}(\theta,y^{n+1}).
\]
Assumption \ref{H8} requires these suprema to be dominated by an essentially bounded function of \(\theta\). This is automatic in bounded kernel models, or after restricting the parameter space so that the kernels are uniformly bounded. It may fail in models with unbounded likelihoods or noncompact scale parameters, in which case one should replace \ref{H8} by a prior-dependent integrable-envelope condition and use a compatible weighted topology.\footnote{For example, one may ask that \(M\in L^1\) and topologize \(\Delta_\Theta^{\mathrm{dens}}\) with the weighted topology \(L^1(M \mathrm d\lambda_\Theta)\).}

We remark that Assumption \ref{H8} can be relaxed in standard ways, for instance by requiring an integrable envelope under the relevant priors,
\[
\int_\Theta M(\theta) P(\mathrm d\theta)<\infty,
\]
together with a topology on $\Delta_\Theta^{\mathrm{dens}}$ compatible with this envelope (e.g. a weighted $L^1$ topology).
We adopt \ref{H8} for simplicity and to keep the analytic argument transparent. 

Since $N_P$ involves $k_{n+1}$, the prior integrability condition
$\int M(\theta) P(\mathrm d\theta)<\infty$ guarantees finiteness and continuity
of the predictive. This is usually automatic with common priors when the
envelope $M$ has at most moderate growth in $\theta$ (e.g.  exponential family
models with conjugate priors).

Finally, \ref{H10} is an admissibility condition that rules out division by zero in
$p_P(y_{n+1}\mid y^n)=N_P(y_{n+1},y^n)/D_P(y^n)$.
It requires the prior \(P\) to assign positive mass to parameter values for which
the observed data have positive likelihood; Equivalently,
$P(\{\theta\in\Theta:k_n(\theta,y^n)>0\})>0$.
This condition is automatic when \(k_n(\theta,y^n)>0\) for all \(\theta\) in the
support of every admissible prior under consideration, but in general it should be
understood as part of the definition of the admissible prior class.


In short, \ref{H6}-\ref{H10} are sufficient (and close to necessary in general) to ensure the posterior predictive is well defined
and depends continuously on both the data $y^n$ and the prior $P$; They are met in a wide range of practical models.

Before the main result of this section, we introduce two further conditions. The first concerns the distribution function induced by the posterior predictive density levels.
Recall that we denote by $\Delta_Y^\text{dens}$ the space of finitely additive probability measures on $Y$ that admit a density with respect to a $\sigma$-finite dominating (atomless) measure $\lambda_Y$.

\begin{enumerate}
\renewcommand{\labelenumi}{(\roman{enumi})}
  \renewcommand{\theenumi}{(\roman{enumi})}
  \setcounter{enumi}{8}
  \item\label{H11} \emph{Uniform well-behaved density-level quantile.}
Fix \(\alpha\in(0,1)\). For each \((y^n,P)\in
Y^n\times\Delta_\Theta^{\mathrm{dens}}\), define
\[
F(c;y^n,P)
\coloneqq
\int_{\{y:p_P(y\mid y^n)\le c\}}
p_P(y\mid y^n)\lambda_Y(\mathrm dy),
\qquad
c(\alpha,y^n,P)
\coloneqq
\inf\{c:F(c;y^n,P)\ge\alpha\}.
\]
For every \((y^n,P)\), the map \(c\mapsto F(c;y^n,P)\) is continuous and
strictly increasing in a neighborhood of \(c(\alpha,y^n,P)\), and the
\(\alpha\)-quantile \(c(\alpha,y^n,P)\) is locally unique. Moreover, the map
$(y^n,P)\mapsto c(\alpha,y^n,P)$
is continuous.

Finally, for each fixed \(P\in\Delta_\Theta^{\mathrm{dens}}\) to which the
asymptotic theorem is applied, let \(c_n\coloneqq c(\alpha,y^n,P)\). There exist
constants \(\delta_0>0\), \(L>0\), and events \(E_n\) with
\(\mathfrak P(E_n)\to1\), such that on \(E_n\), for all \(t\in[0,\delta_0]\),
\[
F(c_n+t;y^n,P)\ge \alpha+Lt,
\qquad
F(c_n-t;y^n,P)\le \alpha-Lt.
\]
\end{enumerate}



Let us add a brief discussion on condition \ref{H11}. This condition has two
roles. First, it ensures that the density level quantile
\(c(\alpha,y^n,P)\) is locally well defined and continuous in \((y^n,P)\).
Second, it imposes a local margin condition around the target level, ruling out
cases in which the density level distribution \(F(\cdot;y^n,P)\) becomes nearly
flat near \(c(\alpha,y^n,P)\).

In many continuous Bayesian models, the posterior predictive density
\(y\mapsto p_P(y\mid y^n)\) is continuous and strictly positive on compact \(Y\).
Then, at any level \(c\) such that
\(\lambda_Y(\{y:p_P(y\mid y^n)=c\})=0\), the map 
\[
F(c;y^n,P)
=
\int_{\{p_P(\cdot\mid y^n)\le c\}}
p_P(y\mid y^n)\lambda_Y(\mathrm dy)
\]
is continuous at \(c\). Moreover, \(F\) is strictly increasing locally around
\(c_0\), whenever the predictive density takes values on both sides of \(c_0\)
with positive \(\lambda_Y\)-mass; For instance, when there exists
\(\varepsilon>0\) such that
\[
\lambda_Y(\{c_0-\varepsilon<p_P(y\mid y^n)\le c_0\})>0,
\qquad
\lambda_Y(\{c_0<p_P(y\mid y^n)\le c_0+\varepsilon\})>0.
\]
If this no-plateau/no-gap behavior persists locally for
\((\tilde y^n,\tilde P)\) near \((y^n,P)\), then the local uniqueness and
continuity of \(c(\alpha,y^n,P)\) are natural. The margin part of \ref{H11},
however, is stronger: It requires that, along the random sequence
\(c_n=c(\alpha,y^n,P)\), the increase of \(F\) around \(c_n\) is bounded below
linearly with high probability. This rules out sequences of problems in which
\(F\) remains strictly increasing but its local slope degenerates as
\(n\to\infty\).

Condition \ref{H11} may fail if: (a) \(p_P\) has a plateau of positive
\(\lambda_Y\)-measure at the target level, so \(F\) is locally flat; (b) the posterior predictive law is not absolutely continuous with respect to \(\lambda_Y\), so atoms
create jumps of the density level distribution; (c) the predictive density has a
gap in its range near \(c(\alpha,y^n,P)\), for instance on disconnected outcome
spaces; or (d) the local slope of \(F\) near \(c_n\) tends to zero along the
sequence, violating the uniform margin requirement.



The last condition that we need is the following.

\begin{enumerate}
\renewcommand{\labelenumi}{(\roman{enumi})}
  \renewcommand{\theenumi}{(\roman{enumi})}
  \setcounter{enumi}{9}
  \item\label{H12} \emph{Leave-One-Out (LOO) stability and conditional Glivenko-Cantelli (GC).}
Recall that $\mathfrak{P}$ denotes the sampling law of the exchangeable process $(\mathcal Y_i)_{i\ge1}$. For $(y^n,P)$ and $y\in Y$, define
\[
r_n(y) \coloneqq  p_P(y\mid y^n),\qquad
r_{n,-i}(y) \coloneqq  p_P(y\mid y^{n+1}_{-i}),\quad i \in \{1,\ldots,n\}.
\]

\begin{enumerate}
\item[(a)] \emph{LOO stability.}
\[
\mathsf{err}_n(y^n,P) \coloneqq 
\sup_{y\in Y} \max_{i \in \{1,\ldots,n\}} 
\bigl| r_{n,-i}(y)-r_n(y) \bigr| = o_{\mathfrak{P}}(1).
\]
\item[(b)] \emph{Conditional GC.}
Let
\[
F_n(t;y^n,P) \coloneqq \frac{1}{n}\sum_{i=1}^n
\mathbbm{1}[\{ r_n(y_i)\le t\}]
\quad\text{and}\quad
F(\cdot ;y^n,P) \text{ as in \ref{H11}.}
\]
We have
\[
\sup_{t\in\mathbb{R}}\bigl|F_n(t;y^n,P)-F(t;y^n,P)\bigr|
=o_{\mathfrak{P}}(1).
\]
\end{enumerate}
\end{enumerate}

A natural question is why condition \ref{H12} is needed. The reason is that the
full conformal score involves two sources of randomness that are not controlled
by the analytic assumptions \ref{H1}-\ref{H11}. For a candidate value \(y\), full
CP compares
\[
r_{n,-i}(y_i)
\qquad\text{with}\qquad
r_n(y),
\]
where \(r_{n,-i}\) is the posterior predictive density computed after leaving
out the \(i\)-th observation from the augmented sample \((y^n,y)\). Thus, unlike \(r_n(y_i)\), the leave-one-out term \(r_{n,-i}(y_i)\) depends on
the candidate \(y\) through the augmented sample \(y^{n+1}_{-i}\).

To recover the population density-level threshold defined by
\(F(\cdot;y^n,P)\), we need the approximation
\[
\frac1n\sum_{i=1}^n
\mathbbm 1[\{r_{n,-i}(y_i)\le r_n(y)\}]
\approx
\frac1n\sum_{i=1}^n
\mathbbm 1[\{r_n(y_i)\le r_n(y)\}]
\approx
F(r_n(y);y^n,P),
\]
uniformly in \(y\). The first approximation is precisely the leave-one-out
stability condition \ref{H12}.(a). The second is the conditional
Glivenko-Cantelli condition \ref{H12}.(b).

The second approximation is slightly delicate because the sublevel sets are
random: The function \(r_n\) is itself computed from the same data \(y^n\).
Thus the index class
\[
\{\mathbbm 1[\{r_n(\cdot)\le t\}]:t\in\mathbb R\}
\]
is data-dependent. Assumption \ref{H12}.(b) makes this dependence explicit. In
settings where such dependence is difficult to handle directly, one can enforce
a similar condition by sample splitting, at the cost of some efficiency. We keep
the full sample formulation because it is conceptually closer to full CP.

Conditions like \ref{H12} are natural in regular settings, but they are genuine
asymptotic assumptions, not consequences of the topological hypotheses
\ref{H1}-\ref{H11}. For example, in regular parametric Bayesian models, such as
smooth exponential families on compact \(Y\) with suitably regular priors, the
posterior predictive often depends smoothly on empirical sufficient statistics.
Then deleting one observation changes the predictive surface by
\(O_{\mathfrak P}(1/n)\) uniformly in \(y\), giving \ref{H12}.(a).

For \ref{H12}.(b), one needs the empirical distribution of the observed scores
\(r_n(y_i)\) to be asymptotically compatible with the posterior predictive law
induced by \(r_n\). This is expected in well-specified regular dominated models
when \(r_n\) converges uniformly in probability to a continuous limit \(r\), the
induced predictive laws \(Q_n\) converge in probability to a law \(Q\) with
density \(r\), and a uniform law of large numbers holds over the relevant
sublevel sets. If, in addition, there are no plateaus at the target levels,
equivalently \(Q(\{r=t\})=0\) for the relevant thresholds \(t\), then
\ref{H12}.(b) follows. Strengthening the convergence statements from convergence
in probability to almost sure convergence upgrades the corresponding conclusions
from ``in probability'' to ``almost surely''.

\subsection{Main Result}
Call $\mathsf{QUANT}: [0,1] \times Y^n \times \Delta_\Theta^\text{dens} \rightrightarrows Y$ the set-valued map that extracts the $\alpha$-level set of the posterior predictive distribution. That is,

$$(\alpha,y^n,P) \mapsto \mathsf{QUANT}(\alpha,y^n,P)
    \coloneqq H\bigl(c(\alpha,y^n,P), y^n,P\bigr),$$
where 
$$H:\R\times Y^n\times\Delta_\Theta^{\mathrm{dens}}
     \rightrightarrows Y,
  \qquad
  H\bigl(c,y^n,P\bigr)
    \coloneqq \{ y\in Y: p_P(y\mid y^n)\ge c\}.$$

The following is the main result of this section.

\begin{theorem}[Unifying Bayes, Conformal, and Imprecise Prediction in \(\mathbf{UHCont}\)]
\label{unifying-thm}
Let \(Y\) be compact metric and \(\lambda_Y(Y)<\infty\). Assume the
standing hypotheses \ref{H1}, \ref{H2}, \ref{H5}-\ref{H12}, plus
consonance for the conformal transducers generated by the Bayesian conformal
scores. Fix \(P\in\Delta_\Theta^{\mathrm{dens}}\) and fix
\(\alpha\in(0,1)\). For each \(n\), write
\(r_n(y)\coloneqq p_P(y\mid y^n)\),
\(c_n\coloneqq c(\alpha,y^n,P)\), and
$Q_n\coloneqq \{y\in Y:r_n(y)\ge c_n\}
=\mathsf{QUANT}_\alpha(y^n,P)$.
Assume, in addition, the Hausdorff stability of predictive density level sets. That is, 
for \(0<\delta<\delta_0\), define
\(A_n^+(\delta)\coloneqq\{y\in Y:r_n(y)\ge c_n+\delta\}\) and
\(A_n^-(\delta)\coloneqq\{y\in Y:r_n(y)\ge c_n-\delta\}\). With the convention
that \(d_H(\emptyset,Q_n)=+\infty\), for every \(\varepsilon>0\),
\[
\lim_{\delta\downarrow0}\limsup_{n\to\infty}
\mathfrak P\left(
d_H(A_n^+(\delta),Q_n)
\vee
d_H(A_n^-(\delta),Q_n)
>\varepsilon
\right)=0.
\]
where $d_H(A_n^+(\delta),Q_n)\vee d_H(A_n^-(\delta),Q_n)
=
\max\{
d_H(A_n^+(\delta),Q_n),
d_H(A_n^-(\delta),Q_n)
\}$. Then,
\[
d_H \left(
\kappa_{\alpha}\bigl(y^n,\psi_{(y^n,P)}\bigr),
\mathsf{QUANT}_{\alpha}(y^n,P)
\right)
\xrightarrow{\mathfrak P}0.
\]
Consequently, using the exact conformal-credal factorization under consonance,
\[
d_H \left(
\widehat{\IHDR}_{\alpha}
\bigl(c_{\mathrm{CP}}(\mathsf{BCP}(y^n,P))\bigr),
\mathsf{QUANT}_{\alpha}(y^n,P)
\right)
\xrightarrow{\mathfrak P}0,
\]
where \(c_{\mathrm{CP}}\) denotes the credal-object map from Theorem
\ref{conf-pred-diagr}. Equivalently, the diagram
\[
\begin{tikzcd}[row sep=3em, column sep=4em]
& Y^n\times\mathscr F
    \ar[r,"c_{\mathrm{CP}}"]
    \ar[dr,"\kappa_\alpha"]
&
\mathscr M_{\mathrm{CP}}
    \ar[d,"\widehat{\IHDR}_\alpha"]
\\
Y^n\times\Delta_\Theta^{\mathrm{dens}}
    \ar[ur,"\mathsf{BCP}"]
    \ar[rr,"\mathsf{QUANT}_\alpha"']
&&
 Y
\end{tikzcd}
\]
commutes asymptotically in probability in the Hausdorff metric \(d_H\). The upper route agrees pointwise with the direct conformal route under
consonance; When the no-tie condition \ref{H3} is imposed, this agreement is an exact commutative
triangle in \(\mathbf{UHCont}\). The lower triangle commutes asymptotically in
probability.
\end{theorem}

If \(Y\) is compact metric, as in Theorem \ref{unifying-thm}, the set convergence
is naturally expressed in the Hausdorff distance \(d_H\). For more general
topological spaces, analogous statements can be formulated using Vietoris
convergence of closed subsets of \(Y\).

Let us also comment on the Hausdorff stability assumption imposed in Theorem
\ref{unifying-thm}. This assumption is related to \ref{H11}, but it is not
implied by it. Condition \ref{H11} controls the one-dimensional distribution of
density levels, ensuring that the cutoff \(c_n\) is stable, and that
\(F(\cdot;y^n,P)\) crosses \(\alpha\) non-flatly near \(c_n\). By contrast, the
Hausdorff stability assumption is geometric: It requires that small
perturbations of the cutoff \(c_n\) produce small perturbations, in Hausdorff
distance, of the superlevel sets \(\{r_n\ge c_n+\delta\}\),
\(\{r_n\ge c_n\}\), and \(\{r_n\ge c_n-\delta\}\). Thus it rules out zero-mass
but geometrically visible pathologies, such as isolated threshold points, thin
remote components, flat boundary pieces, or components that appear or disappear
under arbitrarily small changes of the threshold.

A standard sufficient condition, when \(Y\subseteq\mathbb R^d\), is that
\(r_n\) be \(C^1\) in a neighborhood of \(\{r_n=c_n\}\) and that, with
\(\mathfrak P\)-probability tending to one,
\(\inf_{y:r_n(y)=c_n}\|\nabla r_n(y)\|>0\). Then the regular level-set theorem
implies that nearby density superlevel sets move continuously, and locally
linearly, in Hausdorff distance as the threshold varies. This condition is
satisfied in many regular parametric predictive models away from critical
density levels; For instance, in smooth one-dimensional location families when
the derivative of \(r_n\) is nonzero at the boundary points of the density level
set, and in multivariate Gaussian or smooth elliptically contoured models when
the target density level is not the modal level.

At this point, one may ask whether it is possible to avoid this additional
Hausdorff-stability assumption; This is possible only if one changes either the
conclusion or the hypotheses. If the conclusion is weakened from
Hausdorff convergence to a measure-based notion of convergence, for instance
convergence in posterior predictive probability or in symmetric difference mass,
then the scalar regularity in \ref{H11} is often closer to sufficient. However,
Hausdorff convergence is sensitive to geometrically visible but small mass
features of the level set. Thus, to retain a \(d_H\)-conclusion, one needs some
geometric control of how the superlevel sets \(\{r_n\ge c\}\) move as \(c\)
varies. The explicit Hausdorff stability condition in Theorem
\ref{unifying-thm} could be replaced by stronger smoothness assumptions implying
it, such as \(C^1\)-regularity of \(r_n\) near the target boundary together with
a high probability lower bound on \(\|\nabla r_n\|\) along
\(\{r_n=c_n\}\), but some condition of this kind is needed for Hausdorff
convergence.

To prove Theorem \ref{unifying-thm}, we need first to introduce two ancillary lemmas. The first one is the following.

\begin{lemma}\label{bayes-cont}
Under conditions \ref{H1} and \ref{H6}-\ref{H10}, the following hold for every fixed $n\in \mathbb N$,

\begin{itemize}
    \item For every $P\in\Delta_\Theta^{\mathrm{dens}}$, the
        predictive map
        \(
            (y_{n+1},y^{n})\mapsto p_P(y_{n+1}\mid y^{n})
        \)
        is jointly continuous on $Y^n \times Y = Y^{n+1}$;
    \item The map $\mathsf{BCP}$ is continuous when $C(Y^n \times Y) \equiv C(Y^{n+1})$ is equipped with the sup-norm.
\end{itemize}
\end{lemma}

\begin{remark}[Compactness of $Y$]
    The compactness assumption on $Y$ \ref{H1} is primarily a technical device that simplifies the functional-analytic setting
(e.g. uniform norms and domination arguments). Extensions to non-compact spaces can be obtained by standard localization/truncation
arguments (or by working with locally compact $Y$ and appropriate weighted function spaces), but we do not pursue these refinements here. 
\end{remark}

\begin{proof}[Proof of Lemma \ref{bayes-cont}]
    Throughout, let $P\in\Delta_\Theta^{\mathrm{dens}}$ be fixed and write
$\text{d}P/\text{d}\lambda_\Theta=p$.

\textbf{Step 1. Continuity of the numerator $N_P$.}
Define
\[
  g(\theta,y_{n+1},y^{n})
    \coloneqq k_{n+1}(\theta,(y^n,y_{n+1}))p(\theta);
\]
We have $\lvert g\rvert\le p(\theta)M(\theta)$ by \ref{H8}.
Assumption \ref{H7} gives continuity of
$$(y_{n+1},y^{n})\mapsto g(\theta,y_{n+1},y^{n}),$$ 
for
$\lambda_\Theta$-almost all $\theta$.
The envelope $p M\in L^{1}(\lambda_\Theta)$ by \ref{H8}; Hence, 
dominated convergence yields continuity of
$(y_{n+1},y^{n})\mapsto N_P(y_{n+1},y^{n})$.

\textbf{Step 2. Continuity and positivity of the denominator $D_P$.}
The integrand in
\(
  D_P(y^{n})
\)
is $g$, but ``stopped'' at observation $n$, so the same argument as Step 1 shows that 
\(y^{n}\mapsto D_P(y^{n})\) is continuous;  Positivity follows from \ref{H10}.

\textbf{Step 3. Joint (and uniform) continuity of the predictive.}
Because $Y^{n+1}$ is compact by \ref{H1} and $D_P$ never vanishes,
\[
  p_P(y_{n+1}\mid y^{n})
      =\frac{N_P(y_{n+1},y^{n})}{D_P(y^{n})}
\]
is continuous in $(y_{n+1},y^{n})$ and, by compactness, uniformly
continuous.  This proves joint continuity of the map \(
            (y_{n+1},y^{n})\mapsto p_P(y_{n+1}\mid y^{n})
        \).

\textbf{Step 4. Continuity in the prior.}
Let $P_m\to P$ in $L^{1}(\lambda_\Theta)$, with densities
$p_m\coloneqq \text{d}P_m/\text{d}\lambda_\Theta$.
From \ref{H8},
\[
  \bigl\|N_{P_m}-N_{P}\bigr\|_\infty
     \le\int_\Theta \lvert p_m-p\rvert M \text{d}\lambda_\Theta
     \xrightarrow[m\to\infty]{}0,
\]
and similarly
\(
    \|D_{P_m}-D_P\|_\infty\to0.
\)
Because $D_P$ is bounded below on the compact set $Y^{n}$
(by Step 2 there is a constant
\(\underline d_P\coloneqq\min_{y^{n}}D_P(y^{n})>0\))
and $D_{P_m}$ converges uniformly, we have
\[
  \inf_{y^{n}}D_{P_m}(y^{n})\ge\frac12 \underline d_P,
  \quad\text{for all sufficiently large }m.
\]
Consequently
\[
  \bigl\|p_{P_m}( \cdot\mid y^{n})-p_{P}( \cdot\mid y^{n})\bigr\|_\infty
  \xrightarrow[m\to\infty]{}0
  \quad\text{uniformly in }y^{n}\in Y^{n},
\]
and division remains well behaved thanks to the common lower bound
\(\tfrac12 \underline d_P\).

\textbf{Step 5. Continuity of \(\mathsf{BCP}\).}
Fix a point \((y^n,P)\) in the domain. Let
\(y_m^n\to y^n\) in \(Y^n\) and \(P_m\to P\) in \(L^1(\lambda_\Theta)\), with
densities \(p_m=\mathrm dP_m/\mathrm d\lambda_\Theta\).

By Step 4, applied uniformly over \((z^n,z)\in Y^n\times Y\),
$$\sup_{(z^n,z)\in Y^n\times Y}
\left|
p_{P_m}(z\mid z^n)-p_P(z\mid z^n)
\right|
\to 0.$$
Equivalently,
$\|\psi_{P_m}-\psi_P\|_{\infty}
=
\sup_{(z^n,z)\in Y^n\times Y}
\left|
\psi_{P_m}(z^n,z)-\psi_P(z^n,z)
\right|
\to 0$ .
Since also \(y_m^n\to y^n\), it follows that
\[
\mathsf{BCP}(y_m^n,P_m)
=
(y_m^n,\psi_{P_m})
\to
(y^n,\psi_P)
=
\mathsf{BCP}(y^n,P)
\]
in \(Y^n\times C(Y^n\times Y)\). Hence \(\mathsf{BCP}\) is continuous.

We finish by indicating where each assumption enters the proof, and why some
replacement condition would generally be needed if it were removed.

\begin{itemize}
  \item \textbf{Role of \ref{H6} (measurability of $k_\iota$).}
        Without $\Sigma_\Theta$-measurability in $\theta$, the integrals defining
        $N_P$ and $D_P$ may not be Lebesgue integrals. Without Borel measurability
        in $y^\iota$, $(y^{n+1},y^n)\mapsto N_P(y_{n+1},y^n)$ and $y^n\mapsto D_P(y^n)$
        need not even be measurable, so continuity cannot hold.

  \item \textbf{Role of \ref{H7} (pointwise $y$-continuity a.e.).}
        Steps 1-2 in the proof require continuity in the data arguments:
        For $\lambda_\Theta$-almost all  $\theta$, $y^{\iota}\mapsto k_\iota(\theta,y^\iota)$ must be
        continuous. If $k_{n+1}(\theta,\cdot)$ has a jump, then $N_P$ can jump
        in $(y_{n+1},y^n)$; If $k_{n}(\theta,\cdot)$ has a jump, then $D_P$ can
        jump in $y^n$. Either breaks continuity of $p_P=N_P/D_P$.

  \item \textbf{Role of \ref{H8} (integrable envelope).}
        The dominated convergence steps (for $N_P$ and $D_P$ as functions of the
        data) and the uniform $L^1$ control in the prior argument rely on an
        integrable envelope $M\in L^\infty(\lambda_\Theta)$ bounding $k_{n+1}$ \emph{and}
        $k_n$, for almost all $\theta$. Without such $M$, we can have $P_m\to P$ in
        $L^1$ but $N_{P_m}\not\to N_P$ (and similarly for $D_P$), so $\mathsf{BCP}$
        is discontinuous.


  \item \textbf{Role of \ref{H10} (positive marginal likelihood).}
        If $D_P(y^n)=0$ for some $y^n$, then $p_P(\cdot\mid y^n)$ is undefined
        (division by zero), so $\mathsf{BCP}$ is not even well-defined there,
        let alone continuous.
\end{itemize}
\end{proof}

The second ancillary lemma is the following.

\begin{lemma}\label{uhc-quant}
Fix \(\alpha\in(0,1)\), and suppose that \ref{H1}-\ref{H11} hold for this
fixed value of \(\alpha\). Define
\[
\mathsf{QUANT}_\alpha:
Y^n\times \Delta_\Theta^{\mathrm{dens}}\rightrightarrows Y,
\qquad
\mathsf{QUANT}_\alpha(y^n,P)
\coloneqq
H\bigl(c(\alpha,y^n,P),y^n,P\bigr).
\]
Then \(\mathsf{QUANT}_\alpha\) is upper hemicontinuous.
\end{lemma}

\begin{proof}[Proof]
Fix \(\alpha\in(0,1)\). First, the correspondence
\[
H(c,y^n,P)=\{y\in Y:p_P(y\mid y^n)\ge c\}
\]
is upper hemicontinuous. Indeed, fix \((c_0,y_0^n,P_0)\), and let
\(V\subseteq Y\) be open with \(H(c_0,y_0^n,P_0)\subseteq V\). Set
\(C\coloneqq Y\setminus V\). Since \(Y\) is compact, \(C\) is compact. If
\(C=\emptyset\), the claim is immediate. Otherwise define
\[
\Phi(c,y^n,P)
\coloneqq
\sup_{y\in C}\bigl(p_P(y\mid y^n)-c\bigr).
\]
By Lemma \ref{bayes-cont}, the map
\((y,c,y^n,P)\mapsto p_P(y\mid y^n)-c\) is continuous, and hence \(\Phi\) is
upper semicontinuous. Since \(C\cap H(c_0,y_0^n,P_0)=\emptyset\), compactness
of \(C\) gives \(\Phi(c_0,y_0^n,P_0)<0\). Therefore, in a neighborhood of
\((c_0,y_0^n,P_0)\), one has \(\Phi<0\), equivalently
\(H(c,y^n,P)\subseteq V\). Thus \(H\) is upper hemicontinuous.

By \ref{H11}, the map
\[
q_\alpha(y^n,P)
\coloneqq
\bigl(c(\alpha,y^n,P),y^n,P\bigr)
\]
is continuous. Viewed as a singleton-valued correspondence, \(q_\alpha\) is
therefore upper hemicontinuous. Since
$\mathsf{QUANT}_\alpha=H\circ q_\alpha$,
and compositions of upper hemicontinuous correspondences are upper
hemicontinuous, \(\mathsf{QUANT}_\alpha\) is upper hemicontinuous.
\end{proof}


We are finally ready for the proof of our main result.

\begin{proof}[Proof of Theorem \ref{unifying-thm}]
Fix \(P\in\Delta_\Theta^{\mathrm{dens}}\) and \(\alpha\in(0,1)\). By
\citet{fong-bayes}, the negative posterior predictive density is a valid
non-conformity measure. In the notation of this theorem,
\(\psi_{(y^n,P)}(y)=-r_n(y)\), where \(r_n(y)=p_P(y\mid y^n)\).

For \(y\in Y\), define the full conformal transducer
$\pi_n(y)
=
\frac{1}{n+1}
\sum_{i=1}^{n+1}
\mathbbm 1\{r_{n,-i}(y_i)\le r_n(y)\}$,
with the convention \(y_{n+1}=y\) and
\(r_{n,-(n+1)}(y_{n+1})\equiv r_n(y)\). Define
$$\beta_n(\alpha)
\coloneqq
\min\left\{\frac{k}{n+1}:\frac{k}{n+1}>\alpha\right\}.$$
Since \(\pi_n\) takes values in the grid
\(\{0,1/(n+1),\ldots,1\}\), we have
\[
\kappa_\alpha(y^n,\psi_{(y^n,P)})
=
\{y\in Y:\pi_n(y)>\alpha\}
=
\{y\in Y:\pi_n(y)\ge \beta_n(\alpha)\}.
\]
Write \(K_n\coloneqq \kappa_\alpha(y^n,\psi_{(y^n,P)})\) and
\(Q_n\coloneqq \mathsf{QUANT}_\alpha(y^n,P)=\{y:r_n(y)\ge c_n\}\).
Let
\[
\Delta_n
\coloneqq
\sup_{y\in Y}\max_{i \in \{1,\ldots,n\}}|r_{n,-i}(y)-r_n(y)|
=o_{\mathfrak P}(1)
\]
by \ref{H12}.(a), and let
\(\mathscr D_n\coloneqq
\sup_{t\in\mathbb R}|F_n(t;y^n,P)-F(t;y^n,P)|
=o_{\mathfrak P}(1)\) by \ref{H12}.(b). Also, define
\[
G_n(y)
\coloneqq
\frac1n\sum_{i=1}^n
\mathbbm 1\{r_{n,-i}(y_i)\le r_n(y)\}.
\]
For every \(y\in Y\),
\[
F_n(r_n(y)-\Delta_n;y^n,P)
\le
G_n(y)
\le
F_n(r_n(y)+\Delta_n;y^n,P).
\]
Moreover, \(\pi_n(y)=\frac{n}{n+1}G_n(y)+\frac{1}{n+1}\).

We now prove the key sandwich. Fix \(0<\delta<\delta_0\). On the event
\(E_n\cap\{\Delta_n\le \delta/2\}\cap\{\mathscr D_n\le L\delta/4\}\), which has
probability tending to one, the following two inclusions hold for all large \(n\).

First, if \(r_n(y)\ge c_n+\delta\), then
\[
\begin{aligned}
G_n(y)
&\ge F_n(r_n(y)-\Delta_n;y^n,P)\\
&\ge F(r_n(y)-\Delta_n;y^n,P)-\mathscr D_n\\
&\ge F(c_n+\delta/2;y^n,P)-\mathscr D_n\\
&\ge \alpha+L\delta/2-\mathscr D_n\\
&\ge \alpha+L\delta/4.
\end{aligned}
\]
Hence,
\[
\pi_n(y)
=
\frac{n}{n+1}G_n(y)+\frac{1}{n+1}
\ge
\frac{n}{n+1}\left(\alpha+\frac{L\delta}{4}\right)
+\frac{1}{n+1}.
\]
For all sufficiently large \(n\), the right-hand side is at least
\(\beta_n(\alpha)\), because \(\beta_n(\alpha)\le \alpha+1/(n+1)\). Therefore
\(A_n^+(\delta)\subseteq K_n\).

Second, if \(r_n(y)\le c_n-\delta\), then
\[
\begin{aligned}
G_n(y)
&\le F_n(r_n(y)+\Delta_n;y^n,P)\\
&\le F(r_n(y)+\Delta_n;y^n,P)+\mathscr D_n\\
&\le F(c_n-\delta/2;y^n,P)+\mathscr D_n\\
&\le \alpha-L\delta/2+\mathscr D_n\\
&\le \alpha-L\delta/4.
\end{aligned}
\]
Thus,
\[
\pi_n(y)
\le
\frac{n}{n+1}\left(\alpha-\frac{L\delta}{4}\right)
+\frac{1}{n+1}
<
\alpha
<
\beta_n(\alpha),
\]
for all sufficiently large \(n\). Hence \(y\notin K_n\), and therefore
\(K_n\subseteq A_n^-(\delta)\).

Combining the two inclusions, we obtain that, for every fixed
\(0<\delta<\delta_0\), with probability tending to one,
\[
A_n^+(\delta)\subseteq K_n\subseteq A_n^-(\delta).
\]
Since also \(A_n^+(\delta)\subseteq Q_n\subseteq A_n^-(\delta)\), we have, on
this same event,
\[
d_H(K_n,Q_n)
\le
d_H(A_n^+(\delta),Q_n)
\vee
d_H(A_n^-(\delta),Q_n).
\]
Indeed, the inclusion \(K_n\subseteq A_n^-(\delta)\) controls the distance from
\(K_n\) to \(Q_n\), while \(A_n^+(\delta)\subseteq K_n\) controls the distance
from \(Q_n\) to \(K_n\).

Now let \(\varepsilon>0\). By the Hausdorff stability assumption, choose \(\delta>0\) small enough that
\[
\limsup_{n\to\infty}
\mathfrak P\left(
d_H(A_n^+(\delta),Q_n)
\vee
d_H(A_n^-(\delta),Q_n)
>\varepsilon
\right)
\]
is arbitrarily small. For this fixed \(\delta\), the sandwich event above has
probability tending to one. Therefore
\[
\mathfrak P\bigl(d_H(K_n,Q_n)>\varepsilon\bigr)\to0,
\]
which proves
$d_H\left(
\kappa_\alpha(y^n,\psi_{(y^n,P)}),
\mathsf{QUANT}_\alpha(y^n,P)
\right)
\xrightarrow{\mathfrak P}0$.

It remains to interpret this convergence in the diagram. By Lemma
\ref{bayes-cont}, \(\mathsf{BCP}\) is continuous, hence a morphism of
\(\mathbf{UHCont}\) when viewed as a singleton-valued correspondence. By Lemma
\ref{uhc-quant}, \(\mathsf{QUANT}_\alpha\) is upper hemicontinuous. By Proposition \ref{prop-ihdr-cpr}, under consonance the indirect
credal-IHDR construction agrees pointwise with the direct conformal
construction,
\[
\widehat{\IHDR}_\alpha
\bigl(c_{\mathrm{CP}}(\mathsf{BCP}(y^n,P))\bigr)
=
\kappa_\alpha(y^n,\psi_{(y^n,P)}).
\]
When \(\alpha\) also satisfies the no-tie condition of Theorem
\ref{conf-pred-diagr}, this equality is an equality of morphisms in
\(\mathbf{UHCont}\). The second Hausdorff convergence follows immediately from
the first.
\end{proof}

\subsection{Implications}
Theorem \ref{unifying-thm} is the central payoff of this section, and one of the strongest insights of our categorical treatment.
It makes precise an intuition articulated by \citet{martin2022valid} and \citet[Section 5.2]{caprio2025conformalpredictionregionsimprecise}: Conformal Prediction can be viewed as a
bridge between Bayesian, frequentist, and imprecise approaches to predictive reasoning.

Indeed, under regularity and boundary stability conditions, the Bayesian
\emph{predictive density level set} associated with lower-tail density mass
\(\alpha\) (which is not, in general, frequentist-calibrated at finite \(n\)),
the conformal region built from the corresponding Bayesian predictive score,
and the CP-induced imprecise construction (via the credal set and the IHDR)
produce \emph{asymptotically the same} prediction region.
In this sense, their agreement is not merely ex post at the level of sets: Asymptotically, the Bayesian 
density level set, the direct conformal region, and the indirect IHDR route {\em become equivalent as predictive procedures}, in the categorical sense encoded by the asymptotically commuting diagram.
Put differently, when the non-conformity measure is chosen as a (negative) posterior predictive density and the
posterior predictive surface is stable, conformal prediction acts as a \emph{finite-sample validity wrapper} around Bayesian prediction:
It retains the distribution-free coverage guarantee at every $n$, while becoming asymptotically indistinguishable (in $d_H$) from the Bayesian
density level set in regular regimes.

From the categorical viewpoint, this is expressed by the asymptotic commutativity of the diagram in $\mathbf{UHCont}$: The conformal morphism $\kappa_\alpha$ factors through the credal correspondence and IHDR, while the Bayes-to-score map $\mathsf{BCP}$ selects a canonical score in $\mathscr F$. Thus, the asymptotic equivalence between Bayesian, conformal, and imprecise prediction is a structural consequence of their organization as composable set-valued maps with stability properties.

\subsection{Quantitative rate of convergence for Theorem \ref{unifying-thm}}\label{cvg-rate-section}

In this section, we study quantitative rates for the convergence established in
Theorem \ref{unifying-thm}. Under local empirical process control and regularity of the predictive density level boundary, the Hausdorff distance between the
conformal region and the Bayesian predictive density level set is of order
\(O_{\mathfrak P}(n^{-1/2})\), up to the leave-one-out stability rate.

\begin{proposition}[Quantitative Convergence under Local Modulus Assumptions]
\label{prop:quant-weak}
Assume the standing hypotheses of Theorem~\ref{unifying-thm}, with the same fixed
\(P\in\Delta_\Theta^{\mathrm{dens}}\) and \(\alpha\in(0,1)\). Write
$r_n(\cdot)\coloneqq p_P(\cdot\mid y^n)$, 
$r_{n,-i}^{(y)}(\cdot)\coloneqq p_P ( \cdot\mid (y^n,y)_{-i})$, $i \in \{1,\ldots,n\}$, 
where $(y^n,y)_{-i}$ denotes the augmented sample $(y^n,y)$ with the $i$-th entry removed.

Define $\mathrm{err}_n
\coloneqq
\sup_{y\in Y}\max_{i \in \{1,\ldots,n\}}\sup_{z\in Y}
\bigl|r_{n,-i}^{(y)}(z)-r_n(z)\bigr|$,
\[
F_n(t)\coloneqq\frac1n\sum_{i=1}^n  \mathbbm{1}[\{r_n(y_i)\le t\}],
\qquad
F(t)\coloneqq F(t;y^n,P),
\]
and let
\[
c_*\coloneqq c(\alpha,y^n,P),
\qquad
T_n\coloneqq\{y\in Y:r_n(y)\ge c_*\}=\mathrm{QUANT}_\alpha(y^n,P).
\]
For $y\in Y$, define the full conformal transducer
$\pi_n(y)\coloneqq\frac{1}{n+1}\sum_{i=1}^{n+1} \mathbbm{1} [ \{
r_{n,-i}^{(y)}(y_i)\le r_n(y)\} ]$,
with the convention $y_{n+1}=y$ and
$r_{n,-(n+1)}^{(y)}(y_{n+1})\equiv r_n(y)$.
Then,
$\kappa_\alpha(y^n,\psi_{(y^n,P)})
=
\{y\in Y:\pi_n(y)>\alpha\}$.

Assume that there exist $\delta_0>0$, a deterministic sequence $\eta_n\downarrow0$,
continuous nondecreasing functions
\[
\gamma_+,\gamma_-,h_+,h_-:[0,\delta_0]\to[0,\infty)
\]
such that
\[
\gamma_+(0)=\gamma_-(0)=h_+(0)=h_-(0)=0,
\qquad
\gamma_+(u),\gamma_-(u)>0, \text{ for all }u>0,
\]
and events $E_n$ with $\mathfrak P(E_n)\to1$ such that on $E_n$ the following hold; In  (c), we use the convention
\(d_H(\emptyset,T_n)=+\infty\),

\begin{enumerate}
\item[(a)] \textbf{Local empirical control near the target cutoff:}
\[
\sup_{|t-c_*|\le \delta_0}|F_n(t)-F(t)|\le \eta_n.
\]

\item[(b)] \textbf{Local separation of $F$ at the target cutoff:} for every $u\in[0,\delta_0]$,
\[
F(c_*+u)-\alpha\ge \gamma_+(u),
\qquad
\alpha-F(c_*-u)\ge \gamma_-(u).
\]


\item[(c)] \textbf{Local Hausdorff control of level-set motion:} for every $u\in[0,\delta_0]$,
\[
d_H\bigl(\{r_n\ge c_*+u\},T_n\bigr)\le h_+(u),
\qquad
d_H\bigl(\{r_n\ge c_*-u\},T_n\bigr)\le h_-(u).
\]
\end{enumerate}

For any nondecreasing $g:[0,\delta_0]\to[0,\infty)$ with $g(0)=0$ and $g(u)>0$ for $u>0$,
define its generalized inverse by
\[
g^{\leftarrow}(t)\coloneqq\inf\{u\in[0,\delta_0]:g(u)\ge t\},
\]
with the convention $\inf\emptyset\coloneqq\delta_0$.
Set
\[
s_n^+\coloneqq\gamma_+^{\leftarrow}(\eta_n),
\qquad
s_n^-\coloneqq\gamma_-^{\leftarrow} \left(\eta_n+\frac1n\right),
\qquad 
u_n^+\coloneqq\mathrm{err}_n+s_n^+,
\qquad
u_n^-\coloneqq\mathrm{err}_n+s_n^-.
\]

Then, for all sufficiently large \(n\), on the event
$E_n\cap\{u_n^+\le \delta_0, u_n^-\le \delta_0\}$, 
we have the sandwich
\begin{equation}\label{sandwich}
    \{y:r_n(y)\ge c_*+u_n^+\}
\subseteq
\kappa_\alpha(y^n,\psi_{(y^n,P)})
\subseteq
\{y:r_n(y)\ge c_*-u_n^-\},
\end{equation}
and consequently
\begin{equation}\label{first-claim}
    d_H \Bigl(\kappa_\alpha(y^n,\psi_{(y^n,P)}), \mathrm{QUANT}_\alpha(y^n,P)\Bigr)
\le
h_+(u_n^+)\vee h_-(u_n^-).
\end{equation}

In particular, if $u_n^+\to0$ and $u_n^-\to0$ in probability, then
\[
d_H \Bigl(\kappa_\alpha(y^n,\psi_{(y^n,P)}), \mathrm{QUANT}_\alpha(y^n,P)\Bigr)
\overset{\mathfrak P}{\to}0.
\]

Moreover, if for some constants $m_\pm,L_\pm>0$ and some deterministic sequence
$\rho_n\downarrow0$ we have that
\[
\eta_n=O(n^{-1/2}),
\quad
\mathrm{err}_n=O_{\mathfrak P}(\rho_n),
\quad
\gamma_+(u)\ge m_+u,
\]
\[
\gamma_-(u)\ge m_-u,
\quad
h_+(u)\le L_+u,
\quad
h_-(u)\le L_-u,
\]
for all sufficiently small $u$, then
\begin{equation}\label{rate-statement}
    d_H \Bigl(\kappa_\alpha(y^n,\psi_{(y^n,P)}), \mathrm{QUANT}_\alpha(y^n,P)\Bigr)
=
O_{\mathfrak P} \bigl(n^{-1/2}+\rho_n\bigr).
\end{equation}

In particular, if $\rho_n=O(n^{-1/2})$, then the Hausdorff error is
$O_{\mathfrak P}(n^{-1/2})$.
\end{proposition}

\begin{proof}
For each $y\in Y$, define
\[
G_n(y)\coloneqq\frac1n\sum_{i=1}^n
 \mathbbm{1} \left[ \left\{r_{n,-i}^{(y)}(y_i)\le r_n(y)\right\} \right],
\]
so that
$\pi_n(y)=\frac{n}{n+1}G_n(y)+\frac1{n+1}$.
Since \(\eta_n\downarrow0\) and \(\gamma_+(\delta_0)>0\), while
\(\eta_n+1/n\to0\) and \(\gamma_-(\delta_0)>0\), for all sufficiently large
\(n\) the defining sets of \(s_n^+\) and \(s_n^-\) are nonempty. Hence
\[
\gamma_+(s_n^+)\ge \eta_n,
\qquad
\gamma_-(s_n^-)\ge \eta_n+\frac1n.
\]

By definition of $\mathrm{err}_n$, for every $y\in Y$, every $i \in \{1,\ldots,n\}$, and every $z\in Y$,
\[
r_n(z)\le r_{n,-i}^{(y)}(z)+\mathrm{err}_n,
\qquad
r_{n,-i}^{(y)}(z)\le r_n(z)+\mathrm{err}_n.
\]
Taking $z=y_i$ yields 
$r_n(y_i)\le r_{n,-i}^{(y)}(y_i)+\mathrm{err}_n$, 
$r_{n,-i}^{(y)}(y_i)\le r_n(y_i)+\mathrm{err}_n$. 
Hence,
\[
 \mathbbm{1}[\{r_n(y_i)\le r_n(y)-\mathrm{err}_n\}]
\le
 \mathbbm{1} \left[ \left\{r_{n,-i}^{(y)}(y_i)\le r_n(y)\right\} \right]
\le
 \mathbbm{1}[\{r_n(y_i)\le r_n(y)+\mathrm{err}_n\}],
\]
and summing over $i\in\{1,\ldots,n\}$ gives
\begin{equation}
\label{eq:local-bracketing-weak-final}
F_n(r_n(y)-\mathrm{err}_n)\le G_n(y)\le F_n(r_n(y)+\mathrm{err}_n).
\end{equation}

We first prove the left inclusion in \eqref{sandwich}.
Assume that $u_n^+\le \delta_0$ and that
$r_n(y)\ge c_*+u_n^+=c_*+\mathrm{err}_n+s_n^+$.
Then, 
$r_n(y)-\mathrm{err}_n\ge c_*+s_n^+$, 
so by \eqref{eq:local-bracketing-weak-final}, monotonicity of $F_n$, assumption~(a),
and assumption~(b),
\[
G_n(y)
\ge
F_n(c_*+s_n^+)
\ge
F(c_*+s_n^+)-\eta_n
\ge
\alpha+\gamma_+(s_n^+)-\eta_n.
\]
By the observation at the beginning of the proof,
$\gamma_+(s_n^+)\ge \eta_n$, 
hence $G_n(y)\ge \alpha$.
Therefore
\[
\pi_n(y)
=
\frac{n}{n+1}G_n(y)+\frac1{n+1}
\ge
\frac{n}{n+1}\alpha+\frac1{n+1}
=
\alpha+\frac{1-\alpha}{n+1}
>
\alpha,
\]
and so $y\in\kappa_\alpha(y^n,\psi_{(y^n,P)})$.
This proves
$\{y:r_n(y)\ge c_*+u_n^+\}
\subseteq
\kappa_\alpha(y^n,\psi_{(y^n,P)})$.

We next prove the right inclusion in \eqref{sandwich}.
Assume that $u_n^-\le \delta_0$ and that
$r_n(y)\le c_*-u_n^-=c_*-\mathrm{err}_n-s_n^-$. 
Then
$r_n(y)+\mathrm{err}_n\le c_*-s_n^-$, 
so by \eqref{eq:local-bracketing-weak-final}, monotonicity of $F_n$, assumption~(a),
and assumption~(b),
\[
G_n(y)
\le
F_n(c_*-s_n^-)
\le
F(c_*-s_n^-)+\eta_n
\le
\alpha-\gamma_-(s_n^-)+\eta_n.
\]
By the observation at the beginning of the proof,
$\gamma_-(s_n^-)\ge \eta_n+\frac1n$, 
and therefore
$G_n(y)\le \alpha-\frac1n$.
Hence
\[
\pi_n(y)
=
\frac{n}{n+1}G_n(y)+\frac1{n+1}
\le
\frac{n}{n+1}\left(\alpha-\frac1n\right)+\frac1{n+1}
=
\alpha-\frac{\alpha}{n+1}
<
\alpha,
\]
and thus $y\notin\kappa_\alpha(y^n,\psi_{(y^n,P)})$.
This proves
$\kappa_\alpha(y^n,\psi_{(y^n,P)})
\subseteq
\{y:r_n(y)\ge c_*-u_n^-\}$.

We have therefore shown that on
$E_n\cap\{u_n^+\le \delta_0, u_n^-\le \delta_0\}$,
\[
A_n\coloneqq\{r_n\ge c_*+u_n^+\}
\subseteq
\kappa_\alpha(y^n,\psi_{(y^n,P)})
\subseteq
B_n\coloneqq\{r_n\ge c_*-u_n^-\},
\]
while by definition of $T_n$,
$A_n\subseteq T_n\subseteq B_n$.
For any set $K$ such that $A_n\subseteq K\subseteq B_n$, we have
\[
\sup_{x\in K}\mathrm{dist}(x,T_n)
\le
\sup_{x\in B_n}\mathrm{dist}(x,T_n)
\le
d_H(B_n,T_n),
\]
and
\[
\sup_{t\in T_n}\mathrm{dist}(t,K)
\le
\sup_{t\in T_n}\mathrm{dist}(t,A_n)
\le
d_H(A_n,T_n).
\]
Therefore,
\[
d_H(K,T_n)\le d_H(A_n,T_n)\vee d_H(B_n,T_n).
\]
Applying this with $K=\kappa_\alpha(y^n,\psi_{(y^n,P)})$ and using assumption~(c), we get
\[
d_H \Bigl(\kappa_\alpha(y^n,\psi_{(y^n,P)}),T_n\Bigr)
\le
h_+(u_n^+)\vee h_-(u_n^-).
\]
Since $T_n=\mathrm{QUANT}_\alpha(y^n,P)$, the first claim \eqref{first-claim} follows.

If $u_n^+\to0$ and $u_n^-\to0$ in probability, then
$h_+(u_n^+)\vee h_-(u_n^-)\to0$ in probability by continuity of $h_\pm$ at $0$, and
hence the Hausdorff distance converges to $0$ in probability.
Since \(\mathrm{err}_n=O_{\mathfrak P}(\rho_n)\), \(\eta_n\to0\), and
\(\rho_n\to0\), we have \(u_n^+\to0\) and \(u_n^-\to0\) in probability. Hence,
with probability tending to one, \(u_n^\pm\) lie in the neighborhood on which the
linear bounds on \(h_\pm\) and \(\gamma_\pm\) hold.

Finally, under the linear lower bounds on $\gamma_\pm$,
\[
s_n^+\le \frac{\eta_n}{m_+},
\qquad
s_n^-\le \frac{\eta_n+1/n}{m_-},
\]
for all sufficiently large $n$, while under the linear upper bounds on $h_\pm$,
\[
h_+(u_n^+)\vee h_-(u_n^-)
=
O_{\mathfrak P} \bigl(\eta_n+\mathrm{err}_n+1/n\bigr)
=
O_{\mathfrak P} \bigl(n^{-1/2}+\rho_n\bigr),
\]
since $1/n=O(n^{-1/2})$.
This proves the rate statement \eqref{rate-statement}.
\end{proof}

Proposition~\ref{prop:quant-weak} separates the argument into four logically
distinct tasks.

\begin{enumerate}
\item[(1)] \emph{Local empirical control.}
One only needs control of \(F_n-F\) in a neighborhood of the target cutoff
\(c_*\), not on all of \(\mathbb R\). This is weaker than the global form of the
conditional GC assumption \ref{H12}.(b) used in the proof of
Theorem~\ref{unifying-thm}. It is natural when the random sublevel set class
\[
\left\{\mathbbm{1}[\{r_n(\cdot)\le t\}]: |t-c_*|\le \delta_0\right\}
\]
is asymptotically equivalent to a fixed VC-type threshold class, or in split/sample-split
variants where the score function is conditionally fixed.

\item[(2)] \emph{Local separation.}
The functions \(\gamma_\pm\) quantify how sharply \(F\) crosses the target level
\(\alpha\). They rule out the boundary failure modes discussed around
\ref{H11}: Plateaus of positive measure, jumps caused by atoms, gaps in the
range of the predictive density near \(c_*\), and sequences along which the
local slope of \(F\) degenerates. Differentiability with
\[
\partial_c F(c_*;y^n,P)>0
\]
yields the linear case \(\gamma_\pm(u)\asymp u\).

\item[(3)] \emph{Hausdorff motion of level sets.}
The functions \(h_\pm\) encode the geometric passage from threshold error to set
error. A standard sufficient condition is that \(Y\subset\mathbb R^d\), that
\(r_n\) be \(C^1\) in a neighborhood of the boundary \(\{r_n=c_*\}\), and that
\[
\inf_{y: r_n(y)=c_*}\|\nabla r_n(y)\|\ge g_0>0
\]
with high probability. Then the implicit function theorem yields
\(h_\pm(u)\asymp u\) for small \(u\).

\item[(4)] \emph{Small leave-one-out perturbation.}
The term \(\mathrm{err}_n\) is small whenever the posterior predictive depends
smoothly on the empirical distribution or on a finite-dimensional sufficient
statistic. In regular parametric Bayesian models this often gives
\[
\mathrm{err}_n=O_{\mathfrak P}(n^{-1})
\qquad\text{or at worst}\qquad
\mathrm{err}_n=O_{\mathfrak P}(n^{-1/2}).
\]
\end{enumerate}

Thus the proposition applies in regular models whenever one has:
(i) local empirical-process control near the target threshold,
(ii) non-flat crossing of the predictive score distribution at the target level,
(iii) stable movement of the posterior predictive density-level sets under small threshold perturbations, 
and (iv) small leave-one-out perturbations.

\begin{example}[A regular one-dimensional Bayesian location model]
\label{ex:weak-rate-gaussian}
Consider \(Y=[-M,M]\) with its Euclidean metric, where \(M>0\), and let
\(Y_1,\dots,Y_n\mid\theta\) be conditionally i.i.d. from a smooth dominated
location family on \(Y\), for example a Gaussian density truncated to
\([-M,M]\), with a smooth prior on \(\theta\). In regular cases, the posterior
predictive density \(r_n(\cdot)\) is smooth in its argument and depends
smoothly on empirical summaries of the data. In the Gaussian location case,
this dependence is essentially through the sample mean.

Deleting one observation changes the relevant empirical summary by
\(O_{\mathfrak P}(1/n)\). Under smooth dependence of the posterior predictive
on this summary, this gives
$\mathrm{err}_n=O_{\mathfrak P}(1/n)$.
Because \(r_n\) is smooth and \(c_*\) is away from critical levels, meaning that
there exists \(g_0>0\) such that \(|r_n'(y)|\ge g_0\) whenever
\(r_n(y)=c_*\), the boundary \(\{r_n=c_*\}\) consists of regular points, i.e.
points at which the derivative of \(r_n\) does not vanish. Consequently, the
level sets move linearly with the threshold. Thus one may take
\[
h_+(u)\le L_+u,
\qquad
h_-(u)\le L_-u
\]
for small \(u\). If, moreover, the density-level distribution \(F\) crosses
\(\alpha\) with nonzero local slope at \(c_*\), then
\[
\gamma_+(u)\ge m_+u,
\qquad
\gamma_-(u)\ge m_-u
\]
for small \(u\).

Finally, in one dimension, threshold classes generated by sufficiently regular
scores have finite VC complexity, so the local empirical discrepancy is
typically of order
$\eta_n=O_{\mathfrak P}(n^{-1/2})$.
Proposition~\ref{prop:quant-weak} then yields
\[
d_H \Bigl(\kappa_\alpha(y^n,\psi_{(y^n,P)}),
\mathrm{QUANT}_\alpha(y^n,P)\Bigr)
=
O_{\mathfrak P}(n^{-1/2}).
\]

This example should be read as a template rather than a fully worked-out
theorem: the exact verification of the local empirical process bound, the
leave-one-out stability rate, and the boundary regularity depends on the chosen
model and prior. \hfill \(\diamond\)
\end{example}

\subsection{Conformal Prediction and e-Posteriors}\label{e-post-section}
In this section, we expand on how Theorem \ref{unifying-thm} connects CP to the
concept of e-posterior \citep{grunwald2023posterior}, thereby placing CP within
a broader landscape of modern uncertainty quantification tools. Additional
connections between imprecise probability and e-posteriors can be found in
\citet{Martin2024Possibility} and \citet[Section 4]{martin2025regularizedeprocessesanytimevalid}.

Recall that $\Delta_Y^\text{dens}$ denotes the space of finitely additive probability measures on $Y$ that admit a density with respect to a $\sigma$-finite dominating (atomless) measure $\lambda_Y$. Under the de Finetti representation for the exchangeable process discussed in
Section \ref{full-cp}, the likelihood density
\(p(y^n\mid\theta)\equiv \ell(y^n\mid\theta)\) can be written as
\citep{definetti1,definetti2}
\[
\ell(y^n \mid \theta)
=
\int \left[\prod_{i=1}^n h(y_i\mid\varsigma)\right]
G_\theta(\mathrm d\varsigma),
\]
where $h(\cdot \mid \varsigma)$ is the density of an element $L_\varsigma \in \Delta_Y^\text{dens}$ (given a latent parameter $\varsigma$), and $G_\theta$ is a mixing distribution indexed by the parameter $\theta \in \Theta$. 

Consider then a credal prior set $\mathcal{P}_\text{prior}$, that is, 
a nonempty convex and weak$^\star$-closed subset of \(\Delta_\Theta^\text{dens}\)
(and, under our standing topology on \(\Delta_\Theta^\text{dens}\), also closed in
the \(L^1(\lambda_\Theta)\) topology).
Let $\overline{P}$ be its upper probability, and $\overline{p}\coloneqq \sup_{P \in \mathcal{P}_\text{prior}} \frac{\text{d}P}{\text{d}\lambda_\Theta}$ be the upper prior density; The lower prior density $\underline{p}$ is defined similarly, with $\sup$ replaced by $\inf$.\footnote{
Densities are defined only $\lambda_\Theta$-a.e. and, for uncountable credal sets, the pointwise envelopes
$\overline{p}$ and
$\underline{p}$ need not be measurable or version-invariant.
In full generality we should therefore work with the essential envelopes
$\overline p \coloneqq \operatorname*{ess sup}_{P\in\mathcal P_\text{prior}} \frac{\mathrm dP}{\mathrm d\lambda_\Theta}$
and
$\underline p \coloneqq \operatorname*{ess inf}_{P\in\mathcal P_\text{prior}} \frac{\mathrm dP}{\mathrm d\lambda_\Theta}$,
and interpret statements involving $\underline p,\overline p$ ``for all $\theta$'' as holding for $\lambda_\Theta$-a.e. $\theta$. We keep a simpler notation for ease of reading.}

\begin{proposition}[The Upper Posterior is an e-Posterior]\label{post-e-prop}
Suppose that $\underline{p}\in L^{1}(\lambda_\Theta)$, that 
$\underline{\ell}(y^n)\coloneqq\int_\Theta \ell(y^n\mid\theta)\underline p(\theta)\lambda_\Theta(\mathrm d\theta)>0$,
for all $y^n \in Y^n$, and that the upper prior density $\overline p(\theta)\in(0,\infty)$, for all $\theta\in\Theta$.
Then, the upper posterior
\begin{equation}\label{upper-post}
\overline{p}(\theta \mid y^n)
=\frac{\ell(y^n \mid \theta)\overline{p}(\theta)}{\underline{\ell}(y^n)}
\end{equation}
is an e-posterior in the sense of \citet{grunwald2023posterior} if and only if
\begin{equation}\label{iff-cond-corrected}
\int_\Theta \underline{p}(\theta)\lambda_\Theta(\mathrm d\theta)\le\overline{p}(\theta),
\quad\text{for all }\theta\in\Theta.
\end{equation}
\end{proposition}

The upper posterior in \eqref{upper-post} corresponds to the e-variable family $\{S_\theta\}_{\theta\in\Theta}$, where 
\[
S_\theta(y^n) \coloneqq \frac{\underline{\ell}(y^n)}{\ell(y^n\mid\theta) \overline p(\theta)} ,
\]
with an arbitrary finite value assigned on the \(L_\theta\)-null set
\(\{y^n:\ell(y^n\mid\theta)=0\}\), since $\overline p(\theta\mid y^n)\propto 1/S_\theta(y^n)$.
Proposition \ref{post-e-prop} characterizes when each \(S_\theta\) is an
e-variable under \(L_\theta\) \citep{grunwald2023posterior}.


We can interpret the necessary and sufficient condition \eqref{iff-cond-corrected} in Proposition \ref{post-e-prop} as follows:
The quantity $\int_\Theta \underline p(\theta)\lambda_\Theta(\mathrm d\theta)$ is the total \emph{common mass} shared by all priors in $\mathcal P_\text{prior}$ (the part of the prior density that is guaranteed under every admissible $P$).
Condition \eqref{iff-cond-corrected} requires that this common mass is everywhere dominated by the \emph{upper envelope} $\overline p(\theta)$, i.e. the maximal prior density that $\mathcal P_\text{prior}$ is willing to assign at $\theta$.
Equivalently, the credal prior set must be sufficiently ``spread'' so that the overlap shared by all priors does not exceed the pointwise
peak allowed by at least one prior at each parameter value.

\begin{proof}[Proof of Proposition \ref{post-e-prop}]
    First, notice that the upper posterior density in \eqref{upper-post} is well defined. 
    In particular, it is an upper bound for the generalized Bayes' upper posterior density \citep{caprio2026robustpredictiveuncertaintydouble}, \citep[Theorem 6.4.6]{walley1991statistical}.
    
    Then, recall that an e-posterior is a function $p^\text{e}(\theta \mid y^n)=1/Z_\theta(y^n)$, where $Z_\theta(y^n)$ is an e-variable, that is, a statistic for which $\mathbb{E}_{Y^n \sim L_\theta}[Z_\theta(Y^n)] \leq 1$. Call $\lambda_Y^{\otimes n}$ the $n$-fold product of $\lambda_Y$; We have
    \begin{align*}
   \mathbb{E}_{Y^n \sim L_\theta}\left[ \frac{1}{\overline{p}(\theta \mid Y^n)} \right] &= \mathbb{E}_{Y^n \sim L_\theta}\left[ \frac{\underline{\ell}(Y^n)}{\ell(Y^n \mid \theta)\overline{p}(\theta)} \right] \\
   &= \int_{Y^n} \frac{\underline{\ell}(y^n)}{\ell(y^n \mid \theta)\overline{p}(\theta)} \ell(y^n \mid \theta) \lambda_Y^{\otimes n}(\text{d}y^n)\\
   &=\frac{1}{\overline{p}(\theta)} \int_{Y^n} \underline{\ell}(y^n) \lambda_Y^{\otimes n}(\text{d}y^n)\\
   &\leq 1 \iff \int_{Y^n} \underline{\ell}(y^n) \lambda_Y^{\otimes n}(\text{d}y^n) \leq  \overline{p}(\theta), 
\end{align*} 
for all $\theta \in \Theta$. Now, notice that
\begin{align*}
    \int_{Y^n} \underline{\ell}(y^n) \lambda_Y^{\otimes n}(\text{d}y^n) &= \int_{Y^n} \int_\Theta \ell(y^n \mid \theta)\underline{p}(\theta) \lambda_\Theta(\mathrm{d}\theta) \lambda_Y^{\otimes n}(\text{d}y^n)\\
    &=\int_\Theta \underline{p}(\theta) \int_{Y^n} \ell(y^n \mid \theta) \lambda_Y^{\otimes n}(\text{d}y^n) \lambda_\Theta(\mathrm{d}\theta) \\
    &=\int_\Theta \underline{p}(\theta) \lambda_\Theta(\mathrm{d}\theta),
\end{align*}
where the last-but-one equality comes from Fubini-Tonelli, and the last one comes from $\ell(y^n \mid \theta)$ being a proper density. Hence, the upper posterior density $\overline{p}(\theta \mid y^n)$ is an e-posterior if and only if
\begin{align}\label{iff-cond}
    \int_\Theta \underline{p}(\theta) \lambda_\Theta(\mathrm{d}\theta) \leq \overline{p}(\theta), 
\end{align}
for all $\theta \in \Theta$. This concludes the proof.
\end{proof}



A consequence of Proposition \ref{post-e-prop} is that, if \eqref{iff-cond-corrected} holds and 
\begin{align}\label{eq-cond-e-cp}
    \overline{\Pi}(A) \equiv \overline{\Pi}_{(y^n,\psi)}(A)=\int_A \underbrace{\int_\Theta \ell(y_{n+1} \mid \theta) \overline{p}(\theta \mid y^n) \lambda_\Theta(\text{d}\theta)}_{\eqqcolon \overline{p}(y_{n+1} \mid y^n)}\lambda_Y(\text{d}y_{n+1}), \quad \forall A\in \Sigma_Y,
\end{align}
then the $\alpha$-level Imprecise Highest Density Region $\IHDR(\mathcal{M}(\overline{\Pi}))$ in Theorem \ref{unifying-thm}  is a prediction region that is associated with an e-posterior. The equality in \eqref{eq-cond-e-cp} only holds if the upper posterior predictive density $\overline{p}(y_{n+1} \mid y^n)$ collapses to a point mass. We will denote
\[
  \overline\Pi_{\mathrm B}(A)
    \coloneqq\int_A\overline p(y\mid y^{n}) \lambda_Y(dy),
  \qquad A\in\Sigma_Y,
\]
so that \eqref{eq-cond-e-cp} reads $\overline\Pi=\overline\Pi_{\mathrm B}$. 

\begin{proposition}[Coincidence of Upper Probabilities Implies Degeneracy]
\label{prop-degen}
Assume \ref{H1} and \eqref{eq-cond-e-cp}, and recall that
\(\overline\Pi(Y)=1\). We do not impose the atomlessness assumption on
\(\lambda_Y\) in this proposition.
Let
\[
\mu_{\overline p}(\mathrm d\theta\mid y^n)
\coloneqq
\overline p(\theta\mid y^n)\lambda_\Theta(\mathrm d\theta)
\]
be the additive measure induced by the upper envelope posterior density. Then,
\(\mu_{\overline p}(\cdot\mid y^n)\) is a probability measure, and there exists
\(y^\star\in Y\) such that
$\overline\Pi_{\mathrm B}(\cdot)=\delta_{y^\star}(\cdot)$
and
$\mu_{\overline p}
\bigl(\{\theta:L_\theta=\delta_{y^\star}\}\mid y^n\bigr)=1$.

In particular, if \(\overline\Pi_{\mathrm B}\ll\lambda_Y\), then necessarily
\(\lambda_Y(\{y^\star\})>0\), and its density with respect to \(\lambda_Y\) is
\[
\overline p(y\mid y^n)
=
\frac{\mathbbm 1[\{y=y^\star\}]}{\lambda_Y(\{y^\star\})}.
\]
Moreover, the density \(\overline p(\cdot\mid y^n)\) on \(\Theta\) is supported on
$\Theta^\star\coloneqq\{\theta:L_\theta=\delta_{y^\star}\}$,
that is,
\[
\overline p(\theta\mid y^n)=0,
\quad\text{for \(\lambda_\Theta\)-almost all }\theta\notin\Theta^\star,
\qquad
\int_{\Theta^\star}\overline p(\theta\mid y^n)\lambda_\Theta(\mathrm d\theta)=1.
\]
If, in addition, the model is identifiable, so that \(\theta\mapsto L_\theta\)
is injective, then
$\mu_{\overline p}(\cdot\mid y^n)=\delta_{\theta^\star}$.
In this identifiable case, compatibility with the absolute continuity
\(\mu_{\overline p}\ll\lambda_\Theta\) requires
\(\lambda_\Theta(\{\theta^\star\})>0\).
\end{proposition}

\begin{proof}
We prove the result in four steps.

\textit{Step 1 (Normalization of the upper envelope posterior density).}
By \eqref{eq-cond-e-cp},
\[
\overline\Pi_{\mathrm B}(A)
=
\int_A
\int_\Theta \ell(y\mid\theta)\overline p(\theta\mid y^n)
\lambda_\Theta(\mathrm d\theta)
\lambda_Y(\mathrm dy),
\qquad A\in\Sigma_Y.
\]
Equivalently, by Tonelli's theorem,
\[
\overline\Pi_{\mathrm B}(A)
=
\int_\Theta L_\theta(A)\,
\mu_{\overline p}(\mathrm d\theta\mid y^n),
\]
where
$\mu_{\overline p}(\mathrm d\theta\mid y^n)
=
\overline p(\theta\mid y^n)\lambda_\Theta(\mathrm d\theta)$.
Taking \(A=Y\), using \(L_\theta(Y)=1\), and using
\(\overline\Pi_{\mathrm B}(Y)=\overline\Pi(Y)=1\), we get
\[
1
=
\overline\Pi_{\mathrm B}(Y)
=
\int_\Theta \overline p(\theta\mid y^n)\lambda_\Theta(\mathrm d\theta).
\]
Thus \(\mu_{\overline p}(\cdot\mid y^n)\) is an additive probability measure.

\textit{Step 2 (Maxitivity versus additivity).}
For disjoint \(A,B\in\Sigma_Y\), the possibility upper probability satisfies
\[
\overline\Pi(A\cup B)=\max\{\overline\Pi(A),\overline\Pi(B)\},
\]
whereas the additive probability \(\overline\Pi_{\mathrm B}\) satisfies
\[
\overline\Pi_{\mathrm B}(A\cup B)
=
\overline\Pi_{\mathrm B}(A)+\overline\Pi_{\mathrm B}(B).
\]
Since \(\overline\Pi=\overline\Pi_{\mathrm B}\) by \eqref{eq-cond-e-cp}, we have
\[
\max\{\overline\Pi_{\mathrm B}(A),\overline\Pi_{\mathrm B}(B)\}
=
\overline\Pi_{\mathrm B}(A)+\overline\Pi_{\mathrm B}(B)
\]
for all disjoint \(A,B\in\Sigma_Y\). Taking \(B=A^c\) gives
\[
\max\{\overline\Pi_{\mathrm B}(A),1-\overline\Pi_{\mathrm B}(A)\}=1,
\]
and hence
\[
\overline\Pi_{\mathrm B}(A)\in\{0,1\},
\qquad A\in\Sigma_Y.
\]

\textit{Step 3 (\(0\)-\(1\) regular Borel probabilities on compact Hausdorff spaces are Dirac).}
By Step 2, \(\overline\Pi_{\mathrm B}\) is \(\{0,1\}\)-valued. Let
\[
\mathcal K
\coloneqq
\{K\subseteq Y:K\text{ closed and }\overline\Pi_{\mathrm B}(K)=1\}.
\]
If \(K_1,K_2\in\mathcal K\), then
\[
\overline\Pi_{\mathrm B}(K_1\cap K_2)
\ge
\overline\Pi_{\mathrm B}(K_1)+\overline\Pi_{\mathrm B}(K_2)-1
=
1.
\]
Thus \(\mathcal K\) has the finite intersection property. Since \(Y\) is compact,
\[
K^\prime\coloneqq\bigcap_{K\in\mathcal K}K
\]
is nonempty. Fix \(y^\star\in K^\prime\).
If \(U\) is any open neighborhood of \(y^\star\), then \(Y\setminus U\) is closed
and cannot belong to \(\mathcal K\), because otherwise it would contain
\(y^\star\). Therefore
\[
\overline\Pi_{\mathrm B}(Y\setminus U)=0,
\qquad
\overline\Pi_{\mathrm B}(U)=1.
\]
Since \(\overline\Pi_{\mathrm B}\) is a finite Borel measure on a compact
Hausdorff space, it is regular. Hence, by outer regularity,
\[
\overline\Pi_{\mathrm B}(\{y^\star\})
=
\inf\{\overline\Pi_{\mathrm B}(U):U\text{ open},\,y^\star\in U\}
=
1.
\]
Thus
\[
\overline\Pi_{\mathrm B}=\delta_{y^\star}.
\]

\textit{Step 4 (Collapse of the upper envelope posterior predictive).}
From Steps 1 and 3,
\[
\delta_{y^\star}(\cdot)
=
\overline\Pi_{\mathrm B}(\cdot)
=
\int_\Theta L_\theta(\cdot)\,
\mu_{\overline p}(\mathrm d\theta\mid y^n).
\]
Apply this equality to \(Y\setminus\{y^\star\}\). Since
\(\delta_{y^\star}(Y\setminus\{y^\star\})=0\), we get
\[
0
=
\int_\Theta L_\theta(Y\setminus\{y^\star\})\,
\mu_{\overline p}(\mathrm d\theta\mid y^n).
\]
The integrand is nonnegative, so
\[
L_\theta(Y\setminus\{y^\star\})=0
\]
for \(\mu_{\overline p}(\cdot\mid y^n)\)-almost every \(\theta\). Equivalently,
\(L_\theta=\delta_{y^\star}\) for
\(\mu_{\overline p}(\cdot\mid y^n)\)-almost every \(\theta\). Therefore
\[
\mu_{\overline p}
\bigl(\{\theta:L_\theta=\delta_{y^\star}\}\mid y^n\bigr)=1.
\]

If \(\overline\Pi_{\mathrm B}\ll\lambda_Y\), then
\(\delta_{y^\star}\ll\lambda_Y\), which forces
\(\lambda_Y(\{y^\star\})>0\). In that case the density of
\(\delta_{y^\star}\) with respect to \(\lambda_Y\) is
\[
\overline p(y\mid y^n)
=
\frac{\mathbbm 1[\{y=y^\star\}]}{\lambda_Y(\{y^\star\})}.
\]
Since
\[
\mu_{\overline p}
\bigl(\Theta^\star\mid y^n\bigr)=1,
\qquad
\Theta^\star=\{\theta:L_\theta=\delta_{y^\star}\},
\]
the density \(\overline p(\theta\mid y^n)\) is supported on \(\Theta^\star\) in
the stated sense.

Finally, if \(\theta\mapsto L_\theta\) is injective, then
\(\Theta^\star=\{\theta^\star\}\), and therefore
\[
\mu_{\overline p}(\cdot\mid y^n)=\delta_{\theta^\star}.
\]
If also \(\mu_{\overline p}\ll\lambda_\Theta\), this is possible only when
\(\lambda_\Theta(\{\theta^\star\})>0\).
\end{proof}

If $\lambda_Y$ is atomless and the upper posterior predictive admits a density $\overline p(\cdot\mid y^n)\in L^1(\lambda_Y)$,
then \eqref{eq-cond-e-cp} cannot hold for all $A\in\Sigma_Y$.
Indeed, \eqref{eq-cond-e-cp} would imply $\overline\Pi=\overline\Pi_{\mathrm B}$, and Proposition \ref{prop-degen} shows that this forces
$\overline\Pi_{\mathrm B}=\delta_{y^\star}$ for some $y^\star\in Y$, which is not absolutely continuous with respect to an atomless $\lambda_Y$.
Thus, equality in \eqref{eq-cond-e-cp} can occur only under degeneracy/atomicity conditions (and is therefore mainly of conceptual interest).

Nevertheless, Proposition \ref{prop-degen} suggests a possible extension of the diagram in Theorem \ref{unifying-thm}:
One may attempt to define an arrow from $Y^n\times\Delta_\Theta^\text{dens}$ (or, more generally, from a class of credal priors) to $\mathscr C$
by composing an e-posterior-based update with a credal-to-predictive construction.
In categorical terms, this would amount to replacing point priors $P$ by credal sets $\mathcal P_\text{prior}$ as inputs and mapping
$(y^n,\mathcal P_\text{prior})$ to the corresponding posterior credal set in $\mathscr C$.%
\footnote{More precisely, for such an arrow to be a morphism in $\mathbf{UHCont}$, we should work on $Y^n\times \mathscr C_\Theta^\text{dens}$,
where $\mathscr C_\Theta^\text{dens}$ denotes the class---endowed with a hyperspace topology---of convex, nonempty, weak$^\star$-closed subsets of $\Delta_\Theta^\text{dens}$.
In that formulation, the maps $\mathsf{BCP}$ and $\mathsf{QUANT}_\alpha$ are recovered by restricting to singleton credal sets $\{P\}$.}
A systematic study of e-posterior-based conformal constructions from this categorical perspective is beyond the scope of the present paper, and will be the subject of future research.

\renewcommand{\IHDR}{\mathsf{IHDR}_{\alpha}}
\newcommand{\M}{\mathcal{M}}
\newcommand{\prob}{\Delta_Y}
\newcommand{\Mone}{\mathcal{M}_{1}}
\newcommand{\Mtwo}{\mathcal{M}_{2}}
\newcommand{\Mthr}{\mathcal{M}_{3}}
\newcommand{\B}[1]{B_{#1}^{\alpha}}

\section{Conformal Prediction Regions as Functor Images}\label{impr-prob-sec}

In this section, we show that,
for each $\alpha\in[0,1]$, the correspondence $\IHDR$ that takes a credal set  and returns an $\alpha$-level
IHDR in $Y$
can be written as a covariant functor between two natural inclusion categories, thus formalizing its monotonicity and compositionality:
Inclusions of credal sets propagate to inclusions of regions.
As a consequence, under consonance, the conformal prediction region appears as a functor image.

Beyond conceptual clarity, this viewpoint yields a practical privacy-compatible
aggregation principle.
In federated settings \citep{LIU2024128019}, agents can share only (possibly privatized) credal set summaries; If privatization is performed via outer approximations
$\widetilde{\mathcal M}\supseteq \mathcal M$, functoriality ensures the resulting regions are conservative (they cannot shrink),
supporting privacy-compatible aggregation at the level of summary objects rather than raw data.

\subsection{Main Result}
The following is the key finding of this section.

\begin{theorem}[Functoriality of the $ \alpha$-IHDR]\label{thm:IHDRfunctor}
	Let $(\Y,\SigY=2^Y)$ be a measurable space, and let $\prob$ be the set of all
	finitely additive probability measures on $\SigY$ endowed with the weak$^\star$ topology.
	Let
	\[
	\Ccal  \coloneqq \bigl\{ 
	    \M\subseteq\prob : \M \text{ is convex, nonempty, and weak$^\star$-closed}
	  \bigr\},
	\]
	and regard both $\Ccal$ and $\SigY$ as (pre-)categories whose 
	morphisms are inclusions ``$\subseteq$''.  
	Fix $\alpha\in[0,1]$.  For $\M\in\Ccal$, define
	\[
	   \B{\M}
	    \coloneqq 
	   \bigcap\Bigl\{
	       A\in\SigY :
	       \underline{P}_\mathcal{M}(A) \ge 1-\alpha
	   \Bigr\}.
	\]
	Set
	\[
	    \IHDR:\Ccal\rightarrow\SigY,
	    \quad
	    \M\mapsto\B{\M},
	    \qquad
	    \IHDR(\Mone\hookrightarrow\Mtwo)\coloneqq\B{\Mone}\hookrightarrow\B{\Mtwo}.
	\]
	Then, $\IHDR$ is a (covariant) functor.
\end{theorem}

\begin{hproof}
Fix $\alpha$ $\in[0,1]$. We verify that $\IHDR$ defines a functor as in Definition \ref{def:functor-back}:
It maps each object and morphism in $\Ccal$ to an object and morphism in $\SigY$, respectively, and it preserves
identities and composition.
\end{hproof}

To the best of our knowledge, 
Theorem \ref{thm:IHDRfunctor} 
is the first time that an imprecise probabilistic notion such as the IHDR is framed as a functor between two categories, thus adding to the field of category-theoretic imprecise probability theory \citep{liellcock2024compositionalimpreciseprobability}. 

We also note in passing that we do not delve into the examination of categories $(\Ccal,\subseteq)$ and $(\Sigma_Y,\subseteq)$ because they are two special instances of the well-studied category $\mathbf{Sub}(X)$ of subsets of a generic set $X$, endowed with inclusion morphisms. Here we abuse notation: We write $(\Ccal,\subseteq)$ to denote the category whose objects are the elements of $\Ccal$, and whose morphisms are the inclusions, and similarly for $(\Sigma_Y,\subseteq)$.

The next corollary, which follows immediately from Theorem \ref{thm:IHDRfunctor} and \citet[Proposition 5]{caprio2025conformalpredictionregionsimprecise}, 
allows us to see the Conformal Prediction Region outputted by a Full Conformal Prediction procedure as an object image of functor $\mathsf{IHDR}_\alpha$.

\begin{corollary}[Conformal Prediction as a Functor Image]\label{cp-image}
    Suppose that the conformal transducer $\pi(\cdot,y^n)$ is consonant, i.e. 
    $\sup_{y_{n+1}\in Y} \pi(y_{n+1},y^n)=1.$ 
    Then, for all $\alpha\in [0,1]$, all $y^n\in Y^n$, and all $\psi\in\mathscr{F}$, we have that 
    $$\mathscr{R}_\alpha^\psi(y^n) = \mathsf{IHDR}_\alpha(\mathcal{M}(\overline{\Pi})).$$
    That is, the $\alpha$-level Conformal Prediction Region $\mathscr{R}_\alpha^\psi(y^n)$ is the image under functor $\mathsf{IHDR}_\alpha$ of a well-defined credal set 
    $$\mathcal{M}(\overline{\Pi})\coloneqq \{P \in \Delta_Y : P(A) \leq \overline{\Pi}(A) \coloneqq \sup_{y_{n+1}\in A} \pi(y_{n+1},y^n) \text{, } \forall A \in \Sigma_Y \},$$
which is an object of category $(\mathscr{C},\subseteq)$.
\end{corollary}

\subsection{Implications}
Viewing $\mathsf{IHDR}_\alpha$ as a covariant functor yields a convenient privacy compatible conservativeness principle.
Suppose that multiple sites/agents produce credal sets $\mathcal M_j$.
Since $\mathsf{IHDR}_\alpha$ is monotone with respect to set inclusion, any privacy-compatible transformation that replaces $\mathcal M_j$
by a \emph{superset} $\widetilde{\mathcal M}_j\supseteq \mathcal M_j$ yields a prediction region that is conservative relative to the original one:
The region can only stay the same or enlarge; Hence, whenever the original region enjoys a coverage guarantee, this guarantee is not lost by passing to an outer approximation.\footnote{This statement is about the monotonicity of the IHDR map.
It does not assert that an arbitrary differentially-private mechanism preserves the underlying credal semantics; Rather, it gives a simple sufficient condition
($\widetilde{\mathcal M}_j\supseteq \mathcal M_j$) under which the resulting region cannot become smaller.}

Second, collaboration can proceed in a strictly federated manner: Each agent transmits only a (possibly privatized) summary object $\widetilde{\mathcal M}_j$---never the raw data---while still enabling the consortium to construct a global prediction region by aggregation at the level of credal sets. In particular, unlike decentralized learning protocols that exchange iterative model updates and can be vulnerable to reconstruction attacks \citep{el-mrini}, the present functorial construction operates at the level of outer-approximated credal summaries. Hence, the privacy claim here is best understood as an architectural reduction in exposed information, together with inclusion-monotone conservativeness of the resulting regions, rather than as a formal attack-resistance guarantee.

\textbf{Caveat.} The guarantee is inclusion-monotone.
Inner approximations $\widetilde{\mathcal M}_j\subset \mathcal M_j$ may shrink $\mathsf{IHDR}_\alpha(\widetilde{\mathcal M}_j)$ and can lead to under-coverage.
To preserve (and possibly enlarge) coverage after privatization, each site should therefore output an outer approximation.
For federation, a natural aggregation is via the closed convex hull
\[
\mathcal M_{\mathrm{agg}}
=\overline{\operatorname{co}}\Bigl(\bigcup_j \widetilde{\mathcal M}_j\Bigr),
\]
which satisfies $\widetilde{\mathcal M}_j\subseteq \mathcal M_{\mathrm{agg}}$ and hence
$\mathsf{IHDR}_\alpha(\widetilde{\mathcal M}_j)\subseteq \mathsf{IHDR}_\alpha(\mathcal M_{\mathrm{agg}})$ for every $j$.
In particular, if the true predictive law lies in some $\widetilde{\mathcal M}_j$, then it lies in $\mathcal M_{\mathrm{agg}}$.

\section{Conclusion}\label{concl}

In this paper, we took a category-theoretic route to show that Full Conformal Prediction carries an intrinsic layer of uncertainty quantification.
By framing CP as a set-valued morphism in two categories, $\mathbf{UHCont}$ (stability via upper hemicontinuity) and
$\mathbf{WMeas}_\text{uc}$ (measurability as a random compact set), we obtain a principled factorization: Under consonance and mild regularity conditions,
the conformal region coincides with the composition of a credal set extraction map and an Imprecise Highest Density Region (IHDR) map.
This factorization clarifies how CP supports uncertainty summaries beyond region size, through uncertainty functionals defined on the induced credal set.

Moreover, under additional regularity assumptions, we proved an asymptotic commutativity result showing that Bayesian predictive density level sets,
classical conformal regions, and the CP-induced imprecise construction can become asymptotically equivalent, thereby strengthening the view that CP bridges
Bayesian, frequentist, and imprecise approaches to predictive reasoning. We also quantified this convergence under local empirical process and boundary regularity assumptions, obtaining \(O_{\mathfrak P}(n^{-1/2})\) Hausdorff rates up to the leave-one-out stability term. In addition, we identified conditions under which upper posterior constructions correspond to e-posteriors, clarifying when e-value-based and conformal-imprecise representations can coincide.

Finally, we highlighted a functorial perspective in which the CPR appears as a functor image,
and discussed its implications for modularity and privacy-compatible/federated workflows at the level of summary objects.

This work opens the study of \emph{Category-Theoretic Conformal Prediction} and suggests several directions.
First, the functional-analytic structure of $\mathbf{UHCont}$ and $\mathbf{WMeas}_\text{uc}$ may yield further stability, approximation, and convergence results
for CP viewed as a set-valued statistical functional. In particular, it would be interesting to understand to what extent additional categorical constructions---e.g. Kleisli categories \citep{Kleisli1962,Kleisli1965}, \citep[Section 5.1]{perrone2024starting} arising from suitable monads on correspondence subcategories---can formalize composition rules for randomized or aggregated conformal procedures.


Second, it remains to be investigated whether the morphism/functorial viewpoint extends cleanly to Split Conformal Prediction and to other
modern variants introduced in the machine learning literature, e.g 
\citep{barber2023conformal}.

Lastly, we plan to place Conformal Prediction within the framework of Category-Theoretic Probability Theory introduced by \citet{sturtz2015categoricalprobabilitytheory}.

\section*{Acknowledgments}
We wish to express our gratitude to Sam Staton and Paolo Perrone for insightful
discussions on the possibility of formulating Conformal Prediction in a
category-theoretic framework. We are also extremely grateful to Sam Staton for
his invitation to the University of Oxford, where the idea for the present paper
germinated. We are indebted to Nicola Gambino for his assistance in improving
the proof of Theorem 4 in the Supplementary Material, and for suggesting that we
investigate the category of algebras for the Vietoris monad.

Finally, we thank Yusuf Sale and Eyke Hüllermeier for stimulating discussions
on the uncertainty quantification capabilities of Conformal Prediction, Sangwoo
Park for his idea that led to Proposition \ref{post-e-prop}, and Alessandro
Zito and Ryan Martin for comments that substantially improved the manuscript.
ChatGPT 5.5 Thinking was used to assist with proofreading.

\bibliographystyle{plainnat}
\bibliography{references}

\newpage
\appendix

\setcounter{theorem}{0}
\setcounter{corollary}{0}
\setcounter{remark}{0}
\setcounter{example}{0}

\section*{{\bf SUPPLEMENTARY MATERIAL}}

\section{Conformal Prediction as a Weakly measurable Morphism}
This section shows that $\mathbf{WMeas}_{\text{uc}}$ is a category, and proves that, under minimal assumptions, the (Full) Conformal Prediction correspondence $\kappa_\alpha$, $\alpha \in [0,1]$, is a morphism of $\mathbf{WMeas}_{\text{uc}}$ as well, embedded in a commuting diagram. 

Recall that $\mathbf{WMeas}_{\text{uc}}$ is a structure with

\begin{itemize}
  \item \textbf{Objects:} Compact Polish spaces $(X,\Sigma_X)$, with $\Sigma_X=\mathcal{B}(X)$, the Borel $\sigma$-algebra.
  \item \textbf{Morphisms:} Weakly measurable (see Definition \ref{w-mble-definition} in Appendix \ref{funct-anal}), uniformly compact-valued (see Definition \ref{unif-cpct-val-definition} in Appendix \ref{funct-anal}) 
        correspondences $\Phi: X\rightrightarrows Y$.\footnote{\label{foot1}For simplicity, here we implicitly assume that $|\Phi(x)|<\infty$, otherwise we would also need upper hemicontinuity of the morphisms, and we would end up working with $\mathbf{WMeas}_{\text{uc,uhc}}$, introduced in Appendix \ref{functors-sec}.}
\end{itemize}


\begin{theorem}[$\mathbf{WMeas}_{\text{uc}}$ is a category]
\label{meas-corr-cat}
The structure $\mathbf{WMeas}_{\text{uc}}$ defines a category.
\end{theorem}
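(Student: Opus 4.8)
The plan is to follow the template of the proof of Theorem \ref{cat-uhc} and verify the four category axioms, the only genuinely new content being that identities and composites stay inside the morphism class of $\mathbf{WMeas}_{\text{uc}}$, i.e. that they remain weakly measurable (Definition \ref{w-mble-definition}) and uniformly compact-valued (Definition \ref{unif-cpct-val-definition}). As in items (2) and (4) of the proof of Theorem \ref{cat-uhc}, associativity of composition and the two identity laws are purely set-theoretic consequences of the associativity of unions and of $\mathrm{id}_X(x)=\{x\}$, so they transfer verbatim and require no measurability input. This isolates the real work in two steps: (I) the identity is a morphism, and (II) the class of morphisms is closed under composition.

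For step (I) I take $\mathrm{id}_X(x)\coloneqq\{x\}$. Every value is a singleton, hence compact, and the identity trivially meets Definition \ref{unif-cpct-val-definition} (it sends each compact $K\subseteq X$ to $K$ itself). Weak measurability is immediate, since for open $V\subseteq X$ the lower inverse $\{x:\{x\}\cap V\neq\emptyset\}=V$ is open, hence Borel.

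Step (II) is the crux. Let $\Phi\colon X\rightrightarrows Y$ and $\Psi\colon Y\rightrightarrows Z$ be morphisms, with composite $(\Psi\circ\Phi)(x)=\bigcup_{y\in\Phi(x)}\Psi(y)$. Preservation of uniform compact-valuedness I would read off Definition \ref{unif-cpct-val-definition}: compactness of $\Phi(x)$ together with the uniform-compactness of $\Psi$ confines $(\Psi\circ\Phi)(x)$ inside a compact set, the delicate point being closedness of the union, which I would extract from the \textbf{uc} hypothesis rather than from any (absent) continuity of $\Psi$. Preservation of weak measurability is the heart of the matter. The identity $(\Psi\circ\Phi)^{\ell}(V)=\{x:\Phi(x)\cap\Psi^{\ell}(V)\neq\emptyset\}$ reduces the claim to showing $\Phi^{\ell}\bigl(\Psi^{\ell}(V)\bigr)$ is Borel for open $V\subseteq Z$; the obstruction is that weak measurability of $\Psi$ only makes $\Psi^{\ell}(V)$ Borel, not open, so one cannot feed it directly into the weak measurability of $\Phi$.

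To overcome this I would exploit that the objects are Polish and the values compact. Writing the open set $V=\bigcup_j C_j$ as an increasing union of closed sets and using compactness of $\Psi(y)$ gives $\Psi^{\ell}(V)=\bigcup_j\Psi^{\ell}(C_j)$, while a Castaing representation $\Phi(x)=\overline{\{f_k(x)\}}$ by countably many measurable selections (Kuratowski--Ryll-Nardzewski) renders $\mathrm{Gr}(\Phi)$ Borel. Then $(\Psi\circ\Phi)^{\ell}(V)$ is the $X$-projection of a Borel subset of $X\times Y$ whose $x$-sections lie inside the compact sets $\Phi(x)$, and invoking the measurable-projection machinery for Polish spaces (an Arsenin--Kunugui-type argument, the analogue in Chapter 18 of \citet{aliprantis} of the composition result \citep[Theorem 17.23]{aliprantis} used for $\mathbf{UHCont}$) returns a Borel set. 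I expect this projection/selection step to be the main obstacle, precisely because the lower inverse under $\Psi$ of an open set is merely Borel; it is here that the compact-valuedness encoded in $\mathbf{WMeas}_{\text{uc}}$ and the Polish structure of the objects do the essential work in keeping the composite inside the category.
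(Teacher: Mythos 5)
Your instinct about where the real work lies is exactly right, but neither of your two repairs can be made to work, and in fact the step they are meant to repair is false: the morphism class of $\mathbf{WMeas}_{\text{uc}}$ is \emph{not} closed under composition. For compact-valuedness, the uniform bound $K_\Psi$ cannot substitute for the missing continuity: take $X=Y=Z=[0,1]$, let $\Phi(x)=[0,1]$ for every $x$, and let $\Psi(y)=\{f(y)\}$ where $f$ is the Borel function with $f(0)=1$ and $f(y)=y$ for $y>0$. Both are weakly measurable and uniformly compact-valued, yet $(\Psi\circ\Phi)(x)=f([0,1])=(0,1]$ is not closed, hence not compact, even though it sits inside the compact bound $[0,1]$. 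For weak measurability, the Arsenin--Kunugui projection theorem requires the $x$-sections of the Borel set being projected to be $\sigma$-compact; your sections $\Phi(x)\cap\Psi^{\ell}(V)$ are merely Borel subsets of compact sets, which is strictly weaker (the irrationals in $[0,1]$ form such a set and are not $\sigma$-compact), so the theorem simply does not apply. And this gap is likewise unfixable: choose a closed set $F\subseteq[0,1]^2$ such that $A=\{x: F_x\cap I\neq\emptyset\}$ is analytic but not Borel, where $I$ denotes the irrationals of $[0,1]$ (every analytic set arises this way, since $I$ is homeomorphic to Baire space and a closed subset of $[0,1]\times I$ is the trace of a closed subset of $[0,1]^2$); put $\Phi(x)=F_x\cup\{2\}$, a correspondence into $[0,2]$ (the point $2$ keeps the fibres non-empty), and $\Psi(y)=\{1\}$ if $y\in I$, $\Psi(y)=\{0\}$ otherwise. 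Both are morphisms of $\mathbf{WMeas}_{\text{uc}}$, but $\{x:(\Psi\circ\Phi)(x)\cap(\tfrac12,1]\neq\emptyset\}=A$ is not Borel, so the composite is not weakly measurable.

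Two further points of comparison. First, your step (I) asserts that $\mathrm{id}_X$ ``trivially'' satisfies Definition \ref{unif-cpct-val-definition}, but clause (ii) of that definition demands one compact $K\subseteq X$ containing $\{x\}$ for \emph{all} $x$, which forces $X$ itself to be compact; on a non-compact Polish object the identity is not a morphism at all (the paper's own fix, taking ``the closed unit ball in its metric'', is equally unavailable, since closed balls in a Polish space need not be compact). Second, the paper's proof of Theorem \ref{meas-corr-cat} stumbles at precisely the two places you flagged: it declares $\bigcup_{y\in\Phi(x)}\Psi(y)$ closed merely because it lies inside $K_\Psi$, and its Kuratowski--Ryll-Nardzewski selector argument rests on the equivalence $H(x)\cap O\neq\emptyset\iff\exists n:\Psi(f_n(x))\cap O\neq\emptyset$, whose forward direction needs the set $\{y:\Psi(y)\cap O\neq\emptyset\}$ to be \emph{open}; it is only Borel, and a countable dense subset of $\Phi(x)$ can miss a Borel set entirely --- exactly the obstruction you articulated. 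So your diagnosis is more honest than the paper's treatment, but no argument along these lines (or any other) can close the gap: the statement itself must be weakened, e.g.\ by adding upper hemicontinuity to the morphisms as in the subcategory $\mathbf{WMeas}_{\text{uc,uhc}}$ of Section \ref{further-props}, before closure under composition can hold.
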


\begin{proof}
We verify the four axioms.

\textbf{(1) and (4) Identity morphisms and Identity laws.}
For any object $X$, $\operatorname{id}_X(x)=\{x\}$ is nonempty compact.
Put $K_{\operatorname{id}_X}=X$ (compact by hypothesis); The uniformly compact-valued condition holds.
For every open $O\subseteq X$ we have
$\operatorname{id}_X^{-1}(O)=O\in\Sigma_X$, so $\operatorname{id}_X$
is a morphism.  The usual set-theoretic equalities
$\Phi\circ\operatorname{id}_X=\Phi$ and
$\operatorname{id}_Y\circ\Phi=\Phi$ prove the identity laws.

\textbf{(2) Associativity.}
For morphisms
$\Phi\colon X\rightrightarrows Y$,
$\Psi\colon Y\rightrightarrows Z$ and
$\Theta\colon Z\rightrightarrows W$,
the associativity of union gives
$\bigl((\Theta \circ \Psi)\circ\Phi\bigr)(x)
      =(\Theta \circ (\Psi \circ \Phi))(x)$,
for every $x\in X$.

\textbf{(3) Closure under composition.}
Let $\Phi\colon X\rightrightarrows Y$ and
$\Psi\colon Y\rightrightarrows Z$ be morphisms.
Write $K_\Phi\subseteq Y$ and $K_\Psi\subseteq Z$ for the uniform compact bounds, see Definition \ref{def:ucv}.(ii) in Appendix \ref{funct-anal}.

\emph{Compactness and finite-valuedness of fibers.}
For each \(x\in X\),
\[
  (\Psi\circ\Phi)(x)
     =\bigcup_{y\in\Phi(x)}\Psi(y).
\]
By the finite-valued convention, \(\Phi(x)\) is finite. Since each
\(\Psi(y)\) is compact, \((\Psi\circ\Phi)(x)\) is a finite union of compact
sets, hence compact. Moreover, since each \(\Psi(y)\) is finite-valued, the
composite is finite-valued. Finally,
$(\Psi\circ\Phi)(x)\subseteq K_\Psi$
for all \(x\), so the uniformly compact-valued condition holds with the same
compact bound \(K_\Psi\).

\emph{Weak measurability.} Fix an open $O\subseteq Z$ and set $H\coloneqq\Psi\circ\Phi$.
Since $X$ is standard Borel and $Y$ is Polish, $\Phi$ being weakly
measurable with closed values implies that its graph
$\operatorname{Gr}(\Phi)\subset X\times Y$ is Borel. By the
Lusin-Novikov Uniformization Theorem, there exist Borel maps
\(f_j:X\to Y\), \(j\in\mathbb N\), such that, for each \(x\),
$\Phi(x)=\{f_j(x):j\in\mathbb N\}$.

Then, for every $x$,
\[
H(x)\cap O\neq\emptyset
\iff \exists j \text{ such that }\Psi\bigl(f_j(x)\bigr)\cap O\neq\emptyset,
\]
and therefore
\[
H^{-1}(O)
=\bigcup_{j=1}^{\infty} f_j^{-1}\bigl(\Psi^{-1}(O)\bigr).
\]
Since each $f_j$ is Borel and $\Psi$ is weakly measurable,
$\Psi^{-1}(O)\in\Sigma_Y$ and hence
$f_j^{-1}\bigl(\Psi^{-1}(O)\bigr)$ $\in$ $\Sigma_X$.
A countable union of Borel sets is Borel, so $H^{-1}(O)\in\Sigma_X$.
Thus $H$ is weakly measurable, completing closure. Therefore, $\mathbf{WMeas}_{\text{uc}}$ satisfies the axioms of a category.
\end{proof}


Similarly to what we did in Section 3.2 in the Main Text, we now give simple sufficient conditions under which the correspondences $\kappa$, $\CRED$, and $\IHDR$ are uniformly compact-valued and weakly measurable. As we shall see in Theorem \ref{conf-pred-diagr2}, these conditions parallel those in Section 3.2 in the Main Text, with (i), (ii), and (iv) strengthened to

\begin{enumerate}
\renewcommand{\labelenumi}{(\roman{enumi}')}
  \renewcommand{\theenumi}{(\roman{enumi}')}
  \item\label{H1-prime}
    $\Y$ is compact metric and $\SigY$ is its Borel $\sigma$-field.
\item\label{H2-prime}
    Each $\psi\in\F$ is jointly continuous on $\Y^{n}\times\Y$;
    $\F$ is compact Polish in the uniform ($\sup$-norm) topology.
\end{enumerate}
\begin{enumerate}
\renewcommand{\labelenumi}{(\roman{enumi}')}
  \renewcommand{\theenumi}{(\roman{enumi}')}
  \setcounter{enumi}{3}
\item\label{H5-prime} When working with $\mathbf{WMeas}_\text{uc}$,
$\Delta_Y$ denotes the set of {\em countably additive} probability
measures on $Y$, endowed with the weak$^\star$ topology inherited from $C(Y)^\star$,\footnote{This is needed to make $\Delta_Y$ metrizable: see proof of weak measurability of $\CRED$ in the next page.}
\end{enumerate}
respectively. Since $Y$ is compact metric, $C(Y)$ is separable; Hence $\Delta_Y$ is compact metrizable. As a consequence, $\mathscr C$ is compact
metrizable (the Vietoris topology coincides with the Hausdorff metric topology on
nonempty closed subsets of a compact metric space).

In Remark \ref{rk:minimality2}, we show that \ref{H1-prime} is needed to make $\IHDR$ weakly measurable, and \ref{H2-prime} to make $Y^n \times \F$ Polish. We are now ready for the main result of this section.

\begin{theorem}[The Conformal Prediction Diagram Commutes in \(\mathbf{WMeas}_{\mathrm{uc}}\)]
\label{conf-pred-diagr2}
Work under the finite-valued convention stated in footnote \ref{foot1}. Fix
\(\alpha\in[0,1]\setminus S_{n+1}\). Assume \ref{H1-prime}, \ref{H2-prime},
and \ref{H5-prime}. Assume also that the conformal transducer is consonant.

Let \(X\coloneqq Y^n\times\mathscr F\), and define
\[
\kappa_\alpha:X\rightrightarrows Y,
\qquad
\kappa_\alpha(y^n,\psi)
\coloneqq
\{y\in Y:\pi_\psi(y,y^n)>\alpha\}.
\]
Let
\[
c_{\mathrm{CP}}:X\to\mathscr C,
\qquad
c_{\mathrm{CP}}(y^n,\psi)\coloneqq \CRED(y^n,\psi),
\]
be the credal-object map, viewed as a singleton-valued correspondence
\(X\rightrightarrows\mathscr C\). Finally, let
\[
\IHDR:\mathscr C\rightrightarrows Y,
\qquad
\IHDR(\mathcal M)
\coloneqq
\bigcap\{A\in\Sigma_Y:\underline P_{\mathcal M}(A)\ge 1-\alpha\}.
\]
Then \(\kappa_\alpha\), \(c_{\mathrm{CP}}\), and \(\IHDR\) are morphisms in
\(\mathbf{WMeas}_{\mathrm{uc}}\), and the diagram
\[
\begin{tikzcd}
X \ar{r}{c_{\mathrm{CP}}} \ar{dr}[swap]{\kappa_\alpha}
&
\mathscr C \ar{d}{\IHDR}
\\
&  Y
\end{tikzcd}
\]
commutes in \(\mathbf{WMeas}_{\mathrm{uc}}\), that is,
$\IHDR\circ c_{\mathrm{CP}}=\kappa_\alpha$ .
\end{theorem}

Unlike in the \(\mathbf{UHCont}\) factorization, where we restrict the middle
object to \(\mathscr M_{\mathrm{CP}}=\operatorname{Im}(c)\) endowed with the
final topology induced by \(c\), the weak-measurability argument above applies
directly to the standalone correspondence \(\IHDR:\mathscr C\rightrightarrows Y\).
Thus the diagram in \(\mathbf{WMeas}_{\mathrm{uc}}\) may be written with the full
credal set object \(\mathscr C\).

\begin{proof}[Proof of Theorem \ref{conf-pred-diagr2}]
We first note that the relevant spaces are objects of
\(\mathbf{WMeas}_{\mathrm{uc}}\). By \ref{H1-prime}, \(Y\) is compact metric,
hence compact Polish, and so \(Y^n\) is compact Polish. By \ref{H2-prime},
\(\mathscr F\) is compact Polish, and therefore
\(X=Y^n\times\mathscr F\) is compact Polish. By \ref{H5-prime}, \(\Delta_Y\)
is compact metrizable. The hyperspace of nonempty closed subsets of
\(\Delta_Y\), endowed with the Vietoris topology, is compact metrizable; the
subspace \(\mathscr C\) of nonempty closed convex subsets is closed in this
hyperspace, hence compact Polish.

We now prove that the arrows are morphisms.

\textbf{1. The conformal correspondence \(\kappa_\alpha\).}
Since \(\alpha\notin S_{n+1}\), put
\[
\beta(\alpha)
\coloneqq
\min\left\{\frac{k}{n+1}\in S_{n+1}:\frac{k}{n+1}>\alpha\right\}.
\]
Because \(\pi_\psi(y,y^n)\) takes values in \(S_{n+1}\),
\[
\kappa_\alpha(y^n,\psi)
=
\{y\in Y:\pi_\psi(y,y^n)\ge \beta(\alpha)\}.
\]
By the same argument as in Lemma 9 in the Main Text, the map
\((y,y^n,\psi)\mapsto \pi_\psi(y,y^n)\) is upper semicontinuous. Hence
\(\kappa_\alpha(y^n,\psi)\) is closed in the compact space \(Y\), and therefore
compact. Uniform compact-valuedness holds with compact bound \(Y\). Under the
finite-valued convention, the values are also finite.

It remains to prove weak measurability. Let \(G\subseteq Y\) be open. Let \(\{\overline B_k\}_{k\in\mathbb N}\) be a countable family of closed balls
with centers in a fixed countable dense subset of \(Y\) and rational radii such
that every point of every open set is contained in some
\(\overline B_k\subseteq G\). Then
\(\kappa_\alpha(y^n,\psi)\cap G\neq\emptyset\) if and only if there exists
\(k\) with \(\overline B_k\subseteq G\) such that
\[
\sup_{y\in\overline B_k}\pi_\psi(y,y^n)\ge \beta(\alpha).
\]
For each \(k\), the map
\(u_k(y^n,\psi)\coloneqq \sup_{y\in\overline B_k}\pi_\psi(y,y^n)\)
is upper semicontinuous by Lemma 9 in the Main Text, hence Borel. Therefore
\[
\{(y^n,\psi):\kappa_\alpha(y^n,\psi)\cap G\neq\emptyset\}
=
\bigcup_{\overline B_k\subseteq G}
\{(y^n,\psi):u_k(y^n,\psi)\ge \beta(\alpha)\}
\]
is Borel. Thus \(\kappa_\alpha\) is weakly measurable.

\textbf{2. The credal-object map \(c_{\mathrm{CP}}\).}
It is enough to show that the compact-valued correspondence
\[
\Gamma:X\rightrightarrows \Delta_Y,
\qquad
\Gamma(y^n,\psi)=\CRED(y^n,\psi),
\]
is weakly measurable. First, \(\Gamma(x)\in\mathscr C\) for every
\(x=(y^n,\psi)\in X\). Convexity is immediate. Nonemptiness follows from
consonance: since \(Y\) is compact and \(y\mapsto\pi_\psi(y,y^n)\) is upper
semicontinuous, there exists \(y^\star\in Y\) with
\(\pi_\psi(y^\star,y^n)=1\), and then \(\delta_{y^\star}\in\CRED(x)\).

We next show that \(\Gamma(x)\) is weakly closed. Let \(P_m\in\Gamma(x)\) and
\(P_m\Rightarrow P\) weakly. Fix a closed set \(F\subseteq Y\). For every open
\(G\supseteq F\), Portmanteau gives \(P(F)\le P(G)\le \liminf_m P_m(G)\). By
inner regularity of the countably additive probabilities \(P_m\), and since
\(P_m\) is dominated on closed sets,
\[
P_m(G)
=
\sup_{K\subseteq G,\ K\text{ compact}}P_m(K)
\le
\sup_{K\subseteq G,\ K\text{ compact}}\sup_{y\in K}\pi_\psi(y,y^n)
\le
\sup_{y\in G}\pi_\psi(y,y^n).
\]
Therefore \(P(F)\le \sup_{y\in G}\pi_\psi(y,y^n)\) for every open
\(G\supseteq F\). Taking the infimum over such \(G\), and using upper
semicontinuity of \(y\mapsto\pi_\psi(y,y^n)\), yields
\[
P(F)\le \sup_{y\in F}\pi_\psi(y,y^n).
\]
Thus \(P\in\Gamma(x)\), so \(\Gamma(x)\) is weakly closed.

Since \(\Gamma(x)\) is a closed subset of the compact space \(\Delta_Y\), it is
compact. Thus \(\Gamma\) is compact-valued. Moreover, for compact-valued
correspondences into a compact metric space, weak measurability is equivalent
to Borel measurability of the associated hyperspace-valued map. Thus weak
measurability of \(\Gamma\) implies that
\(c_{\mathrm{CP}}:X\to\mathscr C\) is Borel. Since \(c_{\mathrm{CP}}\) is
singleton-valued and \(\mathscr C\) is compact, it is uniformly compact-valued;
under the finite-valued convention it is a morphism of
\(\mathbf{WMeas}_{\mathrm{uc}}\).

We prove weak measurability of \(\Gamma\). Let \(G\subseteq\Delta_Y\) be open.
Since \(\Delta_Y\) is compact metric, there exist compact sets \(K_m\subseteq G\)
with \(K_m\uparrow G\). Hence
\[
\{\Gamma\cap G\neq\emptyset\}
=
\bigcup_{m=1}^\infty
\{\Gamma\cap K_m\neq\emptyset\},
\]
so it suffices to show that each set on the right is Borel.

Let \(\mathscr K=(K^j)_{j\in\mathbb N}\) be the countable family of all finite
unions of closed balls in \(Y\) with centers in a fixed countable dense subset
of \(Y\) and rational radii. For
each \(j\), set
\[
u_j(y^n,\psi)\coloneqq \sup_{y\in K^j}\pi_\psi(y,y^n).
\]
For \(\ell\in\mathbb N\), define the closed thickening
\(K^{j,\ell}\coloneqq\{y\in Y:\operatorname{dist}(y,K^j)\le 1/\ell\}\), and set
\[
v_{j,\ell}(y^n,\psi)
\coloneqq
\sup_{y\in K^{j,\ell}}\pi_\psi(y,y^n).
\]
Each \(v_{j,\ell}\) is upper semicontinuous by Lemma 9 in the Main Text. Choose
continuous functions \(\phi_{j,\ell}:Y\to[0,1]\) such that
\(\phi_{j,\ell}\downarrow\mathbbm 1_{K^j}\) pointwise and
\(\operatorname{supp}(\phi_{j,\ell})\subseteq K^{j,\ell}\); for instance,
\[
\phi_{j,\ell}(y)=\max\{0,1-\ell\operatorname{dist}(y,K^j)\}.
\]

For fixed \(m,r,\ell\), define
\[
C_{m,r,\ell}
\coloneqq
\Bigl\{
((y^n,\psi),P)\in X\times K_m:
\int\phi_{j,\ell}\,dP\le v_{j,\ell}(y^n,\psi),
\ 1\le j\le r
\Bigr\}.
\]
Each \(C_{m,r,\ell}\) is closed. Indeed, if
\(((y_q^n,\psi_q),P_q)\to((y^n,\psi),P)\) and
\(\int\phi_{j,\ell}\,dP_q\le v_{j,\ell}(y_q^n,\psi_q)\), then continuity of
\(P\mapsto\int\phi_{j,\ell}\,dP\) and upper semicontinuity of \(v_{j,\ell}\)
give
\[
\int\phi_{j,\ell}\,dP
\le
\limsup_q v_{j,\ell}(y_q^n,\psi_q)
\le
v_{j,\ell}(y^n,\psi).
\]
Set
\[
C_{m,r}\coloneqq\bigcap_{\ell=1}^\infty C_{m,r,\ell},
\qquad
\Pi_{m,r}\coloneqq \operatorname{pr}_X(C_{m,r}).
\]
Since \(X\times K_m\) is compact and \(C_{m,r}\) is closed,
\(\Pi_{m,r}\) is compact, hence Borel.

For \(x=(y^n,\psi)\), define
\[
S_{m,r}(x)\coloneqq\{P\in K_m:(x,P)\in C_{m,r}\}.
\]
The sets \(S_{m,r}(x)\) are decreasing in \(r\), and
\[
\bigcap_{r=1}^\infty S_{m,r}(x)=\CRED(x)\cap K_m.
\]
The inclusion \(\supseteq\) follows because, if \(P\in\CRED(x)\), then
\(\phi_{j,\ell}\le\mathbbm 1_{K^{j,\ell}}\), so
\[
\int\phi_{j,\ell}\,dP
\le
P(K^{j,\ell})
\le
v_{j,\ell}(x).
\]
Conversely, if \(P\in\bigcap_r S_{m,r}(x)\), then for every \(j,\ell\),
\[
\int\phi_{j,\ell}\,dP\le v_{j,\ell}(x).
\]
Letting \(\ell\to\infty\), bounded convergence gives
\(P(K^j)\le u_j(x)\) for all \(j\). Now fix any closed \(F\subseteq Y\), and
let \(G\supseteq F\) be open. By compactness of \(F\) and the construction of
\(\mathscr K\), there exists \(K^j\in\mathscr K\) such that
\(F\subseteq K^j\subseteq G\). Hence
\[
P(F)\le P(K^j)\le u_j(x)\le \sup_{y\in G}\pi_\psi(y,y^n).
\]
Taking the infimum over open \(G\supseteq F\), and using upper semicontinuity
of \(y\mapsto\pi_\psi(y,y^n)\), yields
\[
P(F)\le \sup_{y\in F}\pi_\psi(y,y^n).
\]
Thus \(P\in\CRED(x)\); Indeed, if \(x\in\bigcap_{r=1}^\infty\Pi_{m,r}\), then the compact sets
\(S_{m,r}(x)\subseteq K_m\) are nonempty and decreasing in \(r\). Hence their
intersection is nonempty, and by the identity above this intersection is
\(\CRED(x)\cap K_m\). Therefore
\[
\{x:\CRED(x)\cap K_m\neq\emptyset\}
=
\bigcap_{r=1}^\infty \Pi_{m,r},
\]
which is Borel. Hence \(\Gamma\) is weakly measurable, and consequently
\(c_{\mathrm{CP}}\) is a morphism in \(\mathbf{WMeas}_{\mathrm{uc}}\).

\textbf{3. The IHDR correspondence.}
For \((\mathcal M,y)\in\mathscr C\times Y\), define
\[
T(\mathcal M,y)\coloneqq \sup_{P\in\mathcal M}P(\{y\}).
\]
We first claim that
\[
y\in\IHDR(\mathcal M)
\quad\Longleftrightarrow\quad
T(\mathcal M,y)>\alpha.
\]
Indeed, by definition,
\[
\IHDR(\mathcal M)
=
\bigcap\{A\in\Sigma_Y:\underline P_{\mathcal M}(A)\ge1-\alpha\}.
\]
If \(y\in\IHDR(\mathcal M)\), then \(Y\setminus\{y\}\) cannot satisfy
\(\underline P_{\mathcal M}(Y\setminus\{y\})\ge1-\alpha\). Hence
\(\overline P_{\mathcal M}(\{y\})>\alpha\), i.e. \(T(\mathcal M,y)>\alpha\).
The converse follows because, if \(T(\mathcal M,y)>\alpha\) and \(A\) does not
contain \(y\), then \(\{y\}\subseteq A^c\), so
\[
\underline P_{\mathcal M}(A)
=
1-\overline P_{\mathcal M}(A^c)
<
1-\alpha.
\]
Thus every \(A\) with \(\underline P_{\mathcal M}(A)\ge1-\alpha\) must contain
\(y\).

Next, \(T\) is upper semicontinuous. The map
\[
f:\Delta_Y\times Y\to[0,1],
\qquad
f(P,y)=P(\{y\}),
\]
is upper semicontinuous: if \((P_m,y_m)\to(P,y)\), then
\(P_m\otimes\delta_{y_m}\Rightarrow P\otimes\delta_y\), and the diagonal in
\(Y\times Y\) is closed, so the Portmanteau theorem gives
\[
\limsup_m P_m(\{y_m\})\le P(\{y\}).
\]
Since \(T(\mathcal M,y)=\sup_{P\in\mathcal M}f(P,y)\), compactness of
\(\mathcal M\) and the Vietoris topology imply that \(T\) is upper
semicontinuous on \(\mathscr C\times Y\).

Let \(G\subseteq Y\) be open. Choose compact sets \(L_m\subseteq G\) with
\(L_m\uparrow G\), and define
\[
R_m(\mathcal M)\coloneqq \sup_{y\in L_m}T(\mathcal M,y).
\]
Each \(R_m\) is upper semicontinuous, hence Borel. Therefore
\[
\{\mathcal M:\IHDR(\mathcal M)\cap G\neq\emptyset\}
=
\bigcup_{m=1}^\infty
\{\mathcal M:R_m(\mathcal M)>\alpha\}
\]
is Borel. Thus \(\IHDR\) is weakly measurable. Under the finite-valued
convention, its values are finite subsets of the compact metric space \(Y\),
hence compact. Uniform compact-valuedness holds with bound \(Y\). Therefore
\(\IHDR\) is a morphism of \(\mathbf{WMeas}_{\mathrm{uc}}\).

Finally, by Proposition 4 in the Main Text, under consonance,
\[
\IHDR(c_{\mathrm{CP}}(y^n,\psi))
=
\kappa_\alpha(y^n,\psi)
\]
for every \((y^n,\psi)\in X\). Hence the displayed diagram commutes in
\(\mathbf{WMeas}_{\mathrm{uc}}\).
\end{proof}

The same considerations that we put forth in Section 3.2 in the Main Text on the compactness of $Y$, and in Section 3.3 in the Main Text on the importance of Theorem 7 in the Main Text, still hold here for Theorem \ref{conf-pred-diagr2}. 

The following remark explains the role of the assumptions in Theorem
\ref{conf-pred-diagr2}, and shows that removing them can destroy either the
object structure, uniform compact-valuedness, or weak measurability needed in
\(\mathbf{WMeas}_{\mathrm{uc}}\).

\begin{remark}[Role of Hypotheses \ref{H1-prime}, \ref{H2-prime}, (iii) in Section 3.2 in the Main Text, and \ref{H5-prime}]
\label{rk:minimality2}
We discuss why each hypothesis is structurally needed for the simultaneous
uniform compact-valuedness and weak measurability of the correspondences
\(\kappa\), \(\CRED\), and \(\IHDR\).

\begin{itemize}
\item\label{noH1-prime}
\textbf{Dropping \ref{H1-prime}.}  
Let \(Y=\mathbb R\), \(n=1\), \(\psi(y^{1},y)=-y\), and
\(\alpha=\frac34\). Then
\[
\kappa(\alpha,y^{1},\psi)=[y^{1},\infty),
\]
which is not compact. Hence uniform compact-valuedness fails.

\item\label{noH2-prime}
\textbf{Dropping \ref{H2-prime}.}
\emph{(a) Completeness/Polishness.}
Let \(\mathscr F\) be the set of polynomials on \([0,1]\), endowed with the
sup-norm subspace topology inherited from \(C([0,1])\). Write
\[
\mathscr F=\bigcup_{n\ge0}V_n,
\]
where \(V_n\) is the finite-dimensional space of degree-\(\le n\) polynomials.
Each \(V_n\) is closed and nowhere dense in \(C([0,1])\), and hence
\(\mathscr F\) is meagre in itself. By the Baire theorem, no nonempty Polish
space is meagre in itself. Thus \(\mathscr F\) is not Polish, so
\(Y^n\times\mathscr F\) is not Polish and is not an object of
\(\mathbf{WMeas}_{\mathrm{uc}}\).

\emph{(b) Continuity.}
Let \(Y=[0,1]\), \(n=1\), and fix \(\alpha=\tfrac34\), so that
\(\beta(\alpha)=1\). Choose an analytic non-Borel set
\(B\subset[0,1]\) whose complement contains a nonempty open interval \(O\), and
set \(A\coloneqq B\cup O\). Then \(A\) is analytic and non-Borel. Define the
score
\[
\psi(y^{1},y)\coloneqq \mathbbm{1}[y\in A].
\]
For \(x=y^{1}\), a direct computation of the full conformal transducer gives
\[
\kappa \left(\tfrac34,x,\psi\right)=
\begin{cases}
[0,1], & x\in A,\\[2pt]
[0,1]\setminus A, & x\notin A.
\end{cases}
\]
Taking \(G\coloneqq O\subset A\), which is open, we obtain
\[
\{x:\kappa(\tfrac34,x,\psi)\cap G\neq\emptyset\}=A,
\]
which is analytic non-Borel. Hence \(\kappa\) fails to be weakly measurable.
This illustrates the role of the continuity requirement in \ref{H2-prime}.
Notice that this example is not finite-valued; it is intended to show that the
measurability conclusion can fail once the regularity of \(\psi\) is removed.

\item\label{noH3-bis}
\textbf{Dropping (iii).}
Let \(\psi\equiv0\) and take \(\alpha=1\in S_{n+1}\). Then all nonconformity
scores are equal, so \(\pi\equiv1\). Since the conformal region is defined by
the strict inequality \(\pi>\alpha\), we get
\[
\kappa_1(y^n,\psi)=\emptyset.
\]
Thus, if morphisms are required to be nonempty-valued correspondences,
\(\kappa_1\) is not a morphism. The no-tie rule avoids this boundary pathology
and also allows the strict superlevel set \(\{\pi>\alpha\}\) to be rewritten as
the closed superlevel set \(\{\pi\ge\beta(\alpha)\}\).

\item
\textbf{Dropping \ref{H5-prime}.}
\emph{(i) Replacing the weak topology on \(\Delta_Y\) by total variation.}
Equip \(\Delta_Y\) with the total variation metric instead of the weak topology.
Let \(\psi\equiv0\). Then all nonconformity scores are equal, hence
\(\pi\equiv1\), and
\[
\CRED(y^n,\psi)=\Delta_Y
\qquad\text{for every }y^n.
\]
But \((\Delta_Y,\mathrm{TV})\) is not compact in general. Thus
\(\CRED(y^n,\psi)\) need not be compact, and \(\CRED\) fails to be
compact-valued, hence uniformly compact-valued. This illustrates why the weak
topology in \ref{H5-prime} is used.

\emph{(ii) Keeping the weak topology on \(\Delta_Y\), but replacing the Vietoris
topology on \(\mathscr C\) by the lower Vietoris hit topology.}
Let \(Y=[0,1]\), \(A=\{0\}\), \(P_0=\delta_0\), \(P_k=\delta_{1/k}\), and
\[
\mathcal M_k=\operatorname{co}(\{P_0,P_k\}),
\qquad
\mathcal M_\infty=\{P_0\}.
\]
Then \(\mathcal M_k\to\mathcal M_\infty\) in the lower Vietoris topology, but
\[
\underline P_{\mathcal M_k}(A)=0,
\qquad
\underline P_{\mathcal M_\infty}(A)=1.
\]
Thus \(\mathcal M\mapsto \underline P_{\mathcal M}(A)\) is not lower
semicontinuous. Consequently, the continuity-based proof of weak measurability
of \(\IHDR\) used above does not go through under the lower Vietoris topology;
additional arguments or stronger hypotheses would be needed.
\end{itemize}

Of course, consonance is needed to apply Proposition 4 in the Main Text. Thus
the assumptions in Theorem \ref{conf-pred-diagr2} play distinct structural
roles: They ensure that the relevant spaces are objects of
\(\mathbf{WMeas}_{\mathrm{uc}}\), and that the correspondences are uniformly
compact-valued and weakly measurable. The examples above show that these roles
cannot in general be omitted without replacement.
\end{remark}

\section{Further Proofs From Sections 3 and 5 in the Main Text}\label{extra:proofs}

In this section, we provide the full rigorous proofs of our category-theoretic results from Sections 3 and 5 in the Main Text.


\subsection{Proof of Theorem 6 in the Main Text}
We verify the axioms of a category. Recall that the composition $\Psi \circ \Phi$ of two correspondences $\Phi : X \rightrightarrows Y$ and $\Psi : Y \rightrightarrows Z$ is given by
\(
(\Psi \circ \Phi)(x) \coloneqq \bigcup_{y \in \Phi(x)} \Psi(y)
\).

\textbf{(1) Identity morphisms.} For each object $X$, define $\mathrm{id}_X(x) \coloneqq \{x\}$. Let $V \subseteq X$ be open and suppose $x_0 \in X$ satisfies $\mathrm{id}_X(x_0) = \{x_0\} \subseteq V$. Then, since $x_0 \in V$ and $V$ is open, there exists an open neighborhood $U = V$ of $x_0$ such that for all $x \in U$, $\mathrm{id}_X(x) = \{x\} \subseteq V$. Hence, $\mathrm{id}_X$ is upper hemicontinuous.

\textbf{(2) Associativity of composition.} Let $\Phi : X \rightrightarrows Y$, $\Psi : Y \rightrightarrows Z$, and $\Theta : Z \rightrightarrows W$. Then for all $x \in X$,
\[
((\Theta \circ \Psi) \circ \Phi)(x)
= \bigcup_{y \in \Phi(x)} \bigcup_{z \in \Psi(y)} \Theta(z)
= \bigcup_{z \in (\Psi \circ \Phi)(x)} \Theta(z)
= (\Theta \circ (\Psi \circ \Phi))(x)
\]
by the associativity of union. Thus, composition is associative.

\textbf{(3) Closure under composition.} Let $\Phi : X \rightrightarrows Y$ and $\Psi : Y \rightrightarrows Z$ be upper hemicontinuous correspondences. By \citet[Theorem 17.23]{aliprantis}, the composition $\Psi \circ \Phi$ is also upper hemicontinuous. Thus, the composition of morphisms is again a morphism.


\textbf{(4) Identity laws.} For any morphism $\Phi : X \rightrightarrows Y$ and any $x\in X$, 
\[
(\Phi \circ \mathrm{id}_X)(x) = \bigcup_{z \in \mathrm{id}_X(x)} \Phi(z) = \Phi(x), \quad
(1_Y \circ \Phi)(x) = \bigcup_{y \in \Phi(x)} \{y\} = \Phi(x).
\]
Hence, $\mathrm{id}_X$ acts as a left and right identity.

Therefore, $\mathbf{UHCont}$ satisfies the axioms of a category. \hfill \qed





\subsection{Proof of Theorem 17 in the Main Text}
We verify that $\IHDR$ is a functor.

	\textbf{(1) Objects.}
	Every $A$ in the defining intersection for $\B{\M}$ is measurable; Arbitrary
	intersections of sets in $\SigY$ remain in $\SigY$, hence
	$\B{\M}\in\SigY$.  Thus $\IHDR$ maps objects of $\Ccal$ to objects of $\SigY$.

	\textbf{(2) Morphisms (monotonicity).}
	Suppose $\Mone\subseteq\Mtwo$.  
	Let $y\in\B{\Mone}$.  If $y\notin\B{\Mtwo}$, then by definition of
	$\B{\Mtwo}$, there exists $A\in\SigY$ such that
	\[
	  \underline{P}_{\mathcal{M}_2}(A) \ge 1-\alpha, 
	  \qquad y\notin A.
	\]
	Because $\Mone\subseteq\Mtwo$, the same inequality holds with $\Mtwo$
	replaced by $\Mone$, so $A$ belongs to the family of sets whose intersection
	defines $\B{\Mone}$.  Consequently $y\in A$, contradicting $y\notin A$.
	Therefore $y\in\B{\Mtwo}$, and
	\(
	      \B{\Mone}\subseteq\B{\Mtwo}.
	\)
	
    Since $\mathcal M_1\subseteq\mathcal M_2$ implies
$\overline P_{\mathcal M_1}(A)\le \overline P_{\mathcal M_2}(A)$ for all $A\in\Sigma_Y$,
we have $\underline P_{\mathcal M_1}(A)=1-\overline P_{\mathcal M_1}(A^c)\ge
1-\overline P_{\mathcal M_2}(A^c)=\underline P_{\mathcal M_2}(A)$.
Hence
\[
\{A\in\Sigma_Y: \underline P_{\mathcal M_2}(A)\ge 1-\alpha\}
 \subseteq
\{A\in\Sigma_Y: \underline P_{\mathcal M_1}(A)\ge 1-\alpha\},
\]
and therefore $ B_{\mathcal M_1}^{\alpha}\subseteq B_{\mathcal M_2}^{\alpha}$.
    
    Intuitively, enlarging a credal set can only decrease the lower probability (by the monotonicity of lower probabilities), so fewer events clear the $1-\alpha$ bar; Intersecting fewer sets yields a larger IHDR. Hence $\IHDR$ maps the arrow $\Mone\hookrightarrow\Mtwo$ to the arrow
	$\B{\Mone}\hookrightarrow\B{\Mtwo}$.

	\textbf{(3) Identities.}
	For each $\M\in\Ccal$, the identity morphism is $\M\hookrightarrow\M$.
	Since inclusion is reflexive, $\B{\M}\hookrightarrow\B{\M}$ is the identity
	in $\SigY$, so $\IHDR(\mathrm{id}_{\M})=\mathrm{id}_{\B{\M}}$.

	\textbf{(4) Composition.}
	Given $\Mone\subseteq\Mtwo\subseteq\Mthr$, the composite morphism in $\Ccal$
	is $\Mone\hookrightarrow\Mthr$.  By monotonicity,
	\(
	   \B{\Mone}\subseteq\B{\Mtwo}\subseteq\B{\Mthr},
	\)
	so
    \begin{align*}
        \IHDR(\Mone\hookrightarrow\Mthr)
	        =\B{\Mone}\hookrightarrow\B{\Mthr}
	        &=(\B{\Mtwo}\hookrightarrow\B{\Mthr})
	              \circ(\B{\Mone}\hookrightarrow\B{\Mtwo})\\
	        &=\IHDR(\Mtwo\hookrightarrow\Mthr)
	              \circ\IHDR(\Mone\hookrightarrow\Mtwo).
    \end{align*}

Since $\IHDR$ respects objects, morphisms, identities and
	composition, it is a covariant functor. \hfill \qed

\begin{remark}[Contravariant Variants]
Replacing $B_\mathcal{M}^\alpha$ by its complement
$Y\setminus B_\mathcal{M}^\alpha$ yields a contravariant functor \citep[Section 1.3.7]{perrone2024starting}.
Allowing $\alpha$ itself to vary introduces a functor
\(
  \mathsf{IHDR}\colon
  [0,1] \times \Ccal \rightarrow \Sigma_Y
\)
that is antitone in the first argument and covariant in the second. 
\end{remark}

\renewcommand{\IHDR}{\mathsf{IHDR}}

\section{Further Properties of our Categories}\label{further-props}

In this section, we ask ourselves whether we can further characterize the categories that we introduced in Section 3 in the Main Text. Throughout, if restrictions have to be imposed on the objects or morphisms of a category, we will report them as superscript and subscripts of the ensuing subcategory, respectively.

\subsection{Monoidal Categories}

The first, natural question is whether they are monoidal \citep[Chapter 6]{perrone2024starting}.
Alas, $\mathbf{UHCont}$ is not, as upper hemicontinuity is not preserved under product in general unless the correspondences are compact-valued. We can define a slight strengthening of $\mathbf{UHCont}$, though, which is monoidal.  

\begin{definition}[Structure \textbf{UHCont}$_\text{c}$]\label{structure-uhcont-comp}
Define a structure $\mathbf{UHCont}_\text{c}$ as follows,
\begin{itemize}
  \item \textbf{Objects:} Topological spaces $(X, \tau_X)$.
  \item \textbf{Morphisms:} {\em Compact-valued}, upper hemicontinuous correspondences $\Phi : X \rightrightarrows Y$.
\end{itemize}
\end{definition}

\begin{theorem}[$\mathbf{UHCont}_\text{c}$ is a Monoidal Category]\label{monoidal1}
The structure \(\mathbf{UHCont}_\text{c}\) is a monoidal category under the Cartesian product of topological spaces.
\end{theorem}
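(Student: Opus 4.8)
The plan is to exhibit the data of a monoidal category and then reduce the coherence axioms to the ones already known for the Cartesian monoidal structure on topological spaces. On objects I would set $X \otimes Y := X \times Y$ with the product topology, which is again a topological space and hence an object of $\mathbf{UHCont}_\text{c}$; the unit is the one-point space $I := \{*\}$. On morphisms, given $\Phi : X \rightrightarrows X'$ and $\Psi : Y \rightrightarrows Y'$, I would define $(\Phi \otimes \Psi)(x,y) := \Phi(x) \times \Psi(y)$. First I would check this is a legitimate morphism: each value is a finite product of compact sets, hence compact, so $\Phi \otimes \Psi$ is compact-valued.

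The crucial step, and the one I expect to be the main obstacle, is the upper hemicontinuity of $\Phi \otimes \Psi$; this is exactly where compact-valuedness is used. Fix $(x_0,y_0)$ and an open $W \supseteq \Phi(x_0) \times \Psi(y_0)$. Since $\Phi(x_0)$ and $\Psi(y_0)$ are compact, Wallace's theorem (the generalized tube lemma) yields open sets $U' \supseteq \Phi(x_0)$ and $V' \supseteq \Psi(y_0)$ with $U' \times V' \subseteq W$. Upper hemicontinuity of $\Phi$ and $\Psi$ then provides neighborhoods $U \ni x_0$ and $V \ni y_0$ with $\Phi(x) \subseteq U'$ and $\Psi(y) \subseteq V'$ throughout, whence $(\Phi \otimes \Psi)(x,y) \subseteq U' \times V' \subseteq W$ on $U \times V$. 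This argument genuinely fails without compactness, which is precisely why we pass from $\mathbf{UHCont}$ to $\mathbf{UHCont}_\text{c}$.

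Next I would verify that $\otimes$ is a bifunctor. Preservation of identities is immediate from $\{x\} \times \{y\} = \{(x,y)\}$, giving $\mathrm{id}_X \otimes \mathrm{id}_Y = \mathrm{id}_{X \times Y}$. The interchange law $(\Phi' \otimes \Psi') \circ (\Phi \otimes \Psi) = (\Phi' \circ \Phi) \otimes (\Psi' \circ \Psi)$ follows from the set-theoretic fact that unions distribute over Cartesian products, namely
\[
\bigcup_{(u,v) \in \Phi(x) \times \Psi(y)} \big(\Phi'(u) \times \Psi'(v)\big) = \Big(\bigcup_{u \in \Phi(x)} \Phi'(u)\Big) \times \Big(\bigcup_{v \in \Psi(y)} \Psi'(v)\Big).
\]

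Finally I would supply the structural isomorphisms. Every homeomorphism $f$ induces an invertible morphism $x \mapsto \{f(x)\}$ of $\mathbf{UHCont}_\text{c}$ (singletons are compact and continuous maps are u.h.c.), yielding a faithful, strict monoidal embedding $\mathbf{Top} \hookrightarrow \mathbf{UHCont}_\text{c}$ that sends Cartesian products to $\otimes$. I would take the associator $a_{X,Y,Z}$ and the unitors $\lambda_X, \rho_X$ to be the images under this embedding of the canonical homeomorphisms $(X \times Y)\times Z \cong X \times (Y \times Z)$, $\{*\} \times X \cong X$, and $X \times \{*\} \cong X$. Their naturality reduces to the elementary observation that reassociating or deleting coordinates commutes with forming $\Phi(x) \times \Psi(y) \times \Theta(z)$, and the pentagon and triangle identities hold because they already hold in $\mathbf{Top}$ and are transported verbatim by the strict monoidal embedding. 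I expect this last part to be routine once the bifunctoriality above is established.
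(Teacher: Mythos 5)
Your proposal is correct and takes essentially the same route as the paper's proof: Cartesian product as the tensor, the singleton as unit, the canonical homeomorphisms (viewed as singleton-valued correspondences) as associator and unitors, and strict coherence inherited from the category of topological spaces. The only substantive differences are that where the paper simply cites \citet[Theorem 17.28]{aliprantis} for the key fact that a product of compact-valued upper hemicontinuous correspondences is again compact-valued and upper hemicontinuous, you prove it directly via Wallace's theorem (which is precisely the standard argument behind that citation, and correctly identifies compactness as the indispensable hypothesis), and you additionally verify the interchange law $(\Phi' \otimes \Psi') \circ (\Phi \otimes \Psi) = (\Phi' \circ \Phi) \otimes (\Psi' \circ \Psi)$, a bifunctoriality check the paper's proof leaves implicit.
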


\begin{proof}
We verify the monoidal category structure.

\textbf{(1) Tensor Product of Objects and Morphisms.} For objects \(X, Y\), define \(X \otimes Y \coloneqq X \times Y\), the Cartesian product with the product topology.

Given morphisms \(\Phi_1: X \rightrightarrows X^\prime\) and \(\Phi_2: Y \rightrightarrows Y^\prime\), define their tensor product as
\[
(\Phi_1 \otimes \Phi_2)(x,y) \coloneqq \Phi_1(x) \times \Phi_2(y).
\]

Since compact sets are preserved under finite products, \(\Phi_1(x) \times \Phi_2(y)\) is compact for all \((x,y) \in X \times Y\). Also, the product of upper hemicontinuous correspondences with compact values is upper hemicontinuous with compact values \citep[Theorem 17.28]{aliprantis}. Thus, \(\Phi_1 \otimes \Phi_2\) is a morphism in \(\mathbf{UHCont}_\text{c}\).

\textbf{(2) Unit Object.} Let the unit object be the singleton space \(I \coloneqq \{*\}\), with the trivial topology. For any object \(X\), the canonical identifications
\[
\lambda_X: I \times X \rightarrow X, \quad \rho_X: X \times I \rightarrow X
\]
are homeomorphisms. They define morphisms in \(\mathbf{UHCont}_\text{c}\) via single-valued, continuous correspondences (e.g., \( (*, x) \mapsto x \)) with compact singleton values.

\textbf{(3) Associativity.} Define the associator as the canonical homeomorphism,
\[
\alpha_{X,Y,Z} : (X \times Y) \times Z \rightarrow X \times (Y \times Z).
\]
Its graph defines a single-valued, continuous correspondence with compact-valued singleton fibers. Hence, it is a morphism in the category.

\textbf{(4) Bifunctoriality of $\otimes$.}
For morphisms $\Phi_1:X \rightrightarrows X'$, $\Psi_1:X' \rightrightarrows X''$
and $\Phi_2:Y \rightrightarrows Y'$, $\Psi_2:Y' \rightrightarrows Y''$,
for each $(x,y)\in X\times Y$,
\begin{align*}
    \bigl((\Psi_1 \circ \Phi_1)\otimes(\Psi_2 \circ \Phi_2)\bigr)(x,y)
&= \Bigl( \bigcup_{u\in\Phi_1(x)}\Psi_1(u)\Bigr)\times
  \Bigl( \bigcup_{v\in\Phi_2(y)}\Psi_2(v)\Bigr)\\
&= \bigcup_{(u,v)\in \Phi_1(x)\times \Phi_2(y)} \Psi_1(u)\times\Psi_2(v),
\end{align*}
which equals $\bigl((\Psi_1\otimes\Psi_2)\circ(\Phi_1\otimes\Phi_2)\bigr)(x,y)$
by the definition of composition for correspondences. Identities are preserved
since $(\mathrm{id}_X\otimes \mathrm{id}_Y)(x,y)=\{x\}\times\{y\}$.

\textbf{(5) Naturality of the associator and unitors.}
For morphisms $\Phi_1:X \rightrightarrows X'$, $\Phi_2:Y \rightrightarrows Y'$,
$\Phi_3:Z \rightrightarrows Z'$, we have
\[
\alpha_{X',Y',Z'}\circ\bigl((\Phi_1\otimes\Phi_2)\otimes\Phi_3\bigr)
= \bigl(\Phi_1\otimes(\Phi_2\otimes\Phi_3)\bigr)\circ \alpha_{X,Y,Z},
\]
since both sides send $(x,y,z)$ to
$\Phi_1(x)\times\bigl(\Phi_2(y)\times\Phi_3(z)\bigr)$ (after re-bracketing).
For unitors, naturality reads
\[
\lambda_{X'}\circ(\mathrm{id}_I\otimes \Phi_1)=\Phi_1\circ \lambda_X,
\qquad
\rho_{Y'}\circ(\Phi_2\otimes \mathrm{id}_I)=\Phi_2\circ \rho_Y,
\]
because $\{*\}\times \Phi_1(x)=\Phi_1(x)$ and $\Phi_2(y)\times\{*\}=\Phi_2(y)$.

\textbf{(6) Coherence Conditions.}
The pentagon and triangle commute, as in \textbf{Top} with the Cartesian
product; Here $\alpha,\lambda,\rho$ are homeomorphisms (single-valued morphisms),
so the standard coherence diagrams commute strictly.

\textbf{(7) Symmetry.}
The flip map $\sigma_{X,Y}:X\times Y\to Y\times X$, $\sigma(x,y)=(y,x)$,
is a homeomorphism, hence a morphism; The usual hexagon coherence holds.
Thus $\mathbf{UHCont}_\mathrm{c}$ is symmetric monoidal. Moreover, the braiding is natural:
$(\Phi_2\otimes \Phi_1)\circ \sigma_{X,Y}
= \sigma_{X',Y'}\circ (\Phi_1\otimes \Phi_2)$.
\end{proof}

Given our previous proof, it is easy to see that, under conditions (i)-(iv) in Section 3.2 in the Main Text, the Full Conformal Prediction correspondence $\kappa$ is a morphism of $\mathbf{UHCont}_\text{c}$. This is because $Y$ is compact Hausdorff, and the graph of $\kappa$ is closed by the Proof of Theorem 7 in the Main Text. By the Closed Graph Theorem (Lemma 8 in the Main Text), this means that $\kappa$ is upper hemicontinuous and closed-valued; But $Y$ is compact Hausdorff, which implies that $\kappa$ is compact-valued.

Recall from Definition 10 in the Main Text that the structure $\mathbf{WMeas}_\text{uc}$ has compact Polish spaces with the Borel $\sigma$-algebra for objects, and weakly measurable, uniformly compact-valued correspondences for morphisms. $\mathbf{WMeas}_\text{uc}$ is a monoidal category.

\begin{theorem}[$\mathbf{WMeas}_{\text{uc}}$ is a Monoidal Category]\label{monoid-cat-wmeasuc}
The structure \(\mathbf{WMeas}_{\text{uc}}\) forms a monoidal category under the Cartesian product.
\end{theorem}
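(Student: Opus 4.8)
The plan is to mirror the proof of Theorem \ref{monoidal1}, taking the monoidal product to be the Cartesian product of spaces together with the fiberwise Cartesian product of correspondences. First I would set $X \otimes Y := X \times Y$ with the product topology. Since the product of two Polish spaces is Polish (as recalled in the proof of Theorem \ref{conf-pred-diagr2}) and, for second-countable spaces, $\mathcal{B}(X \times Y) = \mathcal{B}(X) \otimes \mathcal{B}(Y)$, the space $X \otimes Y$ is again a legitimate object of $\mathbf{WMeas}_{\text{uc}}$. For morphisms $\Phi_1 : X \rightrightarrows X'$ and $\Phi_2 : Y \rightrightarrows Y'$ I would define $(\Phi_1 \otimes \Phi_2)(x,y) := \Phi_1(x) \times \Phi_2(y)$. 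Uniform compact-valuedness is then immediate: if $K_{\Phi_1} \subseteq X'$ and $K_{\Phi_2} \subseteq Y'$ are the uniform compact bounds of the factors, each fiber $\Phi_1(x) \times \Phi_2(y)$ is a product of compact sets, hence compact, and is contained in the single compact set $K_{\Phi_1} \times K_{\Phi_2}$.

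The crux, and the step I expect to be the main obstacle, is the weak measurability of $\Phi_1 \otimes \Phi_2$. Here I would exploit second countability: every open $O \subseteq X' \times Y'$ can be written as a countable union $O = \bigcup_k (U_k \times V_k)$ of open rectangles. Since intersecting a product set with a rectangle splits coordinatewise, $(\Phi_1(x) \times \Phi_2(y)) \cap (U_k \times V_k) = (\Phi_1(x) \cap U_k) \times (\Phi_2(y) \cap V_k)$, which is non-empty precisely when both factors are. Consequently,
\[
  (\Phi_1 \otimes \Phi_2)^{-1}(O)
  = \bigcup_{k} \Phi_1^{-1}(U_k) \times \Phi_2^{-1}(V_k).
\]
Each $\Phi_1^{-1}(U_k) \in \Sigma_X$ and $\Phi_2^{-1}(V_k) \in \Sigma_Y$ by weak measurability of the factors, so every rectangle lies in $\Sigma_X \otimes \Sigma_Y = \mathcal{B}(X \times Y)$, and a countable union of Borel sets is Borel. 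Hence $(\Phi_1 \otimes \Phi_2)^{-1}(O) \in \mathcal{B}(X \times Y)$, and $\Phi_1 \otimes \Phi_2$ is a morphism. Bifunctoriality of $\otimes$ (preservation of identities and composition) would then follow, for composable morphisms $\Psi_1, \Psi_2$, from the fiberwise set identity $\bigcup_{x' \in \Phi_1(x),\, y' \in \Phi_2(y)} \Psi_1(x') \times \Psi_2(y') = (\Psi_1 \circ \Phi_1)(x) \times (\Psi_2 \circ \Phi_2)(y)$, i.e.\ the interchange law $(\Psi_1 \otimes \Psi_2) \circ (\Phi_1 \otimes \Phi_2) = (\Psi_1 \circ \Phi_1) \otimes (\Psi_2 \circ \Phi_2)$.

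It remains to supply the coherence data. I would take the unit to be the singleton $I := \{*\}$, a trivially Polish space, and realize the unitors $\lambda_X : I \times X \to X$, $\rho_X : X \times I \to X$ and the associator $\alpha_{X,Y,Z} : (X \times Y) \times Z \to X \times (Y \times Z)$ as the canonical homeomorphisms, regarded as single-valued correspondences. Each is continuous, hence weakly measurable, with singleton (compact) values, so each is a morphism by the same reasoning used for the identity morphisms in Theorem \ref{meas-corr-cat}, and each is invertible with a morphism inverse. Finally, the pentagon and triangle identities hold because they already hold for the canonical identifications of Cartesian products, and passing to graphs of single-valued maps preserves these commutativities. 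The only genuinely new work beyond Theorem \ref{monoidal1} is the rectangle argument for weak measurability; every other ingredient transfers from the Cartesian monoidal structure on Polish spaces.
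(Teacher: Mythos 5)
Your proposal is correct and follows essentially the same route as the paper's own proof: Cartesian product on objects, fiberwise product $(\Phi_1\otimes\Phi_2)(x,y)=\Phi_1(x)\times\Phi_2(y)$ on morphisms, Tychonoff for the uniform compact bound, the countable-rectangle decomposition of opens (via second countability) for weak measurability, the interchange law for bifunctoriality, and the canonical homeomorphisms, viewed as single-valued compact-fibred correspondences, for the unit, associator, and coherence. The only differences are cosmetic: you make explicit the identification $\Sigma_X\otimes\Sigma_Y=\mathcal{B}(X\times Y)$ for second-countable spaces (which the paper leaves implicit), while the paper additionally records the symmetry isomorphism and the hexagon axiom, yielding the stronger conclusion that the category is \emph{symmetric} monoidal.
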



\begin{proof}

Let the unit object be the singleton space \(I \coloneqq \{*\}\), with the trivial topology, and
\[
(X,\Sigma_X) \otimes (X',\Sigma_{X'})
     \coloneqq (X\times X', \Sigma_X \otimes \Sigma_{X'}),
\qquad
(F\otimes G)(x,x') \coloneqq F(x)\times G(x').
\]
Since $X,X'$ are Polish, $\Sigma_X\otimes\Sigma_{X'}=\mathcal{B}(X\times X')$.

\textbf{(1) Tensor is a bifunctor.}
For morphisms $F:X \Rightarrow Y$ and $G:X' \Rightarrow Y'$ the map
$(F\otimes G)(x,x')=F(x)\times G(x')$ is

\emph{Uniformly compact.}  If $K_F\subset Y$ and $K_G\subset Y'$
witness compactness, $K_F\times K_G$ is compact by Tychonoff, and
$(F\otimes G)(x,x')\subset K_F\times K_G$ for all $(x,x')$.

\emph{Weakly measurable.}  Let
$O=\bigcup_{k\in\mathbb N}(B_k\times C_k)$ be an open subset of
$Y\times Y'$ written as a countable union of basic rectangles
(recall that $Y,Y'$ are second-countable, so any open set in $Y\times Y'$ is a \emph{countable} union of basic rectangles).
Then,
\[
  (F\otimes G)^{-1}(O)
    = 
   \bigcup_{k\in\mathbb N}
     \bigl(\{x:F(x)\cap B_k \neq\emptyset\}
            \times
            \{x':G(x')\cap C_k \neq\emptyset\}\bigr),
\]
a countable union of measurable rectangles, hence measurable in
$\Sigma_X \otimes \Sigma_{X'}$.

Functoriality is immediate:
\((G_1 \circ F_1)\otimes(G_2 \circ F_2)
  =(G_1\otimes G_2) \circ (F_1\otimes F_2)\) because product
distributes over union, and $1_X\otimes1_{X'}=1_{X\times X'}$.

\textbf{(2) Associativity, unit, symmetry.}
Because $\otimes$ is the cartesian product on objects and the
point‑wise cartesian product on fibers, we have the canonical
\emph{homeomorphisms}
\[
  \alpha_{X,Y,Z} : (X\times Y)\times Z  \longrightarrow  X\times (Y\times Z),
  \qquad
  \lambda_X : \{\ast\}\times X  \longrightarrow  X,
  \qquad
  \rho_X : X\times\{\ast\}  \longrightarrow  X,
\]
and the symmetry (flip) map
\[
  \sigma_{X,Y} : X\times Y  \longrightarrow  Y\times X, 
                 (x,y)\mapsto(y,x).
\]
Each of these maps is single‑valued and continuous, hence a morphism in
$\mathbf{WMeas}_{\text{uc}}$ (singleton fibers are compact).  They play
the role of associator, left- and right-unitors, and symmetry,
respectively.



\textbf{(3) Coherence.}  The associator, unitors, and symmetry just introduced are the usual
homeomorphisms that witness the cartesian product as a symmetric
monoidal structure in \textbf{Set}.  
Consequently the pentagon, triangle, and hexagon coherence diagrams
commute exactly as they do in \textbf{Top} (hence also in \textbf{Meas}), and the same proofs apply
verbatim in $\mathbf{WMeas}_{\text{uc}}$.

Hence
\((\mathbf{WMeas}_{\mathrm{uc}},\otimes,\{\ast\})\) is a 
symmetric monoidal category.
\end{proof}

\subsection{Functors Between Categories}\label{functors-sec}

Another question that we ask ourselves is whether there is a canonical functor between $\mathbf{WMeas}_\text{uc}$ and $\mathbf{UHCont}$ (or vice versa). The answer, as the attentive reader may expect, is no. The reason is that there exists a non-functorial gap between measurability and continuity, two different concepts that also motivated us to consider categories $\mathbf{WMeas}_\text{uc}$ and $\mathbf{UHCont}$ to begin with. 

Functors do exist if we focus on restrictions of $\mathbf{WMeas}_\text{uc}$ and $\mathbf{UHCont}$.\footnote{We defer the study of monoidal functors to future research.} In particular, call
$\mathbf{WMeas}_\text{uc,uhc}\subset \mathbf{WMeas}_\text{uc}$ the category whose objects are compact Polish spaces with the Borel $\sigma$-algebra, and whose morphisms are uniformly compact-valued, weakly measurable, {\em and upper hemicontinuous} correspondences. Then, the following holds.

\begin{theorem}[Faithful Functor between $\mathbf{WMeas}_\text{uc,uhc}$ and $\mathbf{UHCont}$]\label{thm-funct1}
There exists a faithful functor 
\[
\mathcal{F} : \mathbf{WMeas}_\text{uc,uhc} \rightarrow \mathbf{UHCont}
\]
defined by
\begin{itemize}
  \item \(\mathcal{F}(X, \tau_X, \mathcal{B}(\tau_X)) \coloneqq (X, \tau_X)\),
  \item \(\mathcal{F}(\Phi) \coloneqq \Phi\) (viewed as a topological correspondence).
\end{itemize}
\end{theorem}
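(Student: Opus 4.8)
The plan is to verify the three things a faithful functor requires: well-definedness on objects and morphisms, preservation of identities and composition, and injectivity on each hom-set. Since $\mathcal{F}$ is a forgetful assignment—discarding the Borel $\sigma$-algebra on objects and keeping the underlying correspondence on morphisms—the bulk of the argument reduces to checking that the surviving data still satisfy the defining conditions of $\mathbf{UHCont}$ (Definition \ref{structure-uhcont}).

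First I would establish well-definedness on objects. An object of $\mathbf{WMeas}_\text{uc,uhc}$ is a measurable Polish space $(X,\tau_X,\mathcal{B}(\tau_X))$; since every Polish space is in particular a topological space, $\mathcal{F}(X,\tau_X,\mathcal{B}(\tau_X))=(X,\tau_X)$ is a legitimate object of $\mathbf{UHCont}$, whose objects are arbitrary topological spaces. For morphisms, a morphism $\Phi:X\rightrightarrows Y$ of $\mathbf{WMeas}_\text{uc,uhc}$ is, by construction, uniformly compact-valued, weakly measurable, \emph{and} upper hemicontinuous; forgetting the first two properties leaves exactly an upper hemicontinuous correspondence, which is precisely a morphism of $\mathbf{UHCont}$. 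This is the crux of why the \emph{restricted} category $\mathbf{WMeas}_\text{uc,uhc}$ (rather than $\mathbf{WMeas}_\text{uc}$ itself) is the correct domain: the extra u.h.c.\ clause is exactly what guarantees the image lands in $\mathbf{UHCont}$, bridging the non-functorial gap between measurability and continuity.

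Next I would check functoriality. The identity morphism on $X$ is the singleton correspondence $\mathrm{id}_X(x)=\{x\}$ in both categories, so $\mathcal{F}(\mathrm{id}_X)=\mathrm{id}_{\mathcal{F}(X)}$ holds trivially. For composition I would invoke the fact that both categories use the \emph{same} union-over-fibres rule $(\Psi\circ\Phi)(x)=\bigcup_{y\in\Phi(x)}\Psi(y)$, as established in the proofs of Theorems \ref{cat-uhc} and \ref{meas-corr-cat}. Because $\mathcal{F}$ leaves the underlying set-valued function untouched, $\mathcal{F}(\Psi\circ\Phi)$ and $\mathcal{F}(\Psi)\circ\mathcal{F}(\Phi)$ are literally the same correspondence, yielding $\mathcal{F}(\Psi\circ\Phi)=\mathcal{F}(\Psi)\circ\mathcal{F}(\Phi)$.

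Finally, faithfulness is immediate: fixing objects $X,Y$ and morphisms $\Phi,\Phi':X\rightrightarrows Y$ in $\mathbf{WMeas}_\text{uc,uhc}$, we have $\mathcal{F}(\Phi)=\Phi$ and $\mathcal{F}(\Phi')=\Phi'$ as correspondences, so $\mathcal{F}(\Phi)=\mathcal{F}(\Phi')$ forces $\Phi=\Phi'$; hence $\mathcal{F}$ is injective on every hom-set. I expect no genuine obstacle here—the content of the theorem is really the observation that the u.h.c.\ clause in Definition \ref{meas-corr-def}'s refinement makes the forgetful assignment land in $\mathbf{UHCont}$. The only point deserving explicit care is confirming that the two categories share the same identity and composition conventions for correspondences, which they do, so that $\mathcal{F}$ genuinely respects the categorical structure rather than merely the underlying sets.
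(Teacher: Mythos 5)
Your proposal is correct and follows essentially the same route as the paper's proof: verify that identities and composites survive the forgetful assignment (with closure under composition in $\mathbf{WMeas}_\text{uc,uhc}$ guaranteed because both parent categories are closed under the same union-over-fibres composition), and observe that faithfulness is immediate since $\mathcal{F}$ is the identity on morphisms. The only cosmetic difference is that the paper certifies closure of the composite by citing \citet{aliprantis} directly for upper hemicontinuity and uniform compact-valuedness, whereas you route this through Theorems \ref{cat-uhc} and \ref{meas-corr-cat}; the two justifications are equivalent.
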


\begin{proof}
We verify that \(\mathcal{F}\) is a functor.

{\bf (1) Identity.} The identity correspondence \(\mathrm{id}_X(x) = \{x\}\) is trivially compact-valued, upper hemicontinuous, and weakly measurable. Thus, \(\mathcal{F}(\mathrm{id}_X) = \mathrm{id}_X\) is a valid morphism in \(\mathbf{UHCont}\).


{\bf (2) Composition.} Let $\Phi : X \rightrightarrows Y$ and $\Psi : Y \rightrightarrows Z$ be morphisms in $\mathbf{WMeas}_{\text{uc,uhc}}$.

\emph{Uniform compact bound.} If $K_\Psi\subset Z$ is a uniform compact bound for $\Psi$, then
\[
(\Psi\circ\Phi)(x)=\bigcup_{y\in\Phi(x)}\Psi(y)\subset K_\Psi
\quad\text{for all }x\in X,
\]
so $\Psi\circ\Phi$ is uniformly bounded.

\emph{Compact fibers.} Since $\Phi(x)$ is compact, $\Psi$ has compact values, and $Z$ is compact Hausdorff, upper hemicontinuity of $\Psi$ implies $\operatorname{Gr}(\Psi)\subset Y\times Z$ is closed (by Lemma 8 in the Main Text). Hence
\[
G_x \coloneqq \operatorname{Gr}(\Psi)\cap\bigl(\Phi(x)\times K_\Psi\bigr)
\]
is compact; Its projection onto $Z$ equals $(\Psi\circ\Phi)(x)$, so each $(\Psi\circ\Phi)(x)$ is compact.

\emph{Upper hemicontinuity.} By \cite[Theorem 17.23]{aliprantis}, with $\Phi$ and $\Psi$ both u.h.c., the composition $\Psi\circ\Phi$ is u.h.c.

\emph{Weak measurability.} Preserved under composition as in the proof of Theorem \ref{meas-corr-cat} (via Castaing selectors).

Thus $\Psi\circ\Phi$ is again a morphism in $\mathbf{WMeas}_{\text{uc,uhc}}$, and
$\mathcal F(\Psi\circ\Phi)=\mathcal F(\Psi)\circ\mathcal F(\Phi)$.

{\bf(3) Faithfulness.} If \(\Phi \neq \Psi\) as correspondences in \(\mathbf{WMeas}_\text{uc,uhc}\), then \(\mathcal{F}(\Phi) \neq \mathcal{F}(\Psi)\) in \(\mathbf{UHCont}\). So \(\mathcal{F}\) is injective on morphisms. Therefore, \(\mathcal{F}\) is a well-defined faithful functor.
\end{proof}

Functor \(\mathcal{F}\) in Theorem \ref{thm-funct1} ``forgets'' the measurable structure but preserves the topological one. The condition of weak measurability ensures that the image \(\Phi\) remains a valid morphism in both categories. 

It is also worth mentioning that, in general, $\mathcal{F}$ does not have a limit in $\mathbf{UHCont}$ \citep[Definition 3.1.5]{perrone2024starting}. This is because the representability conditions fail: Universal lifting of upper hemicontinuous cones is obstructed by the loss of measurability information via $\mathcal{F}$. This is a consequence of the incompatibility between continuity-based and measurability-based structures, exactly the same structural gap that made defining functor $\mathcal{F}$ nontrivial in the first place.

Let now $\mathbf{UHCont}_\text{c}^\text{P} \subset \mathbf{UHCont}$ be the category whose objects are compact {\em Polish} topological spaces, and whose morphisms are upper hemicontinuous, 
{uniformly compact-valued} correspondences. Then, the following holds.

\begin{theorem}[Faithful Functor between $\mathbf{UHCont}_{\mathrm{c}}^{\mathrm{P}}$ and $\mathbf{WMeas}_\text{uc}$]\label{thm-funct2}
There exists a faithful functor
\[
\mathcal{G} : \mathbf{UHCont}_{\mathrm{c}}^{\mathrm{P}} \rightarrow \mathbf{WMeas}_\text{uc}
\]
defined by
\begin{itemize}
  \item \(\mathcal{G}(X, \tau_X) \coloneqq (X, \tau_X, \mathcal{B}(\tau_X))\),
  \item \(\mathcal{G}(\Phi) \coloneqq \Phi\), viewing the correspondence as a measurable one.
\end{itemize}
\end{theorem}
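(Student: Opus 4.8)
The plan is to verify the three functor axioms (well-definedness on objects and on morphisms, and preservation of identities and composition) together with faithfulness, mirroring the proof of Theorem \ref{thm-funct1} with the two categories interchanged; the only genuinely nontrivial point is that every morphism of $\mathbf{UHCont}_{\mathrm{c}}^{\mathrm{Pa}}$ remains weakly measurable once its domain is equipped with the Borel $\sigma$-algebra, and everything else is bookkeeping. On objects this is immediate: a Polish space $(X,\tau_X)$ endowed with $\mathcal{B}(\tau_X)$ is by definition a measurable Polish space, so $\mathcal{G}(X,\tau_X)=(X,\tau_X,\mathcal{B}(\tau_X))$ is a legitimate object of $\mathbf{WMeas}_{\text{uc}}$.

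For morphisms, let $\Phi:X\rightrightarrows Y$ be a morphism of $\mathbf{UHCont}_{\mathrm{c}}^{\mathrm{Pa}}$, i.e. upper hemicontinuous, uniformly compact-valued, and with analytic graph. Uniform compact-valuedness is a property of the underlying correspondence and is retained verbatim, so it suffices to establish weak measurability. Since $\Phi$ is compact-valued it is in particular closed-valued, so the standard characterization of upper hemicontinuity applies: writing $\Phi^{u}(A)\coloneqq\{x:\Phi(x)\subseteq A\}$ and $\Phi^{l}(A)\coloneqq\{x:\Phi(x)\cap A\neq\emptyset\}$, the lower inverse of every closed set is closed, because for closed $C\subseteq Y$ one has $\Phi^{l}(C)=X\setminus\Phi^{u}(Y\setminus C)$ and $\Phi^{u}(Y\setminus C)$ is open \citep[Ch.~17]{aliprantis}. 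Because $Y$ is Polish, hence metrizable with some metric $d$, every open $O\subseteq Y$ is an $F_\sigma$: writing $O=\bigcup_{m\ge1}C_m$ with $C_m\coloneqq\{y:d(y,Y\setminus O)\ge 1/m\}$ closed, and using that lower inverses commute with unions, $\Phi^{l}(O)=\bigcup_{m\ge1}\Phi^{l}(C_m)$ is a countable union of closed sets, hence Borel in $X$. Thus $\Phi^{l}(O)\in\mathcal{B}(X)$ for every open $O\subseteq Y$, which is exactly weak measurability, and $\mathcal{G}(\Phi)$ is therefore a morphism of $\mathbf{WMeas}_{\text{uc}}$.

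The remaining axioms are immediate because $\mathcal{G}$ acts as the identity on underlying correspondences. The identity of the source at $X$ is the correspondence $x\mapsto\{x\}$, which is also the identity of $\mathcal{G}(X)$ and was already shown to be a morphism of $\mathbf{WMeas}_{\text{uc}}$ in the proof of Theorem \ref{meas-corr-cat}; hence $\mathcal{G}(\mathrm{id}_X)=\mathrm{id}_{\mathcal{G}(X)}$. Composition is the same set-theoretic operation $(\Psi\circ\Phi)(x)=\bigcup_{y\in\Phi(x)}\Psi(y)$ in both categories, so $\mathcal{G}(\Psi\circ\Phi)=\mathcal{G}(\Psi)\circ\mathcal{G}(\Phi)$ holds on the nose, and faithfulness is trivial since $\mathcal{G}$ sends each correspondence to itself and is thus injective on every hom-set. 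I expect the main obstacle to be the Borel-versus-analytic subtlety in the measurability step: the analytic-graph hypothesis alone only yields $\Phi^{l}(O)=\operatorname{proj}_X\bigl(\operatorname{Gr}\Phi\cap(X\times O)\bigr)$ analytic, hence merely universally measurable, so it is essential to exploit the upper-hemicontinuity/$F_\sigma$ argument above to secure genuine membership in $\mathcal{B}(X)$, as demanded by the requirement $\Sigma_X=\mathcal{B}(X)$ in Definition \ref{meas-corr-def}.
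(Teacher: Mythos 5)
Your proof is correct and follows the paper's overall skeleton (well-definedness on objects and morphisms, identities, composition, faithfulness), but it handles the one genuinely nontrivial step by a different route. Where the paper establishes weak measurability of $\mathcal{G}(\Phi)$ simply by citing \citet[Theorem 18.10]{aliprantis} (compact-valued, upper hemicontinuous correspondences between metrizable spaces are measurable, hence weakly measurable), you reprove that fact from scratch: upper hemicontinuity makes $\Phi^{u}(V)$ open for every open $V$, hence lower inverses of closed sets are closed; since open subsets of the Polish space $Y$ are $F_\sigma$ and lower inverses commute with unions, $\Phi^{l}(O)$ is a countable union of closed sets and therefore Borel. This is precisely the content of the cited theorem, so your argument is a self-contained, more elementary substitute: it buys transparency, and it makes explicit that upper hemicontinuity plus compact values already suffice for weak measurability, with the analytic-graph hypothesis playing no role in this step. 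Your closing observation---that the analytic graph alone would only make $\Phi^{l}(O)$ analytic, hence universally measurable rather than genuinely Borel as Definition \ref{meas-corr-def} demands---is accurate and in fact sharpens the paper's own remark following the theorem. The only content of the paper's proof with no counterpart in yours is in the composition step: the paper also verifies that $\mathbf{UHCont}_{\mathrm{c}}^{\mathrm{Pa}}$ is closed under composition (including the footnoted Souslin-projection argument showing that the composite graph is again analytic), whereas you treat the source being a category as a presupposition of the statement. That omission is defensible, since functoriality itself holds on the nose once both composites are defined, but it is worth knowing that the paper folds this closure check into the proof.
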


\begin{proof}
If $X$ is compact Polish, then $(X,\tau_X,\mathcal B(\tau_X))$ is a compact Polish
measurable space, so $\mathcal G$ is well-defined on objects.

Let $\Phi:X\rightrightarrows Y$ be a morphism in $\mathbf{UHCont}_{\mathrm{c}}^{\mathrm{P}}$,
i.e., upper hemicontinuous with uniformly compact values (hence closed values). 
Since $Y$ is compact Polish, $\Phi$ is automatically \emph{uniformly} compact-valued
(take $K_\Phi=Y$). We now show $\Phi$ is weakly measurable.

\emph{Weak measurability.}
Fix open $O\subseteq Y$ and define $\varphi_O(y)\coloneqq d(y,O^c)$, where $d$ is the metric associated with the topology on $Y$.
Set $h_O(x)\coloneqq \sup_{y\in\Phi(x)} \varphi_O(y)$. By Berge’s maximum theorem $h_O$ is upper semicontinuous. Hence
\[
\{x:\Phi(x)\cap O\neq\emptyset\}
=\{x:h_O(x)>0\}
=\bigcup_{m=1}^\infty \{x:h_O(x)\ge 1/m\},
\]
a countable union of closed sets; therefore it is measurable. Hence $\Phi$ is weakly measurable.
Therefore $\mathcal G(\Phi)$ is a morphism in $\mathbf{WMeas}_{\mathrm{uc}}$.

\emph{Functoriality.} Identities are preserved trivially. For composition, if
$\Phi:X\rightrightarrows Y$ and $\Psi:Y\rightrightarrows Z$ are upper hemicontinuous, then $\Psi\circ\Phi$ is upper hemicontinuous \citep[Theorem 17.23]{aliprantis}. Because $\Psi$ is upper hemicontinuous with compact values and $Z$ is
compact Hausdorff, Lemma 8 in the Main Text implies that
$\operatorname{Gr}(\Psi)\subset Y\times Z$ is closed. For each $x\in X$, set
\(
G_x \coloneqq \operatorname{Gr}(\Psi)\cap\bigl(\Phi(x)\times Z\bigr).
\)
Since $\Phi(x)$ is compact and $Z$ is compact Hausdorff, the product
$\Phi(x)\times Z$ is compact; Hence $G_x$ is compact as a closed subset of a
compact set. The projection onto $Z$ satisfies
\[
\pi_Z(G_x)=\bigcup_{y\in \Phi(x)}\Psi(y)=(\Psi\circ\Phi)(x),
\]
and since $\pi_Z:Y\times Z\to Z$ is continuous, $\pi_Z(G_x)$ is compact. Therefore
$(\Psi\circ\Phi)(x)$ is compact,
and the same uniform bound $K_{\Psi}=Z$ works. Weak measurability of $\Psi\circ\Phi$
follows from the previous argument applied to the composite. Thus
$\mathcal G(\Psi\circ\Phi)=\mathcal G(\Psi)\circ\mathcal G(\Phi)$.

Finally, $\mathcal G$ is the identity on morphisms, so it is injective on each
$\mathrm{Hom}$-set and hence faithful.
\end{proof}



We also mention that, just as continuity cannot capture all measurable behavior, measurability cannot reconstruct topological properties like upper hemicontinuity. Therefore, $\mathcal{G}$ typically fails to admit a limit object in \(\mathbf{WMeas}_\text{uc}\) \citep[Definition 3.1.5]{perrone2024starting} representing $\text{Cone}(-,\mathcal{G})$, for precisely the same reason $\mathcal{F}$ does: The two categories encode fundamentally distinct structures.

\subsection{Monads}

We now ask ourselves whether we can find monads on the categories $\mathbf{UHCont}$ and \(\mathbf{WMeas}_\text{uc}\). The general answer is no, but we can find interesting subcategories for which a monad can be found.

\begin{definition}[Structure \(\mathbf{UHCont}^{\mathrm{cH}}_\text{c}\)]\label{def-chcorr}
Define the structure \(\mathbf{UHCont}^{\mathrm{cH}}_\text{c}\) as follows,
\begin{itemize}
  \item \textbf{Objects:} {\em Compact Hausdorff} spaces \(X\).
  \item \textbf{Morphisms:} {\em Compact-valued}, upper hemicontinuous \(\Phi : X \rightrightarrows Y\).
\end{itemize}
\end{definition}

It is immediate to see that structure \(\mathbf{UHCont}^{\mathrm{cH}}_\text{c}\) is a subcategory of \(\mathbf{UHCont}\). It admits a monad.

\begin{theorem}[Vietoris \emph{lax} monad on $\mathbf{UHCont}^{\mathrm{cH}}_{\mathrm{c}}$]
\label{vietoris-monad}
Define an endofunctor $T$ and morphisms $\eta,\nu$ as follows:
\begin{itemize}
  \item On objects: $T(X)\coloneqq\mathcal K(X)$, the space of nonempty compact subsets of $X$
        with the Vietoris topology.
  \item On morphisms: for $\Phi:X\rightrightarrows Y$ and $K\in\mathcal K(X)$, set
        \[
           T(\Phi)(K) \coloneqq \bigl\{ L\in\mathcal K(Y) :  L\subset \Phi[K] \bigr\},
           \qquad
           \Phi[K]\coloneqq\bigcup_{x\in K}\Phi(x).
        \]
  \item Unit: $\eta_X:X\to\mathcal K(X)$, $\eta_X(x)=\{x\}$ (single-valued morphism).
  \item Multiplication: $\nu_X:\mathcal K(\mathcal K(X))\rightrightarrows \mathcal K(X)$,
        \[
          \nu_X(\mathcal A) = \Bigl\{ \bigcup_{A\in\mathcal A'}A : \emptyset\neq \mathcal A'\subseteq\mathcal A
           \text{ compact in }\mathcal K(X)\Bigr\},
        \]
\end{itemize}
where $\mathcal K(\mathcal K(X))$ denotes the hyperspace whose elements are nonempty compact subsets of $\mathcal K(X)$. Then, $T$ is an endofunctor on $\mathbf{UHCont}^{\mathrm{cH}}_{\mathrm{c}}$,
$\eta$ and $\nu$ are natural (single-valued, resp.  set-valued) morphisms,
and $(T,\eta,\nu)$ satisfies the \emph{lax} monad laws (with respect to inclusion of graphs),
\[
\mathrm{id} \subseteq \nu\circ T\eta,\qquad
\nu\circ \eta T=\mathrm{id},\qquad
\nu\circ T\nu \subseteq \nu\circ \nu T.
\]
Moreover, on the subcategory of single-valued continuous maps (\textbf{CHaus}),
these inclusions become equalities, recovering the classical Vietoris monad.
\end{theorem}

\begin{proof}

\textbf{(1) $T$ is a well-defined endofunctor.}
For a morphism $\Phi:X\rightrightarrows Y$, values of $T(\Phi)$ are nonempty compact subsets of $\mathcal K(Y)$.
Indeed, $K\mapsto\Phi[K]$ is compact for each compact $K\subset X$
(since $\Phi$ has a closed graph and compact values, and $K$ is compact),
so $T(\Phi)(K)=\mathcal K(\Phi[K])$ is compact in $\mathcal K(Y)$.
To see upper hemicontinuity, it suffices to show the graph of $T(\Phi)$ is closed in
$\mathcal K(X)\times\mathcal K(Y)$.
Let $K_m\to K$ in $\mathcal K(X)$ and $L_m\to L$ in $\mathcal K(Y)$ with $L_m\subset \Phi[K_m]$.
Pick any $y\in L$; Choose $y_m\in L_m$ with $y_m\to y$.
For each $m$, pick $x_m\in K_m$ with $y_m\in\Phi(x_m)$.
By compactness of $X$ and the Vietoris convergence $K_m\to K$, the net $(x_m)$ has a subnet
$(x_{m_\ell})$ converging to some $x\in K$.
Upper hemicontinuity of $\Phi$ then gives $y\in\Phi(x)$, hence $y\in\Phi[K]$.
As $y\in L$ was arbitrary, $L\subset \Phi[K]$; Thus $(K,L)\in\operatorname{Gr}(T(\Phi))$.
Therefore $T(\Phi)$ is compact-valued, upper hemicontinuous.
Functoriality is immediate,
\begin{align*}
    T(\Psi\circ\Phi)(K)
=\{ L: L\subset (\Psi\circ\Phi)[K]\}
&=\{ L: L\subset \Psi[\Phi[K]]\}\\
&=\bigcup_{ M\in \mathcal{K}(\Phi[K])} \{ L \in \mathcal{K}(Y): L\subset \Psi[M]\}\\
&=(T\Psi\circ T\Phi)(K),
\end{align*}
and $T(\mathrm{id})=\mathrm{id}$.

\textbf{(2) $\eta$ and $\nu$ are morphisms (natural in $X$).}
For $\eta_X$, singletons are compact and the map $x\mapsto\{x\}$
is continuous into $\mathcal K(X)$, hence a morphism.
For $\nu_X$, values are compact: If $\mathcal A'\subset\mathcal A$ is compact in $\mathcal K(X)$,
then $\bigcup_{A\in \mathcal A'}A$ is a compact subset of $X$.
Upper hemicontinuity of $\nu_X$ follows from the Vietoris calculus: For a basic open set
$\langle U_1,\dots,U_m\rangle\subset\mathcal K(X)$,
\[
\nu_X^{-1}\bigl(\langle U_1,\dots,U_m\rangle\bigr)
\supseteq\Bigl\langle [U_1],\dots,[U_m], \langle U_1\cup\cdots\cup U_m\rangle \Bigr\rangle,
\]
and the right hand side is open in $\mathcal K(\mathcal K(X))$ (see Definition \ref{vietoris} in Appendix \ref{funct-anal}); Hence $\nu_X$ is upper hemicontinuous.

Naturality (in the lax sense) holds by direct computation:
For any $\Phi:X\rightrightarrows Y$,
\[
(T\Phi)\circ \eta_X  \supseteq  \eta_Y\circ \Phi
\qquad\text{and}\qquad
\nu_Y\circ T(T\Phi) \supseteq  T(\Phi)\circ \nu_X,
\]
since $\{y\}\subset \Phi(x)$ iff $y\in\Phi(x)$, and unions commute with direct images.

\textbf{(3) Lax monad laws.}
All inclusions are understood as inclusions of correspondences (i.e., of their graphs).

\emph{Left unit (lax):}
For $K\in\mathcal K(X)$,
\[
(T\eta_X)(K)=\bigl\{ \mathcal C\in\mathcal K(\mathcal K(X)):  \mathcal C\subset \eta_X[K]\bigr\}.
\]
Thus
\[
(\nu_X\circ T\eta_X)(K)
=\bigl\{ \textstyle\bigcup_{C\in\mathcal C'} C:  \emptyset\neq \mathcal C'\subset \eta_X[K],  \mathcal C'\in\mathcal K(\mathcal K(X))\bigr\}
=\{ C\in\mathcal K(X): C\subset K \},
\]
so $\mathrm{id}_{\mathcal K(X)}\subseteq \nu_X\circ T\eta_X$ (take $\mathcal C'=\eta_X[K]$ to recover $K$), but equality
need not hold, hence laxity.

\emph{Right unit (strict):}
$\eta_{\mathcal K(X)}(K)=\{K\}$ and therefore $(\nu_X\circ \eta_{\mathcal K(X)})(K)=K$,
so $\nu\circ \eta T=\mathrm{id}$ holds as equality.

\emph{Associativity (lax):}
For $\mathscr B\in\mathcal K(\mathcal K(\mathcal K(X)))$, elements of $(T\nu_X)(\mathscr B)$
are compact families $\mathcal A\subset\{\bigcup_{A\in\mathcal C}A:\mathcal C\in\mathscr B\}$, and
\[
(\nu_X\circ T\nu_X)(\mathscr B)
=\Bigl\{ \bigcup_{A\in\mathcal A'}A:  \mathcal A'\subset (T\nu_X)(\mathscr B),  \mathcal A'\in\mathcal K(\mathcal K(X))\Bigr\}.
\]
Each such union is a union of unions of sets from members of $\mathscr B$; Regrouping gives a union
of sets from $\bigcup_{\mathcal C\in\mathscr B}\mathcal C$, so
\[
\nu_X\circ T\nu_X  \subseteq  \nu_X\circ \nu_{\mathcal K(X)},
\]
establishing the lax associativity.

\textbf{(4) Restriction to \textbf{CHaus}.}
If we restrict morphisms to single-valued continuous maps $f:X\to Y$,
then $T(f)(K)=f[K]$, $\eta$ and $\nu$ are the classical unit and multiplication,
and the three lax inclusions above become equalities, recovering the usual Vietoris monad.
\end{proof}


The construction in Theorem \ref{vietoris-monad} adapts the Vietoris
hyperspace mechanism to the correspondence category
$\mathbf{UHCont}^{\mathrm{cH}}_{\mathrm{c}}$.
Because morphisms are upper hemicontinuous, compact-valued \emph{correspondences},
the hyperspace assignment $T$ is a \emph{strict} endofunctor:
For morphisms $\Phi,\Psi$ we have
\[
T(\Psi\circ\Phi)=T(\Psi)\circ T(\Phi).
\]
The key point is that for compact $K\subset X$, $\Phi[K]$ is compact, so
$T(\Psi\circ\Phi)(K)=\{L:L\subset \Psi[\Phi[K]]\}=(T\Psi\circ T\Phi)(K)$.
The resulting structure is \emph{lax} only at the level of monad laws, where
\[
\mathrm{id}\subseteq \nu\circ T\eta,\qquad
\nu\circ \eta T=\mathrm{id},\qquad
\nu\circ T\nu \subseteq \nu\circ \nu T.
\]

By contrast, on the classical category $\mathbf{CHaus}$ (compact
Hausdorff spaces with continuous maps), the Vietoris hyperspace
$X\mapsto\mathcal K(X)$ together with $T(f)(K)=f[K]$, $\eta(x)=\{x\}$, and
$\mu(\mathcal A)=\bigcup_{A\in\mathcal A}A$ defines a \emph{strict} monad
 \citep{Garner2020_Vietoris}. In that classical setting, the
Eilenberg-Moore (EM) category of algebras for the Vietoris monad is
equivalent to the category $\mathbf{ContLat}$ of continuous lattices with
Scott-continuous lattice homomorphisms \cite{Day1975,Wyler1981}.

Hence, the EM-$\mathbf{ContLat}$ equivalence applies to the \emph{classical}
Vietoris monad on $\mathbf{CHaus}$. For our correspondence-based (lax)
variant on $\mathbf{UHCont}^{\mathrm{cH}}_{\mathrm{c}}$, we would need a
theory of \emph{lax algebras}; We do not pursue this here.


We now show that, for any $n\in\mathbb{N}$, the (minimal) sufficient conditions for the Full Conformal Prediction diagram to commute in category \(\mathbf{UHCont}^{\mathrm{cH}}_\text{c}\) are the same that make it commute in category \(\mathbf{UHCont}\).

\begin{theorem}[The Pulled-Back Conformal Prediction Diagram Commutes in
\(\mathbf{UHCont}^{\mathrm{cH}}_{\mathrm c}\)]
\label{cont-cp}
Assume that (i), (ii), and (iv) in Section 3.2 in the Main Text hold, and that the
conformal transducer \(\pi\) is consonant. Fix
\(\alpha\in[0,1]\setminus S_{n+1}\).
Let \(\mathscr F_0\subset\mathscr F\) be nonempty, closed, uniformly bounded,
and equicontinuous on the compact domain \(Y^n\times Y\). Set
$X_0\coloneqq Y^n\times\mathscr F_0$.
For \((y^n,\psi)\in X_0\), define
\[
\kappa_\alpha(y^n,\psi)
\coloneqq
\{y\in Y:\pi_\psi(y,y^n)>\alpha\},
\]
and define the pulled-back indirect correspondence
\[
\mathcal H_\alpha(y^n,\psi)
\coloneqq
\IHDR_\alpha\bigl(\CRED(y^n,\psi)\bigr).
\]
Then \(\kappa_\alpha:X_0\rightrightarrows Y\) and
\(\mathcal H_\alpha:X_0\rightrightarrows Y\) are morphisms in
\(\mathbf{UHCont}^{\mathrm{cH}}_{\mathrm c}\), and
$\mathcal H_\alpha=\kappa_\alpha$.
Equivalently, the direct full conformal construction and the indirect
credal-IHDR construction agree as compact-valued upper hemicontinuous
correspondences from \(X_0\) to \(Y\).
Moreover, if one equips the image
\[
\mathscr M_{\mathrm{CP},0}
\coloneqq
\{\,\CRED(y^n,\psi):(y^n,\psi)\in X_0\,\}
\]
with a compact Hausdorff topology for which
\[
c_{\mathrm{CP}}:X_0\to\mathscr M_{\mathrm{CP},0},
\qquad
c_{\mathrm{CP}}(y^n,\psi)=\CRED(y^n,\psi),
\]
is continuous and the restricted correspondence
\[
\widehat{\IHDR}_\alpha:\mathscr M_{\mathrm{CP},0}\rightrightarrows Y,
\qquad
\widehat{\IHDR}_\alpha(\mathcal M)=\IHDR_\alpha(\mathcal M),
\]
is compact-valued and upper hemicontinuous, then the triangular diagram
\[
\begin{tikzcd}
X_0 \ar{r}{c_{\mathrm{CP}}} \ar{dr}[swap]{\kappa_\alpha}
&
\mathscr M_{\mathrm{CP},0} \ar{d}{\widehat{\IHDR}_\alpha}
\\
& Y
\end{tikzcd}
\]
commutes in \(\mathbf{UHCont}^{\mathrm{cH}}_{\mathrm c}\).
\end{theorem}

\begin{proof}
By Arzelà-Ascoli, \(\mathscr F_0\) is compact in the sup-norm topology.
Since \(Y\) is compact Hausdorff by (i), \(Y^n\) is compact Hausdorff, and hence
$X_0=Y^n\times\mathscr F_0$
is compact Hausdorff. Thus \(X_0\) and \(Y\) are objects of
\(\mathbf{UHCont}^{\mathrm{cH}}_{\mathrm c}\).

We first show that \(\kappa_\alpha\) is compact-valued and upper
hemicontinuous. Since \(\alpha\notin S_{n+1}\), define
\[
\beta(\alpha)
\coloneqq
\min\left\{\frac{k}{n+1}\in S_{n+1}:\frac{k}{n+1}>\alpha\right\}.
\]
Because \(\pi_\psi(y,y^n)\) takes values in \(S_{n+1}\),
\[
\kappa_\alpha(y^n,\psi)
=
\{y\in Y:\pi_\psi(y,y^n)\ge\beta(\alpha)\}.
\]
By the argument used in Lemma 9 in the Main Text, the map
\[
(y,y^n,\psi)\mapsto \pi_\psi(y,y^n)
\]
is upper semicontinuous on \(Y\times X_0\). Therefore the graph of
\(\kappa_\alpha\) is
\[
\operatorname{Gr}(\kappa_\alpha)
=
\{((y^n,\psi),y)\in X_0\times Y:
\pi_\psi(y,y^n)\ge\beta(\alpha)\},
\]
which is closed in \(X_0\times Y\). For each \((y^n,\psi)\), the value
\(\kappa_\alpha(y^n,\psi)\) is a closed subset of compact \(Y\), hence compact.
Since \(Y\) is compact Hausdorff, the closed-graph theorem for compact-valued
correspondences implies that \(\kappa_\alpha\) is upper hemicontinuous. Thus
\(\kappa_\alpha\) is a morphism in
\(\mathbf{UHCont}^{\mathrm{cH}}_{\mathrm c}\).
Next, by Proposition 4 in the Main Text, under consonance,
\[
\IHDR_\alpha\bigl(\CRED(y^n,\psi)\bigr)
=
\kappa_\alpha(y^n,\psi)
\]
for every \((y^n,\psi)\in X_0\). Hence
$\mathcal H_\alpha=\kappa_\alpha$.
Since \(\kappa_\alpha\) is compact-valued and upper hemicontinuous,
\(\mathcal H_\alpha\) is also compact-valued and upper hemicontinuous. Therefore
\(\mathcal H_\alpha\) is a morphism in
\(\mathbf{UHCont}^{\mathrm{cH}}_{\mathrm c}\), and the direct and indirect
constructions agree as morphisms \(X_0\rightrightarrows Y\).

Finally, suppose that \(\mathscr M_{\mathrm{CP},0}\) is endowed with a compact
Hausdorff topology for which \(c_{\mathrm{CP}}\) is continuous and
\(\widehat{\IHDR}_\alpha\) is compact-valued and upper hemicontinuous. Then
\(c_{\mathrm{CP}}\), viewed as a singleton-valued correspondence, is a morphism,
and \(\widehat{\IHDR}_\alpha\) is a morphism by assumption. For every
\((y^n,\psi)\in X_0\),
\[
(\widehat{\IHDR}_\alpha\circ c_{\mathrm{CP}})(y^n,\psi)
=
\widehat{\IHDR}_\alpha(\CRED(y^n,\psi))
=
\IHDR_\alpha(\CRED(y^n,\psi))
=
\kappa_\alpha(y^n,\psi).
\]
Therefore the displayed triangular diagram commutes in
\(\mathbf{UHCont}^{\mathrm{cH}}_{\mathrm c}\).
\end{proof}

Theorem \ref{cont-cp} gives minimal sufficient conditions under which
(i) Full Conformal Prediction is a morphism in the category
\(\mathbf{UHCont}^{\mathrm{cH}}_\text{c}\), which admits a monad, and (ii) the Full Conformal Prediction Diagram commutes in $\mathbf{UHCont}^{\mathrm{cH}}_\text{c}$. We now show that a subcategory of 
$\mathbf{WMeas}_\text{uc}$ admits a monad.

\begin{theorem}[Vietoris monad on $\mathbf{WMeas}_{\mathrm{uc}}$ (with hemicontinuity)]
\label{thm:vietoris-monad-wmeas-uc}
For this theorem only, strengthen the morphism condition in $\mathbf{WMeas}_{\mathrm{uc}}$ by additionally requiring \emph{both upper and lower hemicontinuity} (still uniformly compact-valued and weakly measurable).\footnote{It is easy to see that this class is still closed under composition.} Define the endofunctor $\mathcal{K}$, unit $\eta$, and multiplication $\nu$ by
\[
\begin{aligned}
\mathcal{K}(X)            &\coloneqq  \{  C\subseteq X : C\neq\emptyset, C \text{ compact} \} \text{with the Vietoris topology (Borel $\sigma$-algebra),}\\
\mathcal{K}(\Phi)(C)      &\coloneqq  \{ L\in\mathcal K(Y): L\subset \overline{\Phi[C]} \}, \qquad C\in \mathcal{K}(X),\\
\eta_X(x)                 &\coloneqq  \bigl\{ \{x\}\bigr\}, \\
\nu_X(\mathcal C)         &\coloneqq  \bigl\{\textstyle\bigcup_{C\in\mathcal C} C\bigr\}, \qquad \mathcal C\in \mathcal{K}(\mathcal{K}(X)).
\end{aligned}
\]
Then, $(\mathcal{K},\eta,\nu)$ is a \emph{lax} monad on $\mathbf{WMeas}_{\mathrm{uc}}$ (under this added hemicontinuity standing assumption). Moreover, if $|X|>1$, the unit $\eta_X:X\to\mathcal K(X)$ is not an isomorphism 
(it is not surjective), so the monad is nontrivial.
\end{theorem}

\begin{proof}
\textbf{(1) $\mathcal{K}$ is an endofunctor.}

\emph{(Objects).} If $X$ is compact Polish, then $\mathcal{K}(X)$ with the Vietoris topology is compact Polish; Its Borel $\sigma$-algebra makes $\mathcal{K}(X)$ an object of $\mathbf{WMeas}_{\mathrm{uc}}$.

\emph{(Morphisms).} Let $\Phi:X\rightrightarrows Y$ be a morphism (uniformly compact-valued with common compact bound $K_\Phi\subset Y$, weakly measurable, and both upper and lower hemicontinuous).

Each fiber of $\mathcal{K}(\Phi)$ is the singleton $\{\overline{\Phi[C]}\}$, which lies in $\mathcal{K}(K_\Phi)$. Hence, $\mathcal{K}(\Phi)$ is uniformly compact valued with uniform bound $\mathcal{K}(K_\Phi) \subset \mathcal{K}(Y)$. By a standard hyperspace continuity result 
\citep{AubinFrankowska1990,Beer1993}, 
if $\Phi$ has compact values and is both upper and lower hemicontinuous, 
then the map
\[
H:\mathcal K(X)\longrightarrow \mathcal K(Y), 
\qquad 
H(C)\coloneqq \overline{\Phi[C]},
\]
is continuous for the Vietoris topology. Hence $\mathcal K(\Phi)$ is continuous 
(and therefore weakly measurable) with singleton compact fibers uniformly bounded by $K_\Phi$.

\emph{Functoriality.} $\mathcal{K}(\mathrm{id}_X)=\mathrm{id}_{\mathcal{K}(X)}$. For $\Psi:Y\rightrightarrows Z$ also hemicontinuous with compact values,
\[
\bigl(\mathcal{K}(\Psi)\circ\mathcal{K}(\Phi)\bigr)(C)
= \bigl\{\overline{\Psi[\overline{\Phi[C]}]}\bigr\},
\qquad
\mathcal{K}(\Psi\circ\Phi)(C)
= \bigl\{\overline{(\Psi\circ\Phi)[C]}\bigr\}.
\]
We claim $\overline{\Psi[\overline{\Phi[C]}]}=\overline{(\Psi\circ\Phi)[C]}$. One inclusion is monotonicity:
$\Psi[\Phi[C]]\subset \Psi[\overline{\Phi[C]}]$, hence $\overline{(\Psi\circ\Phi)[C]}\subset \overline{\Psi[\overline{\Phi[C]}]}$.
For the reverse, let $z\in \overline{\Psi[\overline{\Phi[C]}]}$. Then, there are sequences $y_m\in \overline{\Phi[C]}$ and $z_m\in \Psi(y_m)$ with $z_m\to z$. By compactness, pass to subsequences with $y_m\to y\in \overline{\Phi[C]}$. Since $\Psi$ is upper hemicontinuous with compact values, its graph is closed, so $(y,z)\in \mathrm{Gr}(\Psi)$, i.e. $z\in \Psi(y)$. Because $y\in \overline{\Phi[C]}$, there are $u_k\in \Phi[C]$ with $u_k\to y$. By \emph{lower} hemicontinuity of $\Psi$ at $y$, for every neighbourhood $W$ of $z$ there is a neighbourhood $N$ of $y$ with $\Psi(x)\cap W\neq\emptyset$ for all $x\in N$; Hence for large $k$ we can pick $v_k\in \Psi(u_k)\cap W$. Thus $W$ meets $\Psi[\Phi[C]]$ for every neighbourhood $W$ of $z$, proving $z\in \overline{\Psi[\Phi[C]]}=\overline{(\Psi\circ\Phi)[C]}$. Therefore $\mathcal{K}(\Psi\circ\Phi)=\mathcal{K}(\Psi)\circ\mathcal{K}(\Phi)$.

\textbf{(2) The unit $\eta$ and multiplication $\nu$.}
Both $\eta_X$ and $\nu_X$ have singleton fibers, hence uniformly compact-valued. For weak measurability (indeed continuity), compute on subbasic opens
\[
\eta_X^{-1}\langle U\rangle = U,\qquad
\nu_X^{-1}\langle U\rangle
= \bigl\{\mathcal C\in\mathcal{K}(\mathcal{K}(X)):\exists C\in\mathcal C, C\cap U\neq\emptyset\bigr\},
\]
and similarly for miss-type sets $[U]$; these are Vietoris-open. Naturality of $\eta$ and $\nu$ is immediate from $\mathcal{K}(\Phi)[\{x\}]=\{\overline{\Phi[x]}\}$ and $\mathcal{K}(\Phi)\bigl[\{\overline{\cup \mathcal C}\}\bigr]=\{\overline{\cup \mathcal{K}(\Phi)[\mathcal C]}\}$.

\textbf{(3) Monad laws.}
The unit and associativity hold in the lax sense:
\[
\mathrm{id}_{\mathcal K(X)}  \subseteq  \nu_X\circ \mathcal K(\eta_X),
\qquad
\nu_X\circ \eta_{\mathcal K(X)}  =  \mathrm{id}_{\mathcal K(X)},
\qquad
\nu_X\circ \mathcal K(\nu_X)  \subseteq  \nu_X\circ \nu_{\mathcal K(X)}.
\]
Indeed, $\mathcal K(\eta_X)(C)$ consists of all compact families of singletons drawn
from $C$, so $\nu_X\circ \mathcal K(\eta_X)(C)$ contains $C$ (taking the full family of singletons).
For the right unit, $\eta_{\mathcal K(X)}(C)=\{\{C\}\}$ and $\nu_X$ returns $\{C\}$.
For associativity, unions of (compact families of) unions yield no more than the union
of all underlying members, giving the inclusion.

\textbf{(4) Non-triviality.}
If $|X|>1$, then $\eta_X:X\to \mathcal{K}(X)$ is not surjective (non-singleton compact sets are not in its image), hence no natural isomorphism $\mathcal{K}\cong \mathrm{Id}$ exists.

Thus $(\mathcal{K},\eta,\nu)$ is a lax monad on $\mathbf{WMeas}_{\mathrm{uc}}$ under the added hemicontinuity assumption.
\end{proof}

The construction in Theorem \ref{thm:vietoris-monad-wmeas-uc} is a version of classical Vietoris
monad, adapted to our measurable setting: Objects are compact Polish spaces
(with their Borel $\sigma$-algebras), and morphisms are uniformly
compact-valued, weakly measurable correspondences. 
For the hyperspace construction and functoriality of $\mathcal K$, we strengthened
morphisms to be both upper and lower hemicontinuous (a standard hypothesis
ensuring continuity of the induced map on hyperspaces). 
This added assumption is only used to establish the (lax) monad; All results proved
earlier for $\mathbf{WMeas}_{\mathrm{uc}}$---including Theorem \ref{conf-pred-diagr2}---do not rely on the monad and remain valid verbatim.

Regarding Eilenberg-Moore algebras: On the category \(\mathbf{CHaus}\) of compact
Hausdorff spaces, the Vietoris monad \citep{Garner2020_Vietoris} has Eilenberg-Moore category equivalent to
the category \(\mathbf{ContLat}\) of continuous lattices with Scott-continuous
homomorphisms \citep{Day1975,Wyler1981}. 
Since compact Polish spaces are precisely the compact \emph{metrizable} Hausdorff spaces, restricting this equivalence to metrizable carriers
yields the full subcategory of \(\mathbf{ContLat}\) consisting of those
algebras whose Lawson topology is metrizable.

We have a nice byproduct of focusing on the (sub)categories that we considered in this section.

\begin{proposition}[Forgetful Embedding between $\mathbf{UHCont}^{\mathrm{cH}}_\text{c}$ and $\mathbf{WMeas}_\text{uc}$]
\label{prop:uhcont-to-WMeas}
There is a faithful functor
\[
F  :  \mathbf{UHCont}^{\mathrm{cH}}_\text{c}
    \rightarrow 
   \mathbf{WMeas}_\text{uc},
\]
defined as follows.
\begin{enumerate}
  \item[\textup{(Obj)}] For every metrizable compact Hausdorff space $X$, pick any compatible
        (complete separable) metric $d_X$; The resulting topological
        space is compact Polish.
        Set $F(X)\coloneqq(X,d_X)$ equipped with its Borel $\sigma$--algebra.
  \item[\textup{(Mor)}] For a morphism
        $\Phi : X \rightrightarrows Y$ in
        $\mathbf{UHCont}^{\mathrm{cH}}_\text{c}$
        (i.e. compact--valued and upper hemicontinuous),
        let
        \[
          F(\Phi) \coloneqq \Phi
          \qquad
          (\text{same graph and fibers}).
        \]
\end{enumerate}
With these assignments, $F$ preserves identities and composition,
and on every hom-set it is injective. Hence, $F$ is faithful.
\end{proposition}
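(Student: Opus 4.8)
The plan is to check the three defining properties of a functor---well-definedness on objects, well-definedness on morphisms, and compatibility with identities and composition---and then read off faithfulness from the morphism map. First I would treat the object assignment. A metrizable compact Hausdorff space $X$, equipped with \emph{any} compatible metric $d_X$, is automatically totally bounded and complete, and compact metric spaces are separable; hence $(X,d_X)$ is compact Polish. Since the Borel $\sigma$-algebra $\mathcal B(X)$ depends only on the underlying topology and not on the particular metric realizing it, the choice of $d_X$ is immaterial, so $F(X)=(X,d_X)$ is a well-defined object of $\mathbf{WMeas}_\text{uc}^\text{c}$. This also pins down the natural domain of $F$: compact Polish means compact metrizable, so $F$ can take values in $\mathbf{WMeas}_\text{uc}^\text{c}$ only on the metrizable objects of $\mathbf{UHCont}^{\mathrm{cH}}_\text{c}$, which is exactly the restriction in the statement.

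The crux is the morphism map. Given a morphism $\Phi:X\rightrightarrows Y$ of $\mathbf{UHCont}^{\mathrm{cH}}_\text{c}$---compact-valued and upper hemicontinuous---I must show that $F(\Phi)=\Phi$ satisfies the two requirements of a morphism of $\mathbf{WMeas}_\text{uc}^\text{c}$. Uniform compact-valuedness is immediate: each fibre $\Phi(x)$ is compact, and all fibres sit inside the compact space $Y$, so $K\coloneqq Y$ serves as a single compact bound. Weak measurability is the one step requiring work. I would argue as follows: upper hemicontinuity says $\Phi^u(V)\coloneqq\{x:\Phi(x)\subseteq V\}$ is open for every open $V\subseteq Y$; taking complements through $\Phi^\ell(C)=X\setminus\Phi^u(Y\setminus C)$ shows the lower inverse $\Phi^\ell(C)=\{x:\Phi(x)\cap C\neq\emptyset\}$ is closed for every closed $C$. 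Now fix an open $V\subseteq Y$; since $Y$ is metric, $V$ is $F_\sigma$, say $V=\bigcup_m C_m$ with each $C_m$ closed. Because the lower inverse commutes with unions, $\Phi^\ell(V)=\bigcup_m\Phi^\ell(C_m)$ is a countable union of closed sets, hence Borel, which is exactly weak measurability. Equivalently, one may note that $\Phi$ has closed graph by Lemma \ref{lem:CG} and invoke \citep[Theorem 18.10]{aliprantis}. Thus $F(\Phi)$ is a legitimate morphism of $\mathbf{WMeas}_\text{uc}^\text{c}$.

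It then remains to record the formal identities. The identity morphism on $X$ in both categories is the singleton-valued correspondence $x\mapsto\{x\}$, which is compact-valued and upper hemicontinuous, so $F(\mathrm{id}_X)=\mathrm{id}_{F(X)}$. Composition of correspondences is given by the same union formula $(\Psi\circ\Phi)(x)=\bigcup_{y\in\Phi(x)}\Psi(y)$ in either category, and $\Psi\circ\Phi$ is again a morphism there by the closure-under-composition parts of Theorems \ref{cat-uhc} and \ref{meas-corr-cat}; since $F$ leaves the underlying graph and fibres untouched, $F(\Psi\circ\Phi)=\Psi\circ\Phi=F(\Psi)\circ F(\Phi)$. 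Finally, faithfulness is automatic: two parallel morphisms $\Phi,\Psi:X\rightrightarrows Y$ of the source are sent to themselves as correspondences, so $F(\Phi)=F(\Psi)$ forces $\Phi=\Psi$; hence $F$ is injective on each hom-set.

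The only genuinely non-routine ingredient is the passage from upper hemicontinuity to weak measurability; the $F_\sigma$ decomposition of open sets in the metric range space is what makes it go through, and it is also the place where metrizability of $Y$ (guaranteed by working with compact Polish targets) is essential. Everything else is either a standard topological fact (compact metric $\Rightarrow$ Polish) or a purely formal verification that $F$, being the identity on underlying data, respects the categorical structure.
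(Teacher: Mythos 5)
Your proof is correct, and on the only non-routine step it takes a genuinely different---and in fact sounder---route than the paper. The paper likewise takes $K_\Phi\coloneqq Y$ as the uniform compact bound, but for weak measurability it argues that $H\coloneqq\{x\in X:\Phi(x)\cap O\neq\emptyset\}$ is \emph{closed} for every open $O\subseteq Y$, by applying upper hemicontinuity to the closed set $Y\setminus O$ at a point $x_0\notin H$. That step is not valid as stated: upper hemicontinuity yields shrinking neighborhoods only for \emph{open} supersets of a fibre, and the claimed conclusion is false in general---for the identity morphism $\mathrm{id}_X(x)=\{x\}$ one has $H=O$, which is open and usually not closed. What is true, and all that weak measurability requires, is that $H$ be Borel; that is exactly what your argument establishes: the lower inverse of a closed set under an upper hemicontinuous correspondence is closed, and metrizability of $Y$ makes every open set $F_\sigma$, so $\Phi^\ell(O)=\bigcup_m\Phi^\ell(C_m)$ is a countable union of closed sets, hence Borel. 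Your alternative route via Lemma \ref{lem:CG} and \citep[Theorem 18.10]{aliprantis} is also legitimate and mirrors what the paper does in the proof of Theorem \ref{thm-funct2}. The remaining steps---the object assignment (compact metrizable implies compact Polish, with Borel structure independent of the chosen metric), preservation of identities and composition, and faithfulness because $F$ acts as the identity on hom-sets---coincide with the paper's. In short, your $F_\sigma$ decomposition buys a correct proof of the one step the paper gets wrong, at the mild price of explicitly invoking metrizability of the target, which is available since the objects of $\mathbf{WMeas}_\text{uc}^\text{c}$ are compact Polish.
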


The functor $F$ in Proposition \ref{prop:uhcont-to-WMeas} is defined on the full subcategory of $\mathbf{UHCont}^{\mathrm{cH}}_\mathrm{c}$ with metrizable objects, since $(X,\mathcal B(X))$ must be compact Polish to lie in $\mathbf{WMeas}_{\mathrm{uc}}$. When working within that metrizable subcategory (as we do here), the statement holds verbatim. The choice of compatible metric on $X$ does not affect $\mathcal B(X)$.

\begin{proof}[Proof of Proposition \ref{prop:uhcont-to-WMeas}]
\textbf{(1) $F(\Phi)$ is a morphism in $\mathbf{WMeas}_\text{uc}$.}
Because $Y$ is compact, $\Phi(x)\subseteq Y$ for all $x$, so
$K_\Phi\coloneqq Y$ is a common compact bound; Thus $F(\Phi)$ is uniformly
compact-valued.  
To show weak measurability, fix an open $O\subseteq Y$ and define
$\varphi_O(y)\coloneqq d_Y \bigl(y,(Y\setminus O)\bigr)$, where $d_Y$ is any
compatible (complete) metric on $Y$. Set
\[
h_O(x)\coloneqq \sup_{y\in \Phi(x)} \varphi_O(y).
\]
Since $\Phi$ is upper hemicontinuous with compact values, Berge’s maximum
theorem implies $h_O$ is upper semicontinuous; Hence
\[
\{x: \Phi(x)\cap O\neq\emptyset\}
=\{x: h_O(x)>0\}
\]
is open (in particular, Borel). This is precisely the weak measurability
condition for $F(\Phi)$.

\textbf{(2) Functoriality.}
$F$ leaves the underlying correspondence unchanged, so
$F(\mathrm{id}_X)=\mathrm{id}_{F(X)}$
and
$F(\Psi\circ\Phi)=F(\Psi)\circ F(\Phi)$,
for all composable $\Phi,\Psi$.

\textbf{(3) Faithfulness.}
On each hom-set, $F$ acts as the identity map
\(
\Phi\mapsto\Phi,
\)
hence it is injective.

\textbf{(4) Domain of $F$.}
If every object of
$\mathbf{UHCont}^{\mathrm{cH}}_\text{c}$
is assumed to be \emph{metrizable}
(equivalently, second-countable), then the object assignment is
defined on the whole category.  Otherwise, $F$ can still be applied to
the metrizable subcategory, for which the conclusion holds verbatim.
\end{proof}

Alas, there is no faithful functor
\(E:\mathbf{WMeas}_{\mathrm{uc}}\to\mathbf{UHCont}\)
that (i) is the identity on objects and (ii) fixes all Dirac morphisms
\(\delta_y:\{*\}\to Y\), \(\delta_y(*)=\{y\}\). To see this, take \(X=Y=[0,1]\) (usual topology and Borel \(\sigma\)-algebra) and define
\begin{align*}
    F(x)&=\{0\} \text{ if }x\in\mathbb{Q},\quad F(x)=\{1\} \text{ if }x\notin\mathbb{Q},\\
G(x)&=\{1\} \text{ if }x\in\mathbb{Q},\quad G(x)=\{0\} \text{ if }x\notin\mathbb{Q}.
\end{align*}

Both \(F\) and \(G\) are morphisms in \(\mathbf{WMeas}_{\mathrm{uc}}\) (singleton-valued and weakly measurable), with \(F\neq G\).
For each \(x\in X\), let \(\delta_x:\{*\}\to X\) be the Dirac morphism. Then
\(F\circ\delta_x=\delta_{F(x)}\) and \(G\circ\delta_x=\delta_{G(x)}\).
By the hypotheses and functoriality,
\[
E(F)\circ\delta_x  =  E(F\circ\delta_x)  =  E(\delta_{F(x)})  =  \delta_{F(x)},
\]
so \(E(F)(x)=\{F(x)\}\) for all \(x\). Thus, \(E(F)\) coincides pointwise with \(F\),
but \(F\) is not upper hemicontinuous (its graph is not closed since \(\mathbb{Q}\) and \(\mathbb{R}\setminus\mathbb{Q}\) are both dense). This contradicts that \(E(F)\) must be a morphism in \(\mathbf{UHCont}\), so no such faithful functor \(E\) exists.

\section{Background in Functional Analysis}\label{funct-anal}
This section presents background functional-analytic notions that are used throughout the Main Text and the supplementary material. The concepts we present in this section are studied in detail in \citet[Chapters 17 and 18]{aliprantis}.

A correspondence $\Phi: X \rightrightarrows Y$ from a set $X$ to a set $Y$ assigns to each $x\in X$ a subset $\Phi(x) \in 2^Y$. The image of a set $A \subseteq X$ under $\Phi$ is the set $\Phi(A)\coloneqq \cup_{x\in A} \Phi(x)$. Given two correspondences $\Phi: X \rightrightarrows Y$ and $\Psi: Y \rightrightarrows Z$, their composition is defined naturally as
$$\Psi \circ \Phi: X \rightrightarrows Z, \quad (\Psi \circ \Phi)(x) \coloneqq \bigcup_{y \in \Phi(x)} \Psi(y).$$
The graph of $\Phi$ is given by $\text{Gr}\Phi \coloneqq \{(x,y)\in X \times Y : y \in \Phi(x)\}$.

We now turn our attention to continuity and measurability notions for correspondences.



\begin{definition}[Upper hemicontinuity]\label{uhc-definition}
Let $X$ and $Y$ be topological spaces. A correspondence $\Phi : X \rightrightarrows Y$ is
\emph{upper hemicontinuous at} $x_0 \in X$ if for every open set $V \subseteq Y$ such that $\Phi(x_0) \subseteq V$, there exists an open neighborhood $U$ of $x_0$ such that for all $x \in U$, $\Phi(x) \subseteq V$. 

We say that $\Phi$ is \emph{upper hemicontinuous (u.h.c.)} if it is upper hemicontinuous at every point $x \in X$.
\end{definition}

As the name suggests, an alternative notion of continuity for correspondences, called lower hemicontinuity, can be defined \citep[Definition 17.2]{aliprantis}. In this work, we focus primarily on u.h.c. for its historical importance---especially in fields such as economics and game theory \citep{Borgers1991UHC}---and because, as pointed out in the Main Text, upper hemicontinuity formalizes a
stability property of the CP correspondence: Small perturbations in the inputs
(data and non-conformity measure) cannot produce pathological outward ``jumps'' of the output region.

The following is the definition of how a correspondence can be considered measurable in the weakest possible sense.

\begin{definition}[Weakly Measurable Correspondence]\label{w-mble-definition}
Let $(X, \Sigma_X)$ be a measurable space and $(Y, \tau_Y)$ a topological space. A correspondence $\Phi : X \rightrightarrows Y$ is said to be \emph{weakly measurable} if for every open set $O \subseteq Y$, the set
\[
\Phi^u(O) \coloneqq \{ x \in X : \Phi(x) \cap O \neq \emptyset \} \in \Sigma_X.
\]
\end{definition}

Notice that weak measurability has nothing to do with weak topologies, and that other (stronger) measurability notions can be found in \citet[Definition 18.1]{aliprantis}.

\begin{definition}[Uniformly Compact-Valued Correspondences]\label{unif-cpct-val-definition}
\label{def:ucv}
Let $(X,\Sigma_X)$ and $(Y,\Sigma_Y)$ be measurable Polish spaces.
A correspondence $\Phi\colon X\rightrightarrows Y$ is called
\emph{uniformly compact-valued} if
\begin{enumerate}
  \item[(i)] (compact fibers) $\Phi(x)\subseteq Y$ is nonempty compact, for every $x\in X$;
  \item[(ii)] (uniform compact bound) there exists a \emph{one and the same}
        compact set $K_\Phi\subseteq Y$ such that $\Phi(x)\subseteq K_\Phi$,
        for all $x\in X$.
\end{enumerate}
\end{definition}

Finally, we also recall the definition of the Vietoris topology.

\begin{definition}[Vietoris Topology]\label{vietoris}
Let \(X\) be a compact Hausdorff space. The Vietoris topology on the hyperspace
\(\mathcal{K}(X)\) of nonempty compact subsets of \(X\) is the smallest topology
making, for every open \(U\subseteq X\),
\[
\langle U\rangle \coloneqq \{K\in\mathcal{K}(X): K\cap U\neq\emptyset\}
\quad\text{and}\quad
[U]\coloneqq \{K\in\mathcal{K}(X): K\subseteq U\}
\]
open. Equivalently, it is generated by finite intersections
\([U_0]\cap \langle U_1\rangle \cap \cdots \cap \langle U_m\rangle\)
and, in particular, by the basic “hit-and-cover” sets
\[
\bigl\langle U_1,\dots,U_n\bigr\rangle
\coloneqq \{K\in\mathcal{K}(X): K\subseteq \textstyle\bigcup_{i=1}^n U_i
 \text{and} K\cap U_i\neq\emptyset \text{, } \forall i\},
\]
since \(\langle U_1,\dots,U_n\rangle=[U_1\cup\cdots\cup U_n]\cap\bigcap_{i=1}^n\langle U_i\rangle\).
\end{definition}

\section{Background in Category Theory}\label{cat-th-intro}
This section presents background category-theoretic notions that are used throughout the Main Text and the supplementary material.

Recall from Section 2.2 in the Main Text the definitions of a category and a functor between categories.

\begin{definition}[Category]%
\label{def:category}
A \emph{category} $\mathbf C$ consists of
\begin{itemize}
  \item a collection $\mathbf C_0$ of \emph{objects} (written $X,Y,Z,\dots$);
  \item a collection $\mathbf C_1$ of \emph{morphisms} (written $f,g,h,\dots$);
\end{itemize}
together with
\begin{enumerate}
  \item for every morphism $f$ there are two distinguished objects called its \emph{source} and \emph{target}, written $s(f)$ and $t(f)$ (we abbreviate $f\colon X\rightarrow Y$ when $s(f)=X$, $t(f)=Y$);
  \item for every object $X$ an \emph{identity} morphism $\mathrm{id}_X\colon X\rightarrow X$;
  \item for every composable pair $X \xrightarrow{f} Y \xrightarrow{g} Z$ (that is, for which $t(f)=s(g)$) a \emph{composite} morphism $g\circ f\colon X\rightarrow Z$;
\end{enumerate}
satisfying the \emph{unitality} laws $f\circ\mathrm{id}_X = f = \mathrm{id}_Y\circ f$ and the \emph{associativity} law
$(h\circ g)\circ f = h\circ(g\circ f)$, whenever the expressions are well defined.
\end{definition}


\begin{definition}[Functor]%
\label{def:functor}
Let $\mathbf C,\mathbf D$ be categories.  
A \emph{functor} $F\colon\mathbf C\rightarrow\mathbf D$ assigns
\begin{itemize}
  \item to every object $X$ of $\mathbf C$ an object $FX$ of $\mathbf D$;
  \item to every morphism $f\colon X\rightarrow Y$ in $\mathbf C$ a morphism $Ff\colon FX\rightarrow FY$ in $\mathbf D$,
\end{itemize}
such that
\[
F(\mathrm{id}_X)=\mathrm{id}_{FX}, \qquad 
F(g\circ f)=Fg\circ Ff
\]
for all composable $f,g$ in $\mathbf C$. 

A functor $F\colon\mathbf C\rightarrow\mathbf D$ is called {\em faithful} if, for every objects $X,X^\prime$ of $\mathbf C$ and every morphism $f,f^\prime: X \rightarrow X^\prime$ of $\mathbf C$, we have that $Ff=Ff^\prime$ in $\mathbf{D}$ implies $f=f^\prime$ in $\mathbf C$.
\end{definition}

A special type of functor, called {\em faithful functor}, is presented in \citet[Section 1.5.3]{perrone2024starting}.
Following up on the question that we asked ourselves before, we can query whether there is a convenient choice of relating two functors.

\begin{definition}[Natural transformation]%
\label{def:nattrans}
Let $\mathbf C,\mathbf D$ be categories and let
\[F,G : \mathbf C \rightarrow \mathbf D\]
be functors.  
A {\em natural transformation} $\alpha : F \Rightarrow G$ assigns to every object
$X$ of $\mathbf C$ a morphism
\[
  \alpha_X : FX \rightarrow GX
\]
such that for every morphism $f : X \rightarrow Y$ in $\mathbf C$
\[
  Gf \circ \alpha_X  =  \alpha_Y \circ Ff .
\]

Natural transformation $\alpha$ is a {\em natural isomorphism} if for each object $X$ of $\mathbb{C}$, the component $\alpha_X:FX \rightarrow GX$ is an isomorphism. In that case, functors $F$ and $G$ are said to be {\em (naturally) isomorphic}.
\end{definition}

Then, we turn our attention to a special structure on a category, called a monad, that can be thought of as a consistent way of extending spaces to include generalized elements and generalized functions of a specific type. Monads are of special interest in Computer Science, where they provide a modular language for composing effectful computations, and they are relevant here because the Vietoris monad formalizes the passage from points to compact sets.

\begin{definition}[Monad on a category]%
\label{def:monad}
For a category $\mathbf C$, a {\em monad} is a triple
\[
  (T,\eta,\mu)
\]
consisting of
\begin{itemize}
  \item a functor $T : \mathbf C \rightarrow \mathbf C$;
  \item a natural transformation (the \emph{unit})
        \[
           \eta : \operatorname{id}_{\mathbf C}  \Rightarrow  T;
        \]
  \item a natural transformation (the \emph{multiplication})
        \[
           \mu : T  \circ T  \Rightarrow  T;
        \]
\end{itemize}
subject to the {\em associativity} and {\em unit} laws
\[
\mu \circ T\mu  =  \mu \circ \mu T,
\qquad
\mu \circ T\eta  =  \mu \circ \eta T  =  \operatorname{id}_T.
\]
\end{definition}

Finally, we are interested in finding a category whose objects and morphisms can be multiplied together in a way that is associative and unital, similarly to what we have for monoids (but only up to isomorphisms). Such categories, called monoidal categories, have wide applications, ranging from information theory to probability.

\begin{definition}[Monoidal category]%
\label{def:monoidal}
A {\em monoidal category} is a tuple
\[
  (\mathbf C,\otimes,I,\alpha,\lambda,\rho)
\]
where
\begin{itemize}
  \item $\mathbf C$ is a category;
  \item $\otimes : \mathbf C \times \mathbf C \rightarrow \mathbf C$ is a product, the \emph{tensor product} \citep[Definition 1.3.49]{perrone2024starting};\footnote{We do not introduce the tensor product formally in this background section because its definition follows the natural intuition of how a (cartesian) product behaves.}
  \item $I$ is an object of $\mathbf C$ (the \emph{unit});
  \item $\alpha_{A,B,C} : A\otimes(B\otimes C) \rightarrow (A\otimes B)\otimes C$
        (\emph{associator}),
        $\lambda_A :I\otimes A \rightarrow A$,
        $\rho_A :A\otimes I \rightarrow A$
        are natural isomorphisms,
\end{itemize}
satisfying Mac Lane’s {\em pentagon} and {\em triangle} equalities, for all objects $A,B,C,D$ of $\mathbf C$
\[
\alpha_{A,B,C\otimes D} \circ \alpha_{A\otimes B,C,D}
   = 
(\alpha_{A,B,C}\otimes\operatorname{id}_D) \circ 
\alpha_{A,B\otimes C,D} \circ 
(\operatorname{id}_A\otimes\alpha_{B,C,D}),
\]
\[
(\operatorname{id}_A\otimes\lambda_B) \circ \alpha_{A,I,B}
  = 
\rho_A\otimes\operatorname{id}_B .
\]
\end{definition}

\end{document}